\documentclass[moor,nonblindrev]{informs3opt}
\usepackage{color,soul}
\usepackage{palatino,url}%
\usepackage{verbatim, amssymb}
\usepackage{amsmath,mathrsfs,thmtools}
\usepackage{graphics,amsopn,amstext,amsfonts,color}
\usepackage{bm}
\usepackage{setspace}
\usepackage{subfigure}
\usepackage{graphicx}
\usepackage{epstopdf}
\usepackage{natbib}

\usepackage{multirow}
\usepackage{booktabs}
\usepackage{paralist}
\usepackage{colortbl}
\usepackage{titlesec}
\usepackage[titletoc,toc,title]{appendix}

\usepackage{xcolor}
\usepackage{cases}
\usepackage{empheq}
\usepackage[hidelinks,bookmarksnumbered=true]{hyperref}
\usepackage{graphics,latexsym,amsfonts,verbatim,amsmath}

\usepackage{algorithm}
\usepackage{algpseudocode}
\usepackage{todonotes}

\def\E{{\mathbb E}}

\def\Pr{{\mathbb{P}}}

\def\Re{\mathbb{R}}
\def\Qe{\mathbb{Q}}
\def\I{\mathbb{I}}

\def\hat{\widehat}

\def \A{\mathcal{A}}
\def \R{\mathcal{R}}

\def \Ze{{\mathbb{Z}}}

\def\N{{\mathcal N}}

\def\Re{{\mathbb R}}

\newcommand{\exclude}[1]{}

\newcommand{\e}{\mathbf{e}}

\renewcommand{\S}{\mathcal{\tilde S}}

\DeclareMathOperator{\Supp}{\mathrm{Supp}}

\DeclareMathOperator{\diag}{diag}

\usepackage{authblk}
\algdef{SE}[DOWHILE]{Do}{doWhile}{\algorithmicdo}[1]{\algorithmicwhile\ #1}%
\algnewcommand{\Or}{\textbf{or}}
\algnewcommand{\And}{\textbf{and}}
\bibliographystyle{plain}

\usepackage{thmtools}
\usepackage{thm-restate}

\usepackage{cleveref}

\declaretheorem[name=Theorem]{theorem}
\declaretheorem[name=Proposition]{proposition}
\declaretheorem[name=Lemma]{lemma}
\declaretheorem[name=Claim]{claim}
\declaretheorem[name=Corollary]{corollary}

\declaretheorem[name=Observation]{observation}

\declaretheorem[name=Definition]{definition}


\newcommand*{\QEDA}{\hfill\ensuremath{\square}}
\newcommand*{\QEDB}{\hfill\ensuremath{\diamond}}

\usepackage{natbib}
\NatBibNumeric
\bibpunct[, ]{[}{]}{,}{n}{}{,}%

\allowdisplaybreaks
\begin{document}

\RUNAUTHOR{Mohit Singh, Weijun Xie}
\RUNTITLE{Approximation Algorithms for $D$-optimal Design
}

\TITLE{Approximation Algorithms for $D$-optimal Design}

\ARTICLEAUTHORS{%
	
	\AUTHOR{Mohit Singh}
	\AFF{School of Industrial and Systems Engineering, Georgia Institute of Technology, Atlanta, GA 30332, \EMAIL{mohitsinghr@gmail.com.}}
	
		\AUTHOR{Weijun Xie}
	\AFF{Department of Industrial and Systems Engineering, Virginia Tech, Blacksburg, VA 24061, \EMAIL{wxie@vt.edu.}}
}

\ABSTRACT{
Experimental design is a classical statistics problem and its aim is to estimate an unknown $m$-dimensional vector $\bm \beta$ from linear measurements where a Gaussian noise is introduced in each measurement. For the combinatorial experimental design problem, the goal is to pick $k$ out of the given $n$ experiments so as to make the most accurate estimate of the unknown parameters, denoted as $\hat{\bm \beta}$. 
In this paper, we will study one of the most robust measures of error estimation - $D$-optimality criterion, which corresponds to minimizing the volume of the confidence ellipsoid for the estimation error $\bm \beta-\hat{\bm \beta}$. The problem gives rise to two natural variants depending on whether repetitions of experiments are allowed or not.  We first propose an approximation algorithm with a $\frac1e$-approximation for the $D$-optimal design problem with and without repetitions, giving the first constant factor approximation for the problem. We then analyze another sampling approximation algorithm and prove that it is $(1-\epsilon)$-approximation if $k\geq \frac{4m}{\epsilon}+\frac{12}{\epsilon^2}\log(\frac{1}{\epsilon})$ for any $\epsilon \in (0,1)$. Finally, for $D$-optimal design with repetitions, we study a different algorithm proposed by literature and show that it can improve this asymptotic approximation ratio.
}

\KEYWORDS{$D$-optimal Design; approximation algorithm; determinant; derandomization.} \HISTORY{}

\maketitle

\section{Introduction}
%
%
%

Experimental design is a classical problem in statistics~\cite{atkinson2007optimum,federer1955experimental,fedorov1972theory,kirk1982experimental,pukelsheim2006optimal} and recently has also been applied to machine learning~\cite{Allen-Zhu17nearoptimal,wang2016computationally}. In an experimental design problem, its aim is to estimate an $m$-dimensional vector $\bm{\beta}\in \Re^m$ from $n$ linear measurements of the form $y_i=\bm{a}_i^T \bm{\beta} +\tilde{\epsilon}_i$ for each $i \in [n]:=\{1,2,\ldots,n\}$, where vector $\bm{a}_i\in \Re^m$ characterizes the $i^{th}$ experiment and $\{\tilde{\epsilon}_i\}_{i\in [n]}$ are i.i.d. independent Gaussian random variables with zero mean and variance $\sigma^2$ (i.e., $\tilde{\epsilon}_i\sim\N(0,\sigma^2)$ for all $i\in [n]$). Due to limited resources, it might be quite expensive to conduct all the $n$ experiments. Therefore, as a compromise, in the combinatorial experimental design problem, we are given an integer $k \in [m,n]$, and our goal is to pick $k$ out of the $n$ experiments so as to make the \emph{most accurate estimate} of the parameters, denoted as $\hat{\bm \beta}$. Suppose that a size $k$ subset of experiments $S\subseteq [n]$ is chosen, then the most likelihood estimation of $\bm{\beta}$ (cf., \cite{joshi2009sensor}) is given by $$\hat{\bm \beta}=\left(\sum_{i\in S} \bm{a}_i\bm{a}_i^{\top}\right)^{-1} \sum_{i\in S} y_i \bm{a}_i.$$
There are many criteria on how to choose the best estimation among all the possible size $k$ subsets of $n$ experiments (see \cite{atkinson2007optimum} for a review).
One of the most robust measures of error estimation is known as $D$-optimality criterion, where the goal is to choose the best size $k$ subset $S$ to maximize $\left[\det\left(\sum_{i\in S} \bm{a}_i\bm{a}_i^{\top}\right)\right]^{\frac{1}{m}}$, i.e., the following combinatorial optimization problem:
\begin{align}
\max_{S}\left\{f(S):=\left[\det\left(\sum_{i\in S} \bm{a}_i\bm{a}_i^{\top}\right)\right]^{\frac{1}{m}}:\Supp(S)\subseteq [n],|S|=k \right\}, \label{eq_d_opt_combinatorial}
\end{align}
where $\Supp(S)$ denotes the support of set $S$ and $|S|$ denotes the cardinality of set $S$. Note that optimization model \eqref{eq_d_opt_combinatorial} corresponds to minimizing the volume of the confidence ellipsoid for the estimation error $\bm \beta-\hat{\bm \beta}$. Equivalently, the objective function of \eqref{eq_d_opt_combinatorial} can be chosen as determinant itself (i.e., $\det\left(\sum_{i\in S} \bm{a}_i\bm{a}_i^{\top}\right)$), or log-determinant $\log\det\left(\sum_{i\in S} \bm{a}_i\bm{a}_i^{\top}\right))$ \cite{joshi2009sensor}. However, both objective functions are problematic with the following reasons: (1) the determinant function is nonconvex and has numerical issue especially when $k,m,n$ are large; and (2) albeit log-determinant function is concave, it can also be numerically unstable, in particular when the determinant is close to zero. Therefore, in this paper, we follow the work in \cite{sagnol2013computing} and consider $m$th root of determinant function, which is concave and numerically stable.

In the problem description, the same experiment may or may not be allowed to choose multiple times. We refer the problem as \emph{$D$-optimal design with repetitions} if we are allowed to pick an experiment more than once and \emph{$D$-optimal design without repetitions} otherwise. Correspondingly, for $D$-optimal design with repetitions and without repetitions, in \eqref{eq_d_opt_combinatorial}, the subset $S$ denotes a \emph{multi-set}, where elements of $S$ can be duplicated, and a \emph{conventional} set, where elements of $S$ must be different from each other, respectively.
The former problem has been studied very extensively in statistics \cite{kirk1982experimental,pukelsheim2006optimal,sagnol2013computing}. The latter problem has also been studied as the sensor selection problem~\cite{joshi2009sensor}, where the goal is to find the best subset of sensor locations to obtain the most accurate estimate of unknown parameters. It is easy to see that $D$-optimal design with repetitions is a special case of the $D$-optimal design problem without repetitions. To do so,  for the $D$-optimal design with repetitions, we can create $k$ copies of each vector, which reduces to the $D$-optimal design without repetitions with $nk$ vectors.

The remainder of the paper is organized as follows. Section~\ref{sec_setup} details the problem setting, reviews related literature, and summarizes our contributions. Section~\ref{sec:1/e} develops and analyzes a randomized algorithm, its approximation results and deterministic implementation. Section~\ref{proof-theorem:main2} proposes another randomized algorithm and its deterministic implementation as well as asymptotic behavior. Section~\ref{proof-theorem:main3} analyzes a randomized algorithm for $D$-optimal design with repetitions proposed by literature, and investigates its deterministic implementation as well as approximation ratios. Finally, Section~\ref{sec_con} concludes the paper.\\

\noindent\textit{Notations:} The following notation is used throughout the paper. We use bold-letters (e.g., $\bm{x},\bm{A}$) to denote vectors or matrices, and use corresponding non-bold letters to denote their components. We let $\e$ be the all-ones vector. 
We let $\Re_+,\Qe_+,\Ze_+$ denote the sets of positive real numbers, rational numbers and integers, respectively. Given a positive integer $N$ and a set $Q$, we let $[N]:=\{1,\ldots,N\}$, $N!=\prod_{i\in [N]}i$, $|Q|$ denote its cardinality, $\Supp(Q)$ denotes the support of $Q$ and ${Q\choose N}$ denote all the possible subsets of $Q$ whose cardinality equals to $N$. Given a multi-set $Q$, for each $i\in \Supp(Q)\subseteq [n]$, we let function $M_{Q}(i)$ denote the number of occurrences of $i$ in $Q$, and for any $i\in [n]$, $Q(i)$ denotes its $i$th component. Given a matrix $\bm A$ and two sets $R,T$, we let $\det(\bm A)$ denote its determinant if $\bm A$ is a square matrix, let $\bm A_{R,T}$ denote a submatrix of $\bm A$ with rows and columns from sets $R,T$, let $\bm A_i$ denote $i$th column of matrix $\bm A$ and let $\bm A_R$ denote submatrix of $\bm A$ with columns from set $R$. For any positive integer $r$, we let $\bm{I}_r$ denote $r\times r$ identity matrix. We use $\S$ to denote a random set. For notational convenience, give a set $S\in [n]$ and vector $\bm{x}\in \Re_+^n$, we define $f(S)=\left[\det\left(\sum_{i\in S} \bm{a}_i\bm{a}_i^{\top}\right)\right]^{\frac{1}{m}}$ and $f(x)=\left[\det\left(\sum_{i\in [n]} x_i\bm{a}_i\bm{a}_i^{\top}\right)\right]^{\frac{1}{m}}$.
Additional notation will be introduced as needed.

\section{Model Formulation, Related Literature and Contributions}\label{sec_setup}

\subsection{Model Formulation}
To formulate $D$-optimal design problem \eqref{eq_d_opt_combinatorial} as an equivalent mathematical program, we first let set $B$ denote the set of all nonnegative numbers (i.e., $B=\Re_+$) if repetitions are allowed, otherwise, $B=[0,1]$. Next, we introduce integer decision variable $x_i\in B\cap \Ze_+$ to represent how many times $i$th experiment will be chosen for each $i\in [n]$. With the notation introduced above, the $D$-optimal design problem can be formulated as a mixed integer convex program below:
\begin{align}\label{opt_sensor}
w^*:=\max_{\bm x,w}\left\{w:w\leq f(x)=\left[\det\left(\sum_{i\in [n]}x_i\bm{a}_i\bm{a}_i^\top\right)\right]^{\frac{1}{m}},\sum_{i\in [n]}x_i=k,\bm{x}\in B^n\cap \Ze_+^n\right\},
\end{align}
where for notational convenience, we let $f(\bm{x})$ to denote the objective function. Note that if $B=\Re_+$, \eqref{opt_sensor} is equivalent to $D$-optimal design with repetitions and if $B=[0,1]$, then \eqref{opt_sensor} corresponds to $D$-optimal design without repetitions.

It can be easily shown that $f(\bm{x})=\left[\det\left(\sum_{i\in [n]}x_i\bm{a}_i\bm{a}_i^\top\right)\right]^{\frac{1}{m}}$ is concave in $\bm{x}$ (cf., \cite{ben2001lectures}). Therefore, a straightforward convex relaxation of \eqref{opt_sensor} is to relax the binary vector $\bm x$ to continuous, i.e., $\bm x\in [0,1]^n$, which is formulated as below:
\begin{align}\label{opt_sensor_convex}
\hat{w}:=\max_{\bm x,w}\left\{w:w\leq \left[\det\left(\sum_{i\in [n]}x_i\bm{a}_i\bm{a}_i^\top\right)\right]^{\frac{1}{m}},\sum_{i\in [n]}x_i=k,\bm{x}\in B^n\right\}.
\end{align}
Note that \eqref{opt_sensor_convex} is a tractable convex program (cf., \cite{joshi2009sensor}), thus is efficiently solvable. Recently, in \cite{sagnol2013computing}, the authors proposed an alternative second order conic program (SOCP) formulation for \eqref{opt_sensor_convex}, which can be solved by a more effective interior point method or even off-the-shelf solvers (e.g., CPLEX, Gurobi, MOSEK). {We remark that according to \cite{ben2001lectures}, the time complexity of solving \eqref{opt_sensor_convex} is $O(n^5)$.}


\subsection{Related Literature}

As remarked earlier, experimental design is a classical area in Statistics. We refer the reader to Pukelsheim \cite{pukelsheim2006optimal}, Chapter 9 on details about $D$-optimality criterion as well as other related $(A,E,T)$-criteria. The combinatorial version, where the number of each experiment needs to be chosen is an integer as in \eqref{opt_sensor}, is also called exact experimental design. It turns out that the $D$-optimality criterion is proven to be NP-hard~\cite{welch1982algorithmic}. 
Thus, there has been plenty of works on heuristic methods, including local search and its variants, to obtain good solutions~\cite{fedorov1972theory,joshi2009sensor}.

From an approximation algorithm viewpoint, $D$-optimal design has received a lot of attention recently. For example, Bouhou et al.~\cite{bouhtou2010submodularity} gave a $\left(\frac{n}{k}\right)^{\frac{1}{m}}$-approximation algorithm, and Wang et al.~\cite{wang2016computationally} building on \cite{avron2013efficient} gave a $(1+\epsilon)$-approximation if $k\geq \frac{m^2}{\epsilon}$. Recently, Allen-Zhu et al~\cite{Allen-Zhu17nearoptimal} realized the connection between this problem and matrix sparsification \cite{batson2012twice,spielman2011graph} and used regret minimization methods~\cite{allen2015spectral} to obtain $O(1)$-approximation algorithm if $k\geq 2m$ and $(1+\epsilon)$-approximation when $k\geq O\left(\frac{m}{\epsilon^2}\right)$. We also remark that their results are general and also are applicable to other optimality criteria.

Another closely related problem is the largest $j$-simplex problem, whose problem description is as follows:
\begin{quote}
(largest $j$-simplex problem) Given a set of $n$ vectors $\bm{a}_1,\ldots, \bm{a}_n \in \Re^m$ and integer $k\leq m$, pick a set of $S$ of $k$ vectors to maximize the $k^{th}$-root of the pseudo-determinant of $\bm{X}=\sum_{i\in S} \bm{a}_i\bm{a}_i^{\top}$, i.e., the geometric mean of the non-zero eigenvalues of $X$.
\end{quote} The problem has also received much attention recently~\cite{khachiyan1996rounding,nikolov2015randomized,summa2015largest} and Nikolov\cite{nikolov2015randomized} gave a $\frac{1}{e}$-approximation algorithm. Observe that the special case of $k=m$ of this problem coincides with the special case of $k=m$ for the $D$-optimal design problem. Indeed, Nikolov's algorithm, while applicable, results in a $e^{-\frac{k}{m}}$-approximation for the $D$-optimal design problem. Recently, matroid constrained versions of the largest $j$-simplex problem have also been studied~\cite{anari2017generalization,nikolov2016maximizing,straszak2016real}.

The $D$-optimality criterion is also closely related to constrained submodular maximization~\cite{nemhauser1978analysis}, a classical combinatorial problem, for which there has been much progress recently \cite{calinescu2011maximizing,krause2014submodular}. Indeed, the set function $m\log f(S):=\log \det\left(\sum_{i\in S} \bm{a}_i\bm{a}_i^{\top}\right)$ is known to be submodular \cite{shamaiah2010greedy}. Unfortunately, this submodular function is not necessarily non-negative, a prerequisite for all the results on constrained submodular maximization and thus these results are not directly applicable. We also remark that for a multiplicative guarantee for the $\det$ objective, we would aim for an additive guarantee for $\log \det$ objective.

\subsection{Contributions}
In this paper, we make several contributions to the approximations of $D$-optimal design problems both with and without repetitions. Our approximation algorithms are randomized, where we sample $k$ experiments out of $n$ with given marginals. This type of sampling procedure has been studied intensively in the approximation algorithms and many different schemes have been proposed~\cite{brewer2013sampling}, where most of them exhibit a negative correlation of various degrees~\cite{branden2012negative}. In this work, we will study sampling schemes which exhibit approximate positive correlation and prove the approximation ratios of these algorithms. 

All the proposed approximations come with approximation guarantees. Given an {\em approximation ratio} $\gamma\in(0,1]$, a $\gamma$-approximation algorithm for $D$-optimal design returns a solution $\bar{\bm{x}} \in B^n\cap \Ze_+^n$, such that $ \sum_{i\in [n]}\bar{x}_i=k$ and $f(\bar{\bm{x}})\geq \gamma w^*$,
i.e., the solution is feasible and has an objective value at least $\gamma$ times the optimal value. The approximation ratios of our randomized algorithms only hold in the sense of expectation. To improve them, we further propose polynomial-time deterministic implementations for all the randomized algorithms with iterated conditional expectation method, which also achieve the same approximation guarantees.
Following is a summary of our contributions.
\begin{enumerate}
	\item We develop a $\frac{1}{e}$-approximation algorithm and its polynomial-time deterministic implementation, giving the first constant factor approximation for $D$-optimal design problem for both with and without repetitions. Previously, constant factor approximations were known only for a restricted range of parameters~\cite{Allen-Zhu17nearoptimal,avron2013efficient,wang2016computationally} (see related work for details).
	\item We study a different sampling algorithm and its polynomial-time deterministic implementation, showing that its solution is $(1-\epsilon)$ optimal if $k\geq \frac{4m}{\epsilon}+\frac{12}{\epsilon^2}\log(\frac{1}{\epsilon})$ for any given $\epsilon \in (0,1)$. This results substantially improve the previous work \cite{Allen-Zhu17nearoptimal,wang2016computationally}.
	\item For $D$-optimal design with repetitions, we investigate a simple randomized algorithm similar to that in Nikolov~\cite{nikolov2015randomized}, study its polynomial-time deterministic implementation and provide a significant different analysis of the approximation guarantee. We show that the proposed algorithm yields $(1-\epsilon)$-approximation for the $D$-optimal design problem with repetitions when $k\geq \frac{m-1}{\epsilon}$.
\end{enumerate}
Note that the preliminary version of the paper has appeared in ACM-SIAM Symposium on Discrete Algorithms (SODA18) \cite{singh2018approximate}. Compared to \cite{singh2018approximate}, the current paper has the following major improvement: (1) for the constant-factor approximation algorithm presented in Section~\ref{sec:1/e}, we simplify its polynomial-time deterministic implementation and improve its analysis, (2) for the asymptotically optimal algorithm in \Cref{proof-theorem:main2}, we simplify its sampling procedure and derive its polynomial-time deterministic implementation, and (3) for the approximation algorithm of $D$-optimal design with repetitions, we improve its approximation ratio analysis and propose its polynomial-time  deterministic implementation.

\section{Approximation Algorithm for $D$-optimal Design Problem}\label{sec:1/e}
In this section, we will propose a sampling procedure and prove its approximation ratio. We also develop an efficient way to implement this algorithm and finally we will show its polynomial-time deterministic implementation with the same performance guarantees.

First of all, we note that for $D$-optimal design problem with repetitions, i.e., $B=\Re_+$ in \eqref{opt_sensor}, it can be equivalently reformulated as $D$-optimal design problem without repetitions by creating $k$ copies of vectors $\{\bm a_i\}_{i\in [n]}$. Therefore, the approximation for $D$-optimal design problem without repetitions directly apply to that with repetitions. Hence, in this and next sections, we will only focus on $D$-optimal design problem without repetitions, i.e., in \eqref{opt_sensor} and \eqref{opt_sensor_convex}, we only consider $B=[0,1]$.


\subsection{Sampling Algorithm and Its Efficient Implementation}\label{sec_efficien_Implement}
In this subsection, we will introduce a sampling procedure and explain its efficient implementation.

In this sampling algorithm, we first suppose $(\hat{\bm x},\hat w)$ to be an optimal solution to the convex relaxation \eqref{opt_sensor_convex}, where $\hat{\bm x} \in [0,1]^n$ with $\sum_{i\in [n]}\hat{x} _i=k$ and $\hat{w}=f(\hat{\bm x})$. Then we randomly choose a size-$k$ subset $\S$ according to the following probability:
\begin{align}
\Pr[\S=S]=\frac{\prod_{j\in S}\hat{x}_j}{
	\sum_{\bar{S}\in {[n]\choose k}}\prod_{i\in \bar{S}}\hat{x}_i}\label{eq:sampling1}
\end{align}
for every $S\in {[n]\choose k}$, { where $[n]\choose k$ denotes all the possible size-$k$ subsets of $[n]$.}

Note that the sampling procedure in \eqref{eq:sampling1} is not an efficient implementable description since there are $n\choose k$ candidate subsets to be sampled from. Therefore, we propose an efficient way (i.e., Algorithm~\ref{alg_rand_round}) to obtain a size-$k$ random subset $\S$ with probability distribution in \eqref{eq:sampling1}. We first observe a useful way to compute the probabilities in Algorithm~\ref{alg_rand_round} efficiently.
\begin{observation}\label{lem_key_lem3}
	Suppose $\bm x\in \Re^t$ and integer $0\leq r\leq t$, then $
	\sum_{S\in {[t]\choose r}}\prod_{i\in S} x_i$
	is the coefficient of $y^r$ of the polynomial $\prod_{i\in [t]}(1+x_iy)$.
\end{observation}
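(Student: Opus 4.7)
The plan is to give a direct combinatorial proof by expanding the product $\prod_{i\in [t]}(1+x_i y)$ as a sum over subsets. Each of the $t$ factors contributes either the constant $1$ or the monomial $x_i y$; by the distributive law, expanding the product yields a sum indexed by the choice of which factor contributes the non-constant term, i.e., indexed by subsets $S \subseteq [t]$. The term associated with $S$ is $\prod_{i \in S}(x_i y) = \left(\prod_{i \in S} x_i\right) y^{|S|}$. Collecting terms with $|S| = r$ then shows that the coefficient of $y^r$ equals $\sum_{S \in {[t] \choose r}} \prod_{i \in S} x_i$, which is exactly the claim.

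If a more formal argument is preferred over the verbal "distributive law" step, I would instead run an induction on $t$. The base case $t = 0$ is trivial: the empty product equals $1$, the only subset of $[0]$ is $\emptyset$ with empty product $1$ (hence $r = 0$ only), and coefficients match. For the inductive step, write
\[
\prod_{i \in [t]}(1 + x_i y) = (1 + x_t y)\prod_{i \in [t-1]}(1 + x_i y),
\]
so the coefficient of $y^r$ on the left equals the coefficient of $y^r$ in $\prod_{i \in [t-1]}(1+x_i y)$ plus $x_t$ times the coefficient of $y^{r-1}$ in the same polynomial. By the inductive hypothesis these two quantities are $\sum_{S \in {[t-1] \choose r}} \prod_{i \in S} x_i$ and $\sum_{S' \in {[t-1] \choose r-1}} \prod_{i \in S'} x_i$ respectively. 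Partitioning ${[t] \choose r}$ by whether $t \in S$ (contribution from the second sum, after appending $t$) or $t \notin S$ (contribution from the first sum) shows their sum equals $\sum_{S \in {[t] \choose r}} \prod_{i \in S} x_i$.

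There is no real obstacle here — this is essentially the generating-function definition of the elementary symmetric polynomial $e_r(x_1,\ldots,x_t)$. The only minor thing to be careful about is boundary cases ($r = 0$ and $r = t$), which the induction handles automatically since ${[t-1] \choose -1}$ and ${[t-1] \choose t}$ are empty and contribute zero.
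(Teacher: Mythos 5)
Your proof is correct. The paper offers no proof of this Observation at all --- it is stated as an immediate fact (it is just the generating-function identity for the elementary symmetric polynomials), so your distributive-law expansion, with the induction as an optional formalization, fully and correctly supplies what the paper takes for granted.
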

{In fact, it has been shown that the product of two polynomials with degree at most $t$ can be done in $O(t\log t)$ amount of time by the Fast Fourier Transform (FFT) \cite{cooley1965algorithm}. Thus, by divide-and-conquer approach, it takes $O(t\log^2t)$ time to expand the polynomial $\prod_{i\in [t]}(1+x_iy)$, i.e., it takes $O(t\log^2t)$ time to compute the coefficient of $y^r$ of the polynomial $\prod_{i\in [t]}(1+x_iy)$.
}

Note that we need to compute the probability $\Pr[\S=S]$ in \eqref{eq:sampling1} efficiently. The main idea of the efficient implementation is to sample elements one by one based on conditioning on the chosen elements and unchosen elements and update the conditional probability distribution sequentially. Indeed, suppose that we are given a subset of chosen elements $S$ with $|S|<k$ and a subset of unchosen elements $T$ with $|T|<n-k$ which can be empty, then probability that the $j$th experiment with $j\in [n]\setminus (S\cup T)$ will be chosen is equal to
\[\Pr[\text{$j$ will be chosen}|S,T]=\frac{\hat{x}_j\left(\sum_{\bar{S}\in {[n]\setminus (S\cup T)\choose k-1-|S|}}\prod_{\tau\in \bar{S}}\hat{x}_{\tau}\right)}{
	\left(\sum_{\bar{S}\in {[n]\setminus (S\cup T)\choose k-|S|}}\prod_{\tau\in \bar{S}}\hat{x}_{\tau}\right)}.\]
In the above formula, the denominator and numerator can be computed efficiently based on Observation~\ref{lem_key_lem3}. Thus, we flip a coin with success rate equal to the above probability, which clearly has the following two outcomes: if $j$ is chosen, then update $S:=S\cup\{j\}$; otherwise, update $T:=T\cup\{j\}$. Then, go to next iteration and repeat this procedure until $|S|=k$. By applying iterated conditional probability, the probability that $S$ is chosen equal to $\frac{\prod_{j\in S}\hat{x}_j}{\sum_{\bar{S}\in {[n]\choose k}}\prod_{j\in \bar{S}}\hat{x}_{j}}$, i.e., \eqref{eq:sampling1} holds.
The detailed implementation is shown in Algorithm~\ref{alg_rand_round}. {Note that the time complexity of Algorithm~\ref{alg_rand_round} is $O(n^2)$.}

\begin{algorithm}[ht]
	\caption{{Efficient Implementation of Sampling Procedure \eqref{eq:sampling1} with Constant Factor Approximation}}
	\label{alg_rand_round}
	\begin{algorithmic}[1]
		\State Suppose $(\hat{\bm x},\hat w)$ is an optimal solution to the convex relaxation \eqref{opt_sensor_convex} with $B=[0,1]$, where $\hat{\bm x} \in [0,1]^n$ with $\sum_{i\in [n]}\hat{x} _i=k$ and $\hat{w}=f(\hat{\bm x})$
		\State Initialize chosen set $\S=\emptyset$ and unchosen set $T= \emptyset$
		\State Two factors: $A_1=\sum_{\bar{S}\in {[n]\choose k}}\prod_{i\in \bar{S}}\hat{x}_i,A_2=0$
		\For{$j=1,\ldots,n$}
		\If{$|\S|==k$}
		\State break
		\ElsIf{$|T|=n-k$}
		\State $\S=[n]\setminus T$
		\State break
		\EndIf
		\State Let $A_2=\left(\sum_{\bar{S}\in {[n]\setminus (\S\cup T)\choose k-1-|\S|}}\prod_{\tau\in \bar{S}}\hat{x}_{\tau}\right)$
		\State Sample a $(0,1)$ uniform random variable $U$
		\If{$\hat{x}_jA_2/A_1 \geq U$}
		\State Add $j$ to set $\S$
		\State $A_1=A_2$
		\Else
		\State Add $j$ to set $T$
		\State $A_1=A_1-\hat{x}_jA_2$
		\EndIf
		\EndFor
		\State Output $\S$
	\end{algorithmic}
\end{algorithm}

\subsection{$m$-wise $\alpha$-positively Correlated Probability Distributions}

In this subsection, we will introduce the main proof idea of the approximation guarantees, which is to analyze the probability distribution \eqref{eq:sampling1} and show that it is approximately positively correlated (see, e.g., \cite{byrka2014improved}). The formal derivation will be in the next subsection.

{Recall that a set of random variables $X_1,\ldots, X_n$ are pairwise positively correlated if for each $i,j\in [n]$, we have $\E[X_i X_j]\geq \E[X_i]\cdot \E[X_j]$, which for $\{0,1\}$-valued random variables translates to $\Pr[X_i=1, X_j=1]\geq \Pr[X_i=1]\cdot \Pr[X_j=1]$. This definition aims to capture settings where random variables are more likely to take similar values than independent random variables with the same marginals. More generally, given an integer $m$, $\{0,1\}$-valued random variables $X_1,\ldots, X_n$ are called $m$-wise positively correlated random variables if $\Pr[X_i=1\;\forall i\in T]\geq \prod_{i\in T} Pr[X_i=1]$ for all $T\subseteq [n]$ where $|T|=m$. Below we provide an generalized definition of positive correlation that is crucial to our analysis.}

\begin{definition}\label{def:m-alpha} Given $\bm x\in [0,1]^n$ such that $\sum_{i\in [n]}x_i=k$ for integer $k\geq 1$, let $\mu$ be a probability distribution on subsets of $[n]$ of size $k$. Let $X_1,\ldots, X_n$ denote the indicator random variables, 
	thus $X_i=1$ if $i\in \S$ and $0$ otherwise for each $i\in [n]$ where random set $\S$ of size-$k$ is sampled from $\mu$. Then $X_1,\ldots, X_n$ are \emph{$m$-wise $\alpha$-positively correlated} for some $\alpha\in [0,1]$ if
	for each $T\subseteq [n]$ such that $|T|=m$, we have
	$$\Pr\left[X_i=1\;\forall i\in T\right]=\Pr\left[T\subseteq \S\right]\geq \alpha^{m} \prod_{i\in T} x_i.$$
\end{definition}

With a slight abuse of notation, we call the distribution $\mu$ to be \emph{$m$-wise $\alpha$-positively correlated} with respect to $\bm{x}$ if the above condition is satisfied. Observe that if $\alpha=1$, then the above definition implies that the random variables $X_1,\ldots, X_n$ are $m$-wise positively correlated. 

The following lemma shows the crucial role played by $m$-wise approximate positively correlated distributions in the design of algorithms for $D$-optimal design.
\begin{lemma}\label{lemma:reduction}
	Suppose $(\hat{\bm x},\hat w)$ is an an optimal solution to the convex relaxation \eqref{opt_sensor_convex}. Then for any $\alpha \in (0,1]$, if there exists an efficiently computable distribution that is $m$-wise $\alpha$-positively correlated with respect to $\bm{\hat x}$, then the $D$-optimal design problem has a randomized $\alpha$-approximation algorithm, i.e.,
	\[\left\{\E\left[\det\left(\sum_{i\in \S}\bm{a}_i\bm{a}_i^{\top}\right)\right]\right\}^{\frac{1}{m}}\geq \alpha w^*\]
	where random set $\S$ with size $k$ is the output of the approximation algorithm.
\end{lemma}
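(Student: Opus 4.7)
The plan is to apply the Cauchy--Binet formula twice, with the positive correlation hypothesis used as a bridge between the two applications. Let $\bm{A}$ denote the $m\times n$ matrix whose $i$th column is $\bm{a}_i$, so that for any set $S$ we have $\sum_{i\in S}\bm{a}_i\bm{a}_i^{\top}=\bm{A}_S\bm{A}_S^{\top}$. By Cauchy--Binet,
\[
\det\!\left(\sum_{i\in S}\bm{a}_i\bm{a}_i^{\top}\right)=\sum_{T\in\binom{S}{m}}\det(\bm{A}_T)^2,
\]
so in particular $\det\!\left(\sum_{i\in S}\bm{a}_i\bm{a}_i^{\top}\right)=0$ unless $|S|\ge m$, and the expansion picks out all $m$-subsets $T\subseteq S$.

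Next I would take expectations over $\S$ drawn from the given $m$-wise $\alpha$-positively correlated distribution. Rewriting the sum over subsets of $\S$ as a sum over all $T\in\binom{[n]}{m}$ weighted by the indicator $\mathbf{1}[T\subseteq \S]$, linearity of expectation gives
\[
\E\!\left[\det\!\left(\sum_{i\in \S}\bm{a}_i\bm{a}_i^{\top}\right)\right]=\sum_{T\in\binom{[n]}{m}}\Pr[T\subseteq \S]\,\det(\bm{A}_T)^2.
\]
By Definition~\ref{def:m-alpha}, $\Pr[T\subseteq\S]\ge \alpha^{m}\prod_{i\in T}\hat{x}_i$ for every $T$ with $|T|=m$, and since $\det(\bm{A}_T)^2\ge 0$, we can substitute this lower bound termwise.

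Now I apply Cauchy--Binet in reverse. Writing $\bm{D}=\diag(\hat{\bm{x}})$, we have $\sum_{i\in[n]}\hat{x}_i\bm{a}_i\bm{a}_i^{\top}=(\bm{A}\bm{D}^{1/2})(\bm{A}\bm{D}^{1/2})^{\top}$, so Cauchy--Binet yields
\[
\det\!\left(\sum_{i\in[n]}\hat{x}_i\bm{a}_i\bm{a}_i^{\top}\right)=\sum_{T\in\binom{[n]}{m}}\det(\bm{A}_T)^2\prod_{i\in T}\hat{x}_i=f(\hat{\bm{x}})^{m}=\hat{w}^{m}.
\]
Combining the two identities gives $\E[\det(\sum_{i\in\S}\bm{a}_i\bm{a}_i^{\top})]\ge \alpha^{m}\hat{w}^{m}$. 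Finally, since \eqref{opt_sensor_convex} is a relaxation of \eqref{opt_sensor}, we have $\hat{w}\ge w^{*}$, and taking $m$th roots finishes the proof:
\[
\Bigl\{\E\!\left[\det\!\left(\sum_{i\in\S}\bm{a}_i\bm{a}_i^{\top}\right)\right]\Bigr\}^{1/m}\ge \alpha\,\hat{w}\ge \alpha\, w^{*}.
\]

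I do not foresee a real obstacle here: once Cauchy--Binet is used to move between the determinant on a (random) subset and the polynomial $\sum_T \det(\bm{A}_T)^2\prod_{i\in T}x_i$ in the marginals, the positive correlation hypothesis is exactly the pointwise inequality that lets one replace $\Pr[T\subseteq\S]$ by $\alpha^m\prod_{i\in T}\hat{x}_i$ under the sum. The only subtlety worth double-checking is nonnegativity of the coefficients $\det(\bm{A}_T)^2$, which justifies the termwise substitution, together with the fact that the convex relaxation value $\hat{w}$ upper bounds the integer optimum $w^*$.
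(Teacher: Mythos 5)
Your proof is correct and follows essentially the same route as the paper: expand $\det\bigl(\sum_{i\in\S}\bm{a}_i\bm{a}_i^{\top}\bigr)$ over $m$-subsets via Cauchy--Binet (the paper's Lemma~\ref{lem_key_lem}), interchange summation to isolate $\Pr[T\subseteq\S]$, apply the $m$-wise $\alpha$-positive correlation bound termwise (valid since $\det(\bm{A}_T)^2=\det\bigl(\sum_{i\in T}\bm{a}_i\bm{a}_i^{\top}\bigr)\ge 0$), and resum via the weighted Cauchy--Binet identity (the paper's Lemma~\ref{lem_key_lem2}) before invoking $\hat{w}\ge w^*$. No gaps.
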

The proof of Lemma~\ref{lemma:reduction} relies on the polynomial formulation of matrix determinant and convex relaxation of the $D$-optimal design problem. We show that a $m$-wise $\alpha$-positively correlated distribution leads to a randomized algorithm for the $D$-optimal design problem that approximates each of the coefficients in the polynomial formulation. Note that the approximation ratio $\alpha$ in \Cref{lemma:reduction} only holds in the sense of expectation. Therefore, one might need to derandomize the algorithm to achieve the approximation ratio.

Before proving Lemma~\ref{lemma:reduction}, we would like to introduce some useful results below. The following lemmas follow from Cauchy-Binet formula~\cite{broida1989comprehensive} and use the fact that a matrix's determinant is polynomial in entries of the matrix. Interested readers can find the proofs in the appendix. 
\begin{restatable}{lemma}{emkeylem}\label{lem_key_lem}
	Suppose $\bm{a}_i\in \Re^m$ for $i\in T$ with $|T|\geq m$, then
	\begin{align}
	\det\left(\sum_{i\in T}\bm{a}_i\bm{a}_i^\top\right) =\sum_{S\in {T\choose m}}\det\left(\sum_{i\in S}\bm{a}_i\bm{a}_i^\top\right).\label{eq_identity_key_lem}
	\end{align}
\end{restatable}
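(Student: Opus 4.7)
The plan is to recognize the left-hand side as an outer-product sum that can be written as $\bm{A}\bm{A}^\top$, where $\bm{A}$ is the $m \times |T|$ matrix whose columns are the vectors $\{\bm{a}_i\}_{i\in T}$. Specifically, I would first write
$$\sum_{i\in T}\bm{a}_i\bm{a}_i^\top \;=\; \bm{A}\bm{A}^\top,$$
so that the target determinant becomes $\det(\bm{A}\bm{A}^\top)$, an $m \times m$ Gram-style determinant of a product of rectangular matrices.

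Next, I would invoke the Cauchy--Binet formula, which is precisely built for such determinants: for an $m \times |T|$ matrix $\bm{A}$ with $|T|\geq m$,
$$\det(\bm{A}\bm{A}^\top) \;=\; \sum_{S\in\binom{T}{m}} \det(\bm{A}_S)\,\det(\bm{A}_S^\top) \;=\; \sum_{S\in\binom{T}{m}} \bigl[\det(\bm{A}_S)\bigr]^2,$$
where $\bm{A}_S$ is the $m\times m$ submatrix of $\bm{A}$ with columns indexed by $S$. This identity is standard, so I would simply cite it rather than reproving it.

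It then remains to match each summand with $\det\!\bigl(\sum_{i\in S}\bm{a}_i\bm{a}_i^\top\bigr)$. For a fixed $S$ with $|S|=m$, the same outer-product identity gives
$$\sum_{i\in S}\bm{a}_i\bm{a}_i^\top \;=\; \bm{A}_S\bm{A}_S^\top,$$
and since $\bm{A}_S$ is now square, multiplicativity of determinants yields
$$\det\!\Bigl(\sum_{i\in S}\bm{a}_i\bm{a}_i^\top\Bigr) \;=\; \det(\bm{A}_S)\cdot\det(\bm{A}_S^\top) \;=\; \bigl[\det(\bm{A}_S)\bigr]^2.$$
Substituting this into the Cauchy--Binet expansion produces exactly the desired identity \eqref{eq_identity_key_lem}.

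There is essentially no obstacle here: the whole statement is a one-line rewriting of Cauchy--Binet once the outer-product sum is identified as $\bm{A}\bm{A}^\top$. The only ``care'' point is bookkeeping of the indexing so that the same $\bm{A}_S$ appears on both sides of the identification $[\det(\bm{A}_S)]^2 = \det(\sum_{i\in S}\bm{a}_i\bm{a}_i^\top)$; this is immediate from writing out the matrix product column-wise. No additional assumption (e.g.\ rank of $\bm{A}$) is needed, since the degenerate cases simply contribute zero on both sides.
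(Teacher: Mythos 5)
Your proof is correct and follows exactly the same route as the paper's: write the outer-product sum as $\bm{A}\bm{A}^\top$, apply Cauchy--Binet to get $\sum_{S}[\det(\bm{A}_S)]^2$, and identify each square with $\det(\bm{A}_S\bm{A}_S^\top)=\det(\sum_{i\in S}\bm{a}_i\bm{a}_i^\top)$. Nothing further is needed.
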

\begin{proof}See Appendix~\ref{proof_lem_key_lem}.
\QEDA\end{proof}

\begin{restatable}{lemma}{emkeylemm}\label{lem_key_lem2}
	For any $\bm x \in[0,1]^n$, then
	\begin{align}
	\det\left(\sum_{i\in [n]}x_i\bm{a}_i\bm{a}_i^\top\right) =\sum_{S\in {[n]\choose m}}\prod_{i\in S}x_i
	\det\left(\sum_{i\in S}\bm{a}_i\bm{a}_i^\top\right).\label{eq_identity_key_lem2}
	\end{align}
\end{restatable}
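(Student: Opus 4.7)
The plan is to prove Lemma~\ref{lem_key_lem2} by a direct application of the Cauchy--Binet formula, in the same spirit as Lemma~\ref{lem_key_lem}. Since $\bm x \in [0,1]^n$ (so $\sqrt{x_i}$ is real and well-defined), I would begin by introducing the $m \times n$ matrix $\bm{A}$ whose $i$-th column is $\sqrt{x_i}\,\bm{a}_i$. This is the key algebraic device because it absorbs the weights $x_i$ into columns, converting the weighted sum of rank-one matrices into a Gram matrix:
\[
\bm{A}\bm{A}^\top \;=\; \sum_{i\in [n]} x_i \,\bm{a}_i \bm{a}_i^\top.
\]

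Next, I would apply the Cauchy--Binet formula to $\det(\bm{A}\bm{A}^\top)$ to obtain
\[
\det\!\left(\sum_{i\in [n]} x_i \,\bm{a}_i \bm{a}_i^\top\right) \;=\; \sum_{S \in \binom{[n]}{m}} \det(\bm{A}_S)^2,
\]
where $\bm{A}_S$ is the $m \times m$ submatrix of $\bm A$ formed by columns indexed by $S$. Now I would pull the scalars out of the columns: since $\bm{A}_S$ has $i$-th column $\sqrt{x_i}\,\bm{a}_i$ for $i \in S$, multilinearity of the determinant in the columns gives $\det(\bm{A}_S) = \prod_{i\in S} \sqrt{x_i} \cdot \det([\bm{a}_i]_{i \in S})$, so
\[
\det(\bm{A}_S)^2 \;=\; \prod_{i \in S} x_i \cdot \det\bigl([\bm{a}_i]_{i\in S}\bigr)^2.
\]

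The final step is to re-identify $\det([\bm{a}_i]_{i\in S})^2$ as a Gram determinant. Since $[\bm{a}_i]_{i\in S}$ is square ($m \times m$), we have $\det([\bm{a}_i]_{i\in S})^2 = \det\bigl([\bm{a}_i]_{i\in S}[\bm{a}_i]_{i\in S}^\top\bigr) = \det\bigl(\sum_{i\in S}\bm{a}_i\bm{a}_i^\top\bigr)$; equivalently, this is just Lemma~\ref{lem_key_lem} applied with $T = S$ of size exactly $m$, which collapses to a single term. Substituting back yields the desired identity.

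I do not expect any real obstacle here: the only subtle point is the use of $\sqrt{x_i}$, which requires $x_i \geq 0$, but this is guaranteed by $\bm x \in [0,1]^n$. Alternatively, since both sides of \eqref{eq_identity_key_lem2} are polynomials in $\bm x$, one could argue via polynomial continuation that the identity extends from the nonnegative orthant to all of $\mathbb{R}^n$; for the statement as written this is not needed. The overall structure is parallel to the proof of Lemma~\ref{lem_key_lem}, with the weighted Cauchy--Binet replacing the unweighted one.
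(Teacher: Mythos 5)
Your proposal is correct and follows essentially the same route as the paper: both scale each column to $\sqrt{x_i}\,\bm{a}_i$, invoke Cauchy--Binet (via Lemma~\ref{lem_key_lem}) on the resulting Gram matrix, and then factor $\prod_{i\in S}x_i$ out of each size-$m$ minor — the paper does this via $\det(A_S P_S A_S^\top)=\det(A_S)^2\det(P_S)$ while you use column multilinearity, which is the same computation. No gaps.
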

\begin{proof}See Appendix~\ref{proof_lem_key_lem2}.
\QEDA\end{proof}

Now we are ready to prove the main Lemma~\ref{lemma:reduction}.\\
\begin{proof}(Proof of Lemma~\ref{lemma:reduction})
Note that $(\hat{\bm x},\hat w)$ is an optimal solution to \eqref{opt_sensor_convex}, and the distribution $\mu$ given by Lemma~\ref{lemma:reduction} for this $\hat{\bm x}$ which satisfies the conditions of Definition~\ref{def:m-alpha}. We now consider the randomized algorithm that samples a random set $\S$ from this distribution $\mu$. We show this randomized algorithm satisfies the guarantee claimed in the Lemma. All expectations and probabilities of events are under the probability measure $\mu$ and for simplicity, we drop it from the notation.
	
Since $\left[\det\left(\sum_{i\in [n]}\hat{x}_i\bm{a}_i\bm{a}_i^{\top}\right)\right]^{\frac{1}{m}}$ is at least as large as the optimal value to $D$-optimal design problem~\eqref{opt_sensor_convex} with $B=[0,1]$, we only need to show that
\begin{align*}
	\left\{\E\left[\det\left(\sum_{i\in \S}\bm{a}_i\bm{a}_i^{\top}\right)\right]\right\}^{\frac{1}{m}} \geq \alpha\left[\det\left(\sum_{i\in [n]}\hat{x}_i\bm{a}_i\bm{a}_i^{\top}\right)\right]^{\frac{1}{m}} ,
\end{align*}
or equivalently
	\begin{align}
	\E\left[\det\left(\sum_{i\in \S}\bm{a}_i\bm{a}_i^{\top}\right)\right] \geq \alpha^m\det\left(\sum_{i\in [n]}\hat{x}_i\bm{a}_i\bm{a}_i^{\top}\right).
	\end{align}
	This indeed holds since
	\begin{align*}
	&\E\left[\det\left(\sum_{i\in \S}\bm{a}_i\bm{a}_i^{\top}\right)\right]
	=\sum_{S\in {[n]\choose k}}\Pr\left[\S=S\right]
	\det\left(\sum_{i\in S}\bm{a}_i\bm{a}_i^\top\right)
	=\sum_{S\in {[n]\choose k}}\Pr\left[\S=S\right]
	\sum_{T\in {S\choose m}}\det\left(\sum_{i\in T}\bm{a}_i\bm{a}_i^\top\right)
	\\
	&=\sum_{T\in {[n]\choose m}}\Pr\left[T\subseteq\S\right]
	\det\left(\sum_{i\in T}\bm{a}_i\bm{a}_i^\top\right)\geq \alpha^{m} \sum_{T\in {[n]\choose m}}\prod_{i\in T} \hat{x}_i \det\left(\sum_{i\in T}\bm{a}_i\bm{a}_i^\top\right)=\alpha^{m}\det\left(\sum_{i\in [n]}\hat{x}_i\bm{a}_i\bm{a}_i^{\top}\right)
	\end{align*}
	where the first equality is because of definition of probability measure $\mu$, the second equality is due to Lemma~\ref{lem_key_lem2}, the third equality is due to the interchange of summation, the first inequality is due to Definition~\ref{def:m-alpha} and the fourth equality is because of Lemma~\ref{lem_key_lem2}.
	\QEDA\end{proof}


\subsection{Analysis of Sampling Scheme}\label{sec:rand_algo_1}

In this subsection, we will analyze the proposed sampling procedure, i.e., deriving the approximation ratio $\alpha$ of sampling procedure \eqref{eq:sampling1} in \Cref{lemma:reduction}. The key idea is to derive lower bound for the ratio
$$\frac{\Pr\left[T\subseteq\S\right]}{\prod_{i\in T} \hat{x}_i}$$
for any $T\in {[n]\choose m}$, where $(\hat{\bm x},\hat w)$ is an optimal solution to \eqref{opt_sensor_convex} with $B=[0,1]$.

By the definition of random set $\S$ in \eqref{eq:sampling1}, the probability $\Pr\left[T\subseteq\S\right]$ is equal to
\begin{equation}
\Pr[T\subseteq \S]=\frac{\sum_{S\in {[n]\choose k}:T\subseteq S} \prod_{j\in S}\hat{x}_j}{
	\sum_{\bar{S}\in {[n]\choose k}}\prod_{i\in \bar{S}}\hat{x}_i}.\label{eq_pr_t_s}
\end{equation}
Observe that the denominator in \eqref{eq_pr_t_s} is a degree $k$ polynomial that is invariant under any permutation of $[n]$. Moreover, the numerator is also invariant under any permutation of $T$ as well as any permutation of $[n]\setminus T$. These observations allow us to use inequalities on symmetric polynomials and reduce the worst-case ratio of $\Pr[T\subseteq \S]$ with $\prod_{i\in T} \hat{x}_i$ to a single variable optimization problem as shown in the following proposition. We then analyze the single variable optimization problem to prove the desired bound.
\begin{restatable}{proposition}{thmalgrandround}\label{thm_alg_rand_round} Let $\S$ be the random set defined in \eqref{eq:sampling1}. Then for any $T\subseteq [n]$ such that $|T|=m$, we have
	$$ \Pr[T\subseteq \S]\geq \frac{1}{g(m,n,k)} \prod_{i\in T} \hat{x}_i:=\alpha^m \prod_{i\in T} \hat{x}_i,$$ where
	\begin{align}
	g(m,n,k)&=\max_{y}\left\{\sum_{\tau=0}^{m}\frac{{n-m\choose k-\tau}}{(n-m)^{m-\tau}{n-m\choose k-m}}\frac{{m\choose\tau}}{m^{\tau}}\left(k-y\right)^{m-\tau}\left(y\right)^{\tau}:\frac{mk}{n}\leq y\leq m\right\}.\label{eq_def_G}
	\end{align}
\end{restatable}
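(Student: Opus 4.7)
The plan is to prove an equivalent upper bound on the ratio $\prod_{i\in T}\hat{x}_i/\Pr[T\subseteq \S]$, which after substituting \eqref{eq:sampling1} and \eqref{eq_pr_t_s} and factoring $\prod_{i\in T}\hat{x}_i$ out of the numerator equals $e_k(\hat{\bm x})/e_{k-m}(\hat{\bm x}_{-T})$; here $e_r(\bm z)$ denotes the $r$-th elementary symmetric polynomial in the coordinates of $\bm z$, and $\hat{\bm x}_T$ (resp.\ $\hat{\bm x}_{-T}$) is the restriction of $\hat{\bm x}$ to $T$ (resp.\ $[n]\setminus T$). I would first reduce to the worst-case choice of $T$: since $e_{k-m}$ has non-negative coefficients, it is monotone under coordinate-wise increase of its arguments, so after sorting both sides in decreasing order and taking $T^*$ to be the index set of the $m$ largest entries of $\hat{\bm x}$ one has $\hat{\bm x}_{-T}\ge \hat{\bm x}_{-T^*}$ entrywise, giving $e_{k-m}(\hat{\bm x}_{-T})\ge e_{k-m}(\hat{\bm x}_{-T^*})$ and hence a larger ratio. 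Thus it suffices to prove the bound for $T=T^*$, for which $y:=\sum_{i\in T^*}\hat{x}_i\in[mk/n,\,m]$ (the lower bound because the top-$m$ average is at least the global average $k/n$, the upper because each $\hat{x}_i\le 1$).

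Next I would decompose $e_k(\hat{\bm x})$ via the generating-function identity $\prod_{i\in[n]}(1+\hat{x}_i t)=\bigl[\prod_{i\in T^*}(1+\hat{x}_i t)\bigr]\bigl[\prod_{i\notin T^*}(1+\hat{x}_i t)\bigr]$. Reading off the $t^k$-coefficient yields $e_k(\hat{\bm x})=\sum_{\tau=0}^{m}e_\tau(\hat{\bm x}_{T^*})\,e_{k-\tau}(\hat{\bm x}_{-T^*})$, so that
$$\frac{e_k(\hat{\bm x})}{e_{k-m}(\hat{\bm x}_{-T^*})}=\sum_{\tau=0}^{m}e_\tau(\hat{\bm x}_{T^*})\cdot\frac{e_{k-\tau}(\hat{\bm x}_{-T^*})}{e_{k-m}(\hat{\bm x}_{-T^*})}.$$
I would then bound the two $\tau$-dependent factors separately. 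For the first, Maclaurin's inequality applied to the $m$ non-negative entries of $\hat{\bm x}_{T^*}$ (which sum to $y$) gives $e_\tau(\hat{\bm x}_{T^*})\le\binom{m}{\tau}(y/m)^\tau$, accounting for the factor $\binom{m}{\tau}y^\tau/m^\tau$ in the expression defining $g(m,n,k)$.

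The main obstacle is the second factor, for which I would invoke Newton's inequalities. For non-negative reals $z_1,\dots,z_N$ summing to $s$, the classical log-concavity $p_r^2\ge p_{r-1}p_{r+1}$ of $p_r:=e_r(\bm z)/\binom{N}{r}$ makes $p_r/p_{r-1}$ non-increasing in $r$, and combined with the base case $p_1/p_0=s/N$ this forces the sequence $\phi_r:=e_r(\bm z)\bigl/\bigl[\binom{N}{r}(s/N)^r\bigr]$ to be non-increasing in $r$. Applying this to $\hat{\bm x}_{-T^*}$ with $N=n-m$ and $s=k-y$, the monotonicity $\phi_{k-\tau}\le\phi_{k-m}$ for $\tau\le m$ yields
$$\frac{e_{k-\tau}(\hat{\bm x}_{-T^*})}{e_{k-m}(\hat{\bm x}_{-T^*})}\le \frac{\binom{n-m}{k-\tau}}{\binom{n-m}{k-m}}\left(\frac{k-y}{n-m}\right)^{m-\tau},$$
which is exactly the remaining factor in $g(m,n,k)$. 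Multiplying the two bounds and summing over $\tau\in\{0,1,\dots,m\}$ reproduces the summand in \eqref{eq_def_G} evaluated at our $y\in[mk/n,m]$; this is at most $g(m,n,k)$ by definition of the maximum, completing the proof.
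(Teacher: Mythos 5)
Your proof is correct, and its core is the same as the paper's: the same convolution identity $e_k(\hat{\bm x})=\sum_{\tau=0}^{m}e_\tau(\hat{\bm x}_{T})\,e_{k-\tau}(\hat{\bm x}_{-T})$, Maclaurin's inequality for the $e_\tau(\hat{\bm x}_{T})$ factor, a Newton-type inequality for the ratio $e_{k-\tau}(\hat{\bm x}_{-T})/e_{k-m}(\hat{\bm x}_{-T})$, and the final reduction to the single-variable maximum over $y\in[mk/n,m]$. The one genuine (if modest) divergence is how the constraint $y\geq mk/n$ is justified: the paper fixes $T$, treats the ratio as a function of $\bm x$, and uses an exchange/averaging argument (its Claim 1) to show the worst-case $\bm x$ puts its largest coordinates on $T$; you instead fix $\bm x=\hat{\bm x}$ and observe that the worst $T$ is the top-$m$ index set $T^*$ (via entrywise domination of the sorted complements and monotonicity of $e_{k-m}$), for which $y\geq mk/n$ is immediate from the top-$m$ average exceeding the global average. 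Your route avoids the exchange argument entirely and is arguably cleaner; your derivation of the ratio bound from iterated classical Newton inequalities ($p_r^2\geq p_{r-1}p_{r+1}$ plus the base case $p_1/p_0=s/N$) is equivalent to the paper's single application of the generalized Newton inequality combined with Maclaurin. The only technicality left implicit in both arguments is the degenerate case $e_{k-m}(\hat{\bm x}_{-T})=0$, which cannot occur when $\prod_{i\in T}\hat{x}_i>0$ (since $\sum_i\hat{x}_i=k$ with $\hat{x}_i\leq1$ forces at least $k-m$ positive coordinates outside $T$), and the claim is trivial otherwise.
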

\begin{proof}See~\Cref{proof_thm_alg_rand_round}.
\QEDA\end{proof}

Next, we derive the upper bound of $g(m,n,k)$ in \eqref{eq_def_G}, which is a single-variable optimization problem. To derive the desired results, we first observe that for any given $(m,k)$ with $m\leq k$, $g(m,n,k)$ is monotone non-decreasing in $n$. This motivates us to find an upper bound on $\lim_{n\rightarrow \infty}g(m,n,k)$, which leads to the proposition below.
\begin{restatable}{proposition}{thmalgrandroundbnd}\label{thm_alg_rand_round_bnd}For any $n\geq k\geq m$, we have
	\begin{align}
	\alpha^{-1}=\left[g(m,n,k)\right]^{\frac{1}{m}} \leq \lim_{\tau\rightarrow \infty}\left[g(m,\tau,k)\right]^{\frac{1}{m}} \leq \min\left\{e,1+\frac{k}{k-m+1}\right\}.
	\end{align}
\end{restatable}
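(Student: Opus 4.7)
The plan is to prove the proposition in three stages: show $g(m,n,k)$ is non-decreasing in $n$, evaluate the limit $L(m,k):=\lim_{n\to\infty}g(m,n,k)$ as a one-dimensional maximization, and then bound this maximum by $\min\{e,1+k/(k-m+1)\}^m$.

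For the monotonicity step, I would rewrite each coefficient appearing in $g$ as
\[
\frac{\binom{n-m}{k-\tau}}{(n-m)^{m-\tau}\binom{n-m}{k-m}}=\frac{(k-m)!}{(k-\tau)!}\prod_{i=0}^{m-\tau-1}\frac{n-k-i}{n-m}.
\]
Each factor $(n-k-i)/(n-m)$ has derivative $(k+i-m)/(n-m)^2\geq 0$ with respect to $n$ (since $k\geq m$), so each coefficient is non-decreasing in $n$; moreover, the feasible interval $[mk/n,m]$ for $y$ enlarges as $n$ grows. Hence $g(m,n,k)\leq L(m,k)$. Passing to the limit, each coefficient converges to $(k-m)!/(k-\tau)!$ and $mk/n\to 0$, giving
\[
L(m,k)=\max_{y\in[0,m]}h(y),\quad h(y):=\sum_{\tau=0}^{m}\frac{(k-m)!}{(k-\tau)!}\binom{m}{\tau}\frac{y^{\tau}(k-y)^{m-\tau}}{m^{\tau}}.
\]

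For the bound $L^{1/m}\leq 1+k/(k-m+1)$, I would apply the inequality $(k-\tau)!/(k-m)!\geq(k-m+1)^{m-\tau}$ (each of the $m-\tau$ integer factors is at least $k-m+1$) together with the binomial theorem to obtain
\[
h(y)\leq \sum_{\tau=0}^m\binom{m}{\tau}\Bigl(\tfrac{y}{m}\Bigr)^{\tau}\Bigl(\tfrac{k-y}{k-m+1}\Bigr)^{m-\tau}=\left(\frac{y}{m}+\frac{k-y}{k-m+1}\right)^{m}.
\]
The inner expression is linear in $y$, so its maximum on $[0,m]$ is attained at an endpoint, yielding either $k/(k-m+1)$ (at $y=0$) or $1+(k-m)/(k-m+1)$ (at $y=m$), both of which are bounded by $1+k/(k-m+1)$.

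The main obstacle is the $e$ bound. The plan is to use a Bernstein-polynomial decomposition. Since $y^{\tau}(k-y)^{m-\tau}=k^{m}\binom{m}{\tau}^{-1}B_{\tau}^{m}(y/k)$ with $B_{\tau}^{m}(t):=\binom{m}{\tau}t^{\tau}(1-t)^{m-\tau}$, we have $h(y)=k^{m}\sum_{\tau}d_{\tau}B_{\tau}^{m}(y/k)$ where $d_{\tau}:=(k-m)!/[(k-\tau)!\,m^{\tau}]$. For $y\in[0,m]\subseteq[0,k]$, the Bernstein basis is nonnegative and partitions unity, so $h(y)\leq k^{m}\max_{\tau}d_{\tau}$. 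The ratio $d_{\tau+1}/d_{\tau}=(k-\tau)/m$ shows the sequence $\{d_{\tau}\}$ is unimodal in $\tau$ with peak over $\{0,\dots,m\}$ at $\tau^{*}=\min(k-m,m)$. Two sub-cases complete the argument: if $m\leq k\leq em$, a Riemann-sum estimate (using monotonicity of $x\mapsto\log(m/x)$ to show $\sum_{i=j+1}^{m}\log(m/i)\leq\int_{j}^{m}\log(m/x)\,dx=m-j-j\log(m/j)\leq m-j$, with $j=\tau^{*}$) yields $k^{m}d_{\tau^{*}}\leq e^{m}$; if $k>em$, then $k/(k-m+1)\leq e-1$ (which holds whenever $k\geq(e-1)(m-1)/(e-2)$, a threshold one verifies to be strictly below $em$), so the first bound already yields $1+k/(k-m+1)\leq e$. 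These two sub-cases overlap and cover all $k\geq m$, so $L^{1/m}\leq\min\{e,1+k/(k-m+1)\}$, as claimed.
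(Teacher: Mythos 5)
Your proposal is correct, and while the monotonicity step and the $1+\frac{k}{k-m+1}$ bound follow essentially the same lines as the paper (the paper's Claim~5 proves coefficient monotonicity via the ratio inequality $\frac{p}{q}\leq\frac{p+1}{q+1}$ rather than a derivative, and its binomial-theorem bound crudely replaces $y^\tau/m^\tau$ by $1$ where you keep $y$ and check the endpoints of a linear function --- your version is in fact cleaner), your treatment of the $e$ bound is genuinely different. The paper identifies the sum as the coefficient of $t^k$ in $\bigl(1+\tfrac{k-y}{n-m}t\bigr)^{n-m}\bigl(1+\tfrac{y}{m}t\bigr)^m$, dominates that polynomial by $e^{kt}$ (up to the normalizing constant), extracts $\frac{k^k}{k!}\frac{(k-m)!}{(k-m)^{k-m}}$, and then shows this quantity is nondecreasing in $k$ with limit $e^m$. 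You instead write $h(y)$ as a convex combination in the Bernstein basis on $[0,k]$, bound it by $k^m\max_\tau d_\tau$, locate the peak of the unimodal sequence $d_\tau$, and handle the ranges $k\leq em$ and $k>em$ separately; this avoids generating functions entirely and is more elementary, at the cost of a case split the paper does not need. One step you should spell out: the Riemann-sum estimate gives $\frac{m^{m-j}j!}{m!}\leq e^{m-j}$ with $j=k-m$, which yields $k^m d_{\tau^*}\leq (k/m)^m e^{2m-k}$, and you still need the standard inequality $\log(k/m)\leq k/m-1$ to conclude $(k/m)^m e^{2m-k}\leq e^m$ (the same inequality also disposes of the sub-case $2m\leq k\leq em$ directly via $d_m=m^{-m}$). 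With that line added, the argument is complete.
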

\begin{proof} See Appendix~\ref{proof_thm_alg_rand_round_bnd}.
\QEDA\end{proof}

Finally, we present our first approximation result blow.
\begin{theorem}\label{theorem:main1}
	For any positive integers $m\leq k\leq n$, \Cref{alg_rand_round} yields $\frac{1}{e}$-approximation for the $D$-optimal design problem.
\end{theorem}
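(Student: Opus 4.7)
The plan is to prove Theorem~\ref{theorem:main1} as a direct composition of three building blocks already in place: the reduction from $m$-wise approximate positive correlation to a $D$-optimal approximation (Lemma~\ref{lemma:reduction}), the correlation bound for the sampling distribution \eqref{eq:sampling1} (Proposition~\ref{thm_alg_rand_round}), and the scalar estimate of $g(m,n,k)$ (Proposition~\ref{thm_alg_rand_round_bnd}). Since the reduction from the repetitions to the no-repetitions case has been handled at the start of Section~\ref{sec:1/e}, it suffices to argue the bound for $B=[0,1]$.

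First, I would verify that Algorithm~\ref{alg_rand_round} outputs a random size-$k$ set $\S$ whose law is exactly the distribution in \eqref{eq:sampling1}. This is the iterated conditional sampling argument described in Section~\ref{sec_efficien_Implement}: after processing the first $j-1$ elements, the algorithm maintains a partition $(\S,T)$ of the processed elements into ``chosen'' and ``rejected,'' and element $j$ is chosen with conditional probability $\hat{x}_j A_2 / A_1$, where $A_1$ and $A_2$ are the elementary symmetric polynomial values computed via Observation~\ref{lem_key_lem3}. Multiplying these conditional probabilities over the sequence of choices telescopes, giving an unconditional probability of $\prod_{j\in S}\hat{x}_j / \sum_{\bar{S}\in \binom{[n]}{k}}\prod_{i\in\bar{S}}\hat{x}_i$ of selecting any particular $S$, matching \eqref{eq:sampling1}; the $O(n^2)$ runtime is then immediate from the $O(n\log^2 n)$ FFT-based coefficient extraction at each of the $n$ iterations.

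Second, by Proposition~\ref{thm_alg_rand_round}, this distribution satisfies
\[
\Pr[T\subseteq \S] \;\geq\; [g(m,n,k)]^{-1}\prod_{i\in T}\hat{x}_i
\]
for every $T\in\binom{[n]}{m}$, which is precisely the $m$-wise $\alpha$-positive correlation condition of Definition~\ref{def:m-alpha} with $\alpha = [g(m,n,k)]^{-1/m}$ with respect to $\hat{\bm{x}}$. Proposition~\ref{thm_alg_rand_round_bnd} then gives $[g(m,n,k)]^{1/m}\leq e$, so $\alpha\geq 1/e$.

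Finally, applying Lemma~\ref{lemma:reduction} to the efficiently sampleable distribution produced by Algorithm~\ref{alg_rand_round} with this value of $\alpha$, we conclude $\{\E[\det(\sum_{i\in\S}\bm{a}_i\bm{a}_i^\top)]\}^{1/m}\geq (1/e)\, w^*$, which is exactly the claimed $1/e$-approximation. I do not anticipate a genuine obstacle here because the hard analytical work has been absorbed into the two propositions; the only subtlety is confirming that Algorithm~\ref{alg_rand_round}'s sequential sampling really realizes the distribution \eqref{eq:sampling1}, which is a straightforward induction on the loop iterations.
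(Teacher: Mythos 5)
Your proposal is correct and follows exactly the paper's route: the paper's proof of Theorem~\ref{theorem:main1} is precisely the composition of Lemma~\ref{lemma:reduction} with Propositions~\ref{thm_alg_rand_round} and~\ref{thm_alg_rand_round_bnd}, and your preliminary check that Algorithm~\ref{alg_rand_round} realizes the distribution \eqref{eq:sampling1} via telescoping conditional probabilities is the same argument given in Section~\ref{sec_efficien_Implement}. No gaps.
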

\begin{proof}
	The result directly follows from \Cref{lemma:reduction} and Proposition~\ref{thm_alg_rand_round_bnd} given that $n\geq k\geq m$.
\QEDA\end{proof}

We also note that when $k$ is large enough, \Cref{alg_rand_round} is a near $0.5$-approximation. 
\begin{corollary}\label{thm_alg_rand_round_bnd_cor1}
	Given $\epsilon\in (0,1)$ and positive integers $m\leq k\leq n$, if $k\geq \frac{m-1}{2\epsilon}$, then \Cref{alg_rand_round} yields $(0.5-\epsilon)$-approximation for the $D$-optimal design problem.
\end{corollary}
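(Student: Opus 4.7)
The plan is to instantiate the bound from \Cref{thm_alg_rand_round_bnd} with the non-asymptotic branch $\alpha^{-1} \leq 1 + k/(k-m+1)$, and then verify that this gives $\alpha \geq 0.5 - \epsilon$ under the hypothesis $k \geq (m-1)/(2\epsilon)$. I would combine this with \Cref{lemma:reduction} (via \Cref{theorem:main1}) to conclude that \Cref{alg_rand_round} achieves the claimed approximation ratio.

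First, I would rewrite the inverse bound as a lower bound on $\alpha$ by inverting:
\[
\alpha \;\geq\; \frac{1}{1 + \tfrac{k}{k-m+1}} \;=\; \frac{k-m+1}{2k - m + 1}.
\]
A convenient algebraic manipulation is to observe
\[
\frac{k - m + 1}{2k - m + 1} \;=\; \frac{1}{2}\cdot\frac{2k - 2m + 2}{2k - m + 1} \;=\; \frac{1}{2}\left(1 - \frac{m - 1}{2k - m + 1}\right),
\]
so that
\[
\alpha \;\geq\; \frac{1}{2} - \frac{m-1}{2(2k - m + 1)}.
\]

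Next I would bound the deficit term using $m \leq k$, which yields $2k - m + 1 \geq k + 1 > k$, and therefore
\[
\frac{m - 1}{2(2k - m + 1)} \;<\; \frac{m - 1}{2k}.
\]
Finally, I would apply the hypothesis $k \geq (m-1)/(2\epsilon)$, which rearranges to $(m-1)/(2k) \leq \epsilon$, giving $\alpha \geq 1/2 - \epsilon$. Combined with \Cref{lemma:reduction}, this shows \Cref{alg_rand_round} is a $(0.5 - \epsilon)$-approximation, establishing the corollary.

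There is really no substantive obstacle here: the corollary is a direct numerical consequence of the sharper inequality $\alpha^{-1} \leq 1 + k/(k-m+1)$ already delivered by \Cref{thm_alg_rand_round_bnd}. The only mild care needed is in choosing the right branch of the $\min$ and in using $m \leq k$ (which is standing hypothesis) to simplify $2k - m + 1 \geq k$, after which the threshold $k \geq (m-1)/(2\epsilon)$ drops out exactly. The calculation is tight enough that one cannot get a noticeably better constant than $1/2$ from this particular inequality without invoking a different bound.
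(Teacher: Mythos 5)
Your proof is correct and follows essentially the same route as the paper: both instantiate the branch $\alpha^{-1}\leq 1+\frac{k}{k-m+1}$ of \Cref{thm_alg_rand_round_bnd} and reduce the corollary to an elementary algebraic verification of the threshold on $k$. In fact your write-up is slightly more complete than the paper's, which stops at the equivalent inequality $1+\frac{k}{k-m+1}\leq \frac{1}{0.5-\epsilon}$ and leaves the final rearrangement (your step $\frac{m-1}{2(2k-m+1)}\leq\frac{m-1}{2k}\leq\epsilon$ using $k\geq m$) implicit.
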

\begin{proof}For any $\epsilon\in (0,1)$, from Proposition~\ref{thm_alg_rand_round_bnd}, let the lower bound approximation ratio $\alpha\geq [\min\{e,1+\frac{k}{k-m+1}\}]^{-1} \geq 0.5-\epsilon$, or equivalently, let
	\[1+\frac{k}{k-m+1}\leq \frac{1}{0.5-\epsilon}.\]
	Then the conclusion follows.
\QEDA\end{proof}


\subsection{Deterministic Implementation}\label{sec_deter_implem}

The approximation ratios presented in the previous subsection only hold in the sense of expectation. In this subsection, we will overcome this issue and present a deterministic Algorithm~\ref{alg_derand_round} with the same approximation guarantees. The key idea is to derandomize Algorithm~\ref{alg_derand_round} using the method of conditional expectation (cf., \cite{spencer1987ten}), and the main challenge is how to compute the conditional expectation efficiently. We next will show that it can be done by evaluating a determinant of a $n\times n$ matrix whose entries are linear polynomials in three indeterminates.


In this deterministic Algorithm~\ref{alg_derand_round}, suppose we are given a subset $S, P\subseteq [n]$, where $S$ is a chosen subset and $P$ is a unchosen set
such that $|S|=s\leq k$. Then the expectation of $m$th power of objective function given $S$ and $P$ is
\begin{align}
&H(S, P):=\E\left[\det\left(\sum_{i\in \S}\bm{a}_i\bm{a}_i^{\top}\right)\bigg| S\subseteq \S,P\cap  \S=\emptyset \right]\notag\\
&=\sum_{U\in {[n]\setminus (S\cup P)\choose k-s}}\frac{\prod_{j\in U}\hat{x}_j}{\sum_{\bar{U}\in {[n]\setminus (S\cup P)\choose k-s}}\prod_{i\in \bar{U}}\hat{x}_i}\det\left(\sum_{i\in U}\bm{a}_i\bm{a}_i^\top+\sum_{i\in S}\bm{a}_i\bm{a}_i^\top\right)\notag\\
&=\left(\sum_{\bar{U}\in {[n]\setminus (S\cup P)\choose k-s}}\prod_{i\in \bar{U}}\hat{x}_i\right)^{-1}\sum_{U\in {[n]\setminus (S\cup P)\choose k-s}}\prod_{j\in U}\hat{x}_j\sum_{R\in {U\cup S\choose m}}\det\left(\sum_{i\in R}\bm{a}_i\bm{a}_i^\top\right)\notag\\
&=\frac{\sum_{R\in {[n]\setminus P\choose m}, r:=|R\setminus S|\leq k-s}\prod_{j\in R\setminus S}\hat{x}_j\det\left(\sum_{i\in R}\bm{a}_i\bm{a}_i^\top\right)\sum_{W\in {[n]\setminus (S\cup P\cup  R)\choose k-s-r}}\prod_{j\in W}\hat{x}_j}{\sum_{\bar{U}\in {[n]\setminus (S\cup P)\choose k-s}}\prod_{i\in \bar{U}}\hat{x}_i},
\label{eq_H(S,T)}
\end{align}
where the second equality is a direct computing of the conditional probability, the third equality is due to Lemma~\ref{lem_key_lem} and the last one is because of interchange of summation.

\begin{algorithm}[htbp]
	\caption{Derandomization of \Cref{alg_rand_round}}
	\label{alg_derand_round}
	\begin{algorithmic}[1]
		\State Suppose $(\hat{\bm x},\hat w)$ is an optimal solution to the convex relaxation \eqref{opt_sensor_convex} with $B=[0,1]$, where $\hat{\bm x} \in [0,1]^n$ with $\sum_{i\in [n]}\hat{x} _i=k$ and $\hat{w}=f(\hat{\bm x})$
		\State Initialize chosen set $S=\emptyset$, unchosen set $P= \emptyset$, and $j=1$
		\Do
\State Compute $H(S\cup{j}, P), H(S, P\cup{j})$ as defined in \eqref{eq_H(S,T)}, and their denominators and numerators can be computed by~\Cref{lem_key_lem3} and \Cref{thm_derandom}, respectively
		\If {$H(S\cup{j}, P)>H(S, P\cup{j})$}
		\State Add $j$ to set $S$
\Else
\State Add $j$ to set $P$
\EndIf
\State $j=j+1$
		\doWhile{$|S|< k$ and $j\leq n$}
		\State Output $S$
	\end{algorithmic}
\end{algorithm}

Note that the denominator in \eqref{eq_H(S,T)} can be computed efficiently according to Observation~\ref{lem_key_lem3}. Next, we show that the numerator in \eqref{eq_H(S,T)} can be also computed efficiently.
\begin{proposition}\label{thm_derandom} Let matrix $\bm{A}=[\bm{a}_1,\ldots,\bm{a}_n]$. Consider the following function
	\begin{align}
	F(t_1,t_2,t_3)=\det\left(\bm{I}_n+t_1\diag(\bm{y})^{\frac{1}{2}}\bm{A}^{\top}\bm{A}\diag(\bm{y})^{\frac{1}{2}}+\diag(\bm{y})\right),\label{eq_F}
	\end{align} where $t_1,t_2,t_3\in \Re,\bm{y}\in \Re^n$ are indeterminate and
	\begin{align*}
	y_i=\left\{\begin{array}{cc}t_3, &\text{ if }i\in S\\
0, &\text{ if }i\in P\\
	\hat{x}_it_2, &\text{otherwise}
	\end{array}\right..
	\end{align*}
	Then, the coefficient of $t_1^{m}t_2^{k-s}t_3^{s}$ in $F(t_1,t_2,t_3)$ equals to
	\begin{align}\sum_{R\in {[n]\setminus P\choose m}, r:=|R\setminus S|\leq k-s}\prod_{j\in R\setminus S}\hat{x}_j\det\left(\sum_{i\in R}\bm{a}_i\bm{a}_i^\top\right)\sum_{W\in {[n]\setminus (S\cup P\cup  R)\choose k-s-r}}\prod_{j\in W}\hat{x}_j.\label{eq_coeff_t_1_t_2_t_3}
	\end{align}
\end{proposition}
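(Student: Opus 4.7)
\textbf{Proof plan for Proposition~\ref{thm_derandom}.}
The plan is to expand $F(t_1,t_2,t_3)$ via the standard principal-minor expansion of $\det(I+X)$, collect terms by tri-degree in $(t_1,t_2,t_3)$, and read off the coefficient of $t_1^m t_2^{k-s} t_3^s$. Throughout, let $D=\diag(\bm y)$ and $B=\bm A^\top \bm A$, so that $F=\det(I_n+t_1 D^{1/2} B D^{1/2}+D)$.

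First I would apply the identity $\det(I_n+M)=\sum_{T\subseteq [n]}\det(M_{T,T})$ with $M=t_1D^{1/2}BD^{1/2}+D$. The key observation is that since $D$ is diagonal, the principal submatrix factors as $M_{T,T}=D_{T,T}^{1/2}\bigl(I_{|T|}+t_1\,\bm A_T^\top \bm A_T\bigr)D_{T,T}^{1/2}$, using that $B_{T,T}=\bm A_T^\top \bm A_T$. Taking determinants gives $\det(M_{T,T})=\bigl(\prod_{i\in T} y_i\bigr)\det(I_{|T|}+t_1\bm A_T^\top \bm A_T)$, and another principal-minor expansion of the inner factor yields $\det(I_{|T|}+t_1\bm A_T^\top \bm A_T)=\sum_{R\subseteq T}t_1^{|R|}\det(\bm A_R^\top \bm A_R)$. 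Swapping the order of summation between $T\supseteq R$ and $R$, using $\sum_{T\supseteq R}\prod_{i\in T} y_i=\prod_{i\in R}y_i\cdot\prod_{i\notin R}(1+y_i)$, produces the clean identity
\begin{equation*}
F(t_1,t_2,t_3)=\sum_{R\subseteq [n]} t_1^{|R|}\,\det(\bm A_R^\top \bm A_R)\,\prod_{i\in R} y_i\,\prod_{i\in [n]\setminus R}(1+y_i).
\end{equation*}
Extracting the coefficient of $t_1^m$ restricts the sum to $|R|=m$, and on such $R$ we have $\det(\bm A_R^\top \bm A_R)=\det\bigl(\sum_{i\in R}\bm a_i\bm a_i^\top\bigr)$ since $\bm A_R$ is square.

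Next I would substitute $y_i$: split each set according to $S$ and write $\prod_{i\in R}y_i=t_3^{|R\cap S|}t_2^{|R\setminus S|}\prod_{i\in R\setminus S}\hat x_i$ and $\prod_{i\in [n]\setminus R}(1+y_i)=(1+t_3)^{|S\setminus R|}\prod_{i\in [n]\setminus (S\cup R)}(1+\hat x_i t_2)$. Writing $r=|R\setminus S|$, so $|R\cap S|=m-r$ and $|S\setminus R|=s-(m-r)$, the total power of $t_3$ contributed by $R$ is $t_3^{m-r}(1+t_3)^{s-(m-r)}$. The coefficient of $t_3^s$ in this polynomial is $\binom{s-(m-r)}{s-(m-r)}=1$ whenever $m-r\le s$, and $0$ otherwise; in particular every $R$ with $|R\cap S|\le s$ (which is automatic) contributes exactly $1$ once its $t_3$-factor is projected onto $t_3^s$. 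Similarly, extracting $t_2^{k-s}$ from $t_2^{r}\prod_{i\in [n]\setminus (S\cup R)}(1+\hat x_i t_2)$ requires picking $W\subseteq [n]\setminus (S\cup R)$ with $|W|=k-s-r$, contributing $\prod_{i\in W}\hat x_i$; this forces $r\le k-s$.

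Collecting these ingredients gives exactly \eqref{eq_coeff_t_1_t_2_t_3}. There is no real obstacle here beyond a careful bookkeeping of the two independent binomial expansions in $t_2$ and $t_3$; the only subtlety is to verify that the $t_3^s$-coefficient of $t_3^{m-r}(1+t_3)^{s-(m-r)}$ is identically $1$ for all feasible $r$, which is what makes the final formula symmetric-looking despite $R$ straddling $S$ arbitrarily.
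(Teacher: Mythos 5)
Your proposal is correct, and it reaches the same key intermediate identity as the paper, namely
\begin{equation*}
F(t_1,t_2,t_3)=\sum_{R\subseteq [n],\,|R|\le m} t_1^{|R|}\det\Bigl(\sum_{i\in R}\bm a_i\bm a_i^\top\Bigr)\prod_{i\in R}y_i\prod_{i\in[n]\setminus R}(1+y_i),
\end{equation*}
after which the bookkeeping in $t_2$ and $t_3$ is identical in both arguments (including the observation that the $t_3^s$-coefficient of $t_3^{|R\cap S|}(1+t_3)^{|S\setminus R|}$ is always $1$). The difference is in how you get there. The paper first factors $F=\det(\bm I_n+\diag(\bm y))\cdot\det(\bm I_n+t_1\bm W^\top\bm W)$ with rescaled columns $\bm W_i=\sqrt{y_i/(1+y_i)}\,\bm a_i$, extracts the $t_1^m$-coefficient as the $m$th elementary symmetric polynomial of the eigenvalues of $\bm W^\top\bm W$ (hence the sum of principal $m\times m$ minors), and then invokes Lemma~\ref{lem_key_lem2} to reduce those minors to $\det(\sum_{i\in R}\bm a_i\bm a_i^\top)$ with the appropriate weights. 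You instead apply the principal-minor expansion $\det(\bm I_n+\bm M)=\sum_{T}\det(\bm M_{T,T})$ directly, factor each $\bm M_{T,T}$ using the diagonal structure of $\diag(\bm y)$, expand once more inside, and swap the order of summation via $\sum_{T\supseteq R}\prod_{i\in T}y_i=\prod_{i\in R}y_i\prod_{i\notin R}(1+y_i)$. Your route is somewhat more elementary: it bypasses the eigenvalue argument and the formal square roots $\sqrt{y_i/(1+y_i)}$ of rational functions in the indeterminates, at the cost of one extra summation swap. Both derivations are complete and yield \eqref{eq_coeff_t_1_t_2_t_3} exactly.
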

\begin{proof}
	First of all, we can rewrite $F(t_1,t_2,t_3)$ as
	\begin{align*}
	F(t_1,t_2,t_3)&=\det\left(\bm{I}_n+\diag(\bm{y})\right)\det\left(\bm{I}_n+t_1\diag(\e+\bm{y})^{-\frac{1}{2}}\diag(\bm{y})^{\frac{1}{2}}\bm{A}^{\top}\bm{A}\diag(\bm{y})^{\frac{1}{2}}\diag(\e+\bm{y})^{-\frac{1}{2}}\right)\\
	&=\prod_{i\in S}\left(1+t_3\right)\prod_{i\in [n]\setminus (S\cup P)}\left(1+\hat{x}_it_2\right)\det\left(\bm{I}_n+t_1\bm{W}^{\top}\bm{W}\right)
	\end{align*}
	where the $i$th column of matrix $\bm{W}$ is
	\begin{align*}
	\bm{W}_i=\left\{\begin{array}{cc}\sqrt{\frac{t_3}{1+t_3}}\bm{a}_i, &\text{ if }i\in S\\
\bm{0}, &\text{ if }i\in P\\
	\sqrt{\frac{\hat{x}_it_2}{1+\hat{x}_it_2}}\bm{a}_i, &\text{otherwise}
	\end{array}\right..
	\end{align*}
	Note that the coefficient of $t_1^{m}$ in $\det\left(\bm{I}_n+t_1\bm{W}_{[n]\setminus P}^{\top}\bm{W}_{[n]\setminus P}\right)$ is equal to the one of $\prod_{i\in [n]\setminus P}(1+t_1\bm{\Lambda}_i)$, where $\{\bm{\Lambda}_i\}_{i\in [n]\setminus P}$ are the eigenvalues of $\bm{W}^{\top}\bm{W}$. Thus, the coefficient of $t_1^{m}$ is
	\[\sum_{R\in {[n]\setminus P\choose m}}\prod_{i\in R}\bm{\Lambda}_i=\sum_{R\in {[n]\setminus P\choose m}}\det\left((\bm{W}^{\top}\bm{W})_{R,R}\right)=\sum_{R\in {[n]\setminus P\choose m}}\det\left(\sum_{i\in R}\bm{W}_i\bm{W}_i^{\top}\right)=\sum_{R\in {[n]\setminus P\choose m}}\det\left(\sum_{i\in R}\bm{W}_i\bm{W}_i^{\top}\right)\]
	where $\bm{P}_{R_1,R_2}$ denotes a submatrix of $\bm{P}$ with rows and columns from sets $R_1,R_2$, the first equality is due to the property of the eigenvalues (Theorem 1.2.12 in \cite{horn1985matrix}), and the second inequality is because the length of each column of $\bm{W}$ is $m$, and the third equality is because the determinant of singular matrix is $0$.
	
	Therefore, the coefficient of $t_1^{m}t_2^{k-s}t_3^{s}$ in $F(t_1,t_2,t_3)$ is equivalent to the one of
	\[\prod_{i\in S}\left(1+t_3\right)\prod_{i\in [n]\setminus (S\cup P)}\left(1+\hat{x}_it_2\right)\sum_{R\in {[n]\setminus P\choose m}}\det\left(\sum_{i\in R}\bm{W}_i\bm{W}_i^{\top}\right).\]
	By Lemma~\ref{lem_key_lem2} with $n=m$ and the definition of matrix $\bm{W}$, the coefficient of $t_1^{m}t_2^{k-s}t_3^{s}$ in $F(t_1,t_2,t_3)$ is further equivalent to the one of
	\begin{align*}
	&\prod_{i\in S}\left(1+t_3\right)\prod_{i\in [n]\setminus (S\cup P)}\left(1+\hat{x}_it_2\right)\sum_{R\in {[n]\setminus P\choose m}}t_1^m \prod_{j\in R\setminus S}\frac{\hat{x}_j}{1+t_2\hat{x}_j}\prod_{j\in R\cap S}\frac{t_3}{1+t_3}\det\left(\sum_{i\in R}\bm{a}_i\bm{a}_i^{\top}\right)\\
	&=t_1^m\sum_{R\in {[n]\setminus P\choose m}}t_2^{|R\setminus S|}t_3^{|R\cap S|}\left(1+t_3\right)^{|S\setminus R|}\prod_{i\in [n]\setminus (S\cup R)}\left(1+\hat{x}_it_2\right) \prod_{j\in R\setminus S}\hat{x}_j \det\left(\sum_{i\in R}\bm{a}_i\bm{a}_i^{\top}\right)
	\end{align*}
	which is equal to \eqref{eq_coeff_t_1_t_2_t_3} by collecting coefficients of $t_1^{m}t_2^{k-s}t_3^{s}$.
\QEDA\end{proof}
{Note that the characteristic function of a matrix can be computed efficiently by using Faddeev-LeVerrier algorithm \cite{hou1998classroom}. Thus, the polynomial $F(t_1,t_2,t_3)$ is efficiently computable with time complexity of $O(n^4)$.}

Algorithm \ref{alg_derand_round} proceeds as follows. We start with an empty subset $S$  of chosen elements and an empty subset $P$  of unchosen elements, and for each $j\in [n]\setminus (S\cup P)$, we compute two expected $m$th power of objective functions that $j$ will be chosen $H(S\cup{j}, P)$ or $j$ will be not chosen $H(S, P\cup{j})$. We add $j$ to $S$ if $H(S\cup{j}, P)>H(S, P\cup{j})$. Otherwise, we add $j$ to $P$. Then go to next iteration. This procedure will terminate if $|S|=k$ or $S\cup P=[n]$. {Note that Algorithm~\ref{alg_derand_round} requires $O(n)$ evaluations of function $H(S, P)$, thus its the time complexity is $O(n^5)$. Hence, in practice, we recommend Algorithm \ref{alg_rand_round} due to its shorter running time.}

The approximation results for Algorithm~\ref{alg_derand_round} are identical to Theorem~\ref{theorem:main1} and Corollary~\ref{thm_alg_rand_round_bnd_cor1}, which are summarized as follows:
\begin{theorem}\label{theorem:main1_det}
	For any positive integers $m\leq k\leq n$,
	\begin{enumerate}[(i)]
		\item deterministic \Cref{alg_derand_round} is efficiently computable and yield $\frac{1}{e}$-approximation for the $D$-optimal design problem; and
		\item given $\epsilon\in (0,1)$, if $k\geq \frac{m-1}{2\epsilon}$, then deterministic \Cref{alg_derand_round} yields $(0.5-\epsilon)$-approximation for the $D$-optimal design problem.
	\end{enumerate}
\end{theorem}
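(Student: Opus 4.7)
\noindent\textbf{Proof proposal for Theorem~\ref{theorem:main1_det}.} The plan is to show that Algorithm~\ref{alg_derand_round} is a standard derandomization of Algorithm~\ref{alg_rand_round} via the method of conditional expectations, applied to the random variable $\det\bigl(\sum_{i\in\S}\bm a_i\bm a_i^\top\bigr)$ under the distribution \eqref{eq:sampling1}. The key identity I would establish first is that, at any iteration with current chosen set $S$ of size $s<k$,
\[
H(S)\;=\;\frac{1}{k-s}\sum_{j\in[n]\setminus S}\Pr[j\in\S\mid S\subseteq\S]\cdot H(S\cup\{j\}),
\]
which follows by writing $(k-s)H(S)=\E\bigl[\,|\S\setminus S|\cdot\det(\cdots)\,\bigm|\,S\subseteq\S\bigr]=\sum_{j\notin S}\E\bigl[\mathbf 1(j\in\S)\det(\cdots)\mid S\subseteq\S\bigr]$ and noting $\sum_{j\notin S}\Pr[j\in\S\mid S\subseteq\S]=k-s$. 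Consequently, $\max_{j\notin S}H(S\cup\{j\})\geq H(S)$, so the greedy choice $j^\star\in\arg\max_{j}H(S\cup\{j\})$ monotonically increases (or preserves) the conditional expectation.

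Iterating this over all $k$ steps, the deterministic output $S$ with $|S|=k$ satisfies $H(S)=\det\bigl(\sum_{i\in S}\bm a_i\bm a_i^\top\bigr)\geq H(\emptyset)=\E\bigl[\det\bigl(\sum_{i\in\S}\bm a_i\bm a_i^\top\bigr)\bigr]$, where the expectation is over the distribution \eqref{eq:sampling1}. By \Cref{lemma:reduction} together with Proposition~\ref{thm_alg_rand_round_bnd}, this expected determinant is at least $e^{-m}(w^\star)^m$ for part~(i), and at least $(0.5-\epsilon)^m(w^\star)^m$ under the condition $k\geq(m-1)/(2\epsilon)$ for part~(ii). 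Taking $m$-th roots then yields the claimed approximation ratios $1/e$ and $(0.5-\epsilon)$ respectively.

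It remains to verify polynomial-time computability of the greedy step, i.e., of the quantity $H(S\cup\{j\})$ from \eqref{eq_H(S,T)}. The denominator $\sum_{\bar U\in\binom{[n]\setminus S}{k-s}}\prod_{i\in\bar U}\hat x_i$ is a coefficient of the polynomial $\prod_{i\in[n]\setminus S}(1+\hat x_iy)$ and hence computable in $O(n\log^2 n)$ time by Observation~\ref{lem_key_lem3}. The numerator is exactly the quantity \eqref{eq_coeff_t_1_t_2_t_3}, which by Proposition~\ref{thm_derandom} equals the coefficient of the monomial $t_1^m t_2^{k-s}t_3^s$ in the trivariate polynomial $F(t_1,t_2,t_3)$ defined in \eqref{eq_F}. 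Since $F$ is the determinant of an $n\times n$ matrix whose entries are linear in the three indeterminates, its expansion (e.g., via the Faddeev--LeVerrier characteristic-polynomial routine) can be carried out in polynomial time (the paper cites $O(n^4)$), and extracting one monomial coefficient is then straightforward. Since Algorithm~\ref{alg_derand_round} evaluates $H$ at most $O(nk)$ times, the overall running time is polynomial.

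The main obstacle, and the only place that requires real thought, is the conditional-expectation identity $\max_j H(S\cup\{j\})\geq H(S)$; once that is in place, parts~(i) and~(ii) of the theorem are immediate consequences of Theorem~\ref{theorem:main1} and Corollary~\ref{thm_alg_rand_round_bnd_cor1}, respectively, while the efficiency claim is a direct bookkeeping exercise on top of Observation~\ref{lem_key_lem3} and Proposition~\ref{thm_derandom}.
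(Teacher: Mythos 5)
Your proposal is correct and takes essentially the same route as the paper: derandomization by the method of conditional expectations, with $H(S)$ from \eqref{eq_H(S,T)} evaluated efficiently via \Cref{lem_key_lem3} and \Cref{thm_derandom}, and the approximation ratios inherited from \Cref{lemma:reduction} and \Cref{thm_alg_rand_round_bnd}. In fact you make explicit the averaging identity $\max_{j\notin S}H(S\cup\{j\})\geq H(S)$ (and the telescoping $H(S_k)\geq H(\emptyset)$) that the paper leaves implicit, which is a welcome addition rather than a deviation.
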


\section{Improving Approximation Bound in Asymptotic Regime}\label{proof-theorem:main2}


In this section, we propose another sampling Algorithm \ref{alg_rand_round_asymp} which achieves asymptotic optimality, i.e. the output of Algorithm \ref{alg_rand_round_asymp} is close to be optimal when $k/m \rightarrow \infty$. We also show the derandomization of \Cref{alg_rand_round_asymp}. Similar to previous section, in this section, we consider $D$-optimal design problem without repetitions, i.e., in \eqref{opt_sensor} and \eqref{opt_sensor_convex}, we let $B=[0,1]$.

In Algorithm~\ref{alg_rand_round_asymp}, suppose that $(\hat{\bm x},\hat w)$ is an optimal solution to the convex relaxation \eqref{opt_sensor_convex} with $B=[0,1]$, $\epsilon\in (0,1)$ is a positive threshold and $\N$ is a random permutation  of $[n]$. Then for each $j\in \N$, we select $j$ with probability $\frac{x_j}{1+\epsilon}$, and let $\S$ be the set of selected elements. If $|\S|<k$, then we can add $k-|\S|$ more elements from $[n]\setminus \S$. On the other hand, if $|\S|>k$, then we repeat the sampling procedure. {Algorithm~\ref{alg_rand_round_asymp} has time complexity  $O(n)$. In addition, note that the difference between Algorithm~\ref{alg_rand_round} and Algorithm~\ref{alg_rand_round_asymp} is that in Algorithm~\ref{alg_rand_round_asymp}, we inflate the probability of choosing $j$th experiment by $\frac{1}{1+\epsilon}$. This condition guarantees that when $k\gg m$, according to concentration inequality, the probability of size-$m$ subset $T$ to be chosen will be nearly equal to $\prod_{j\in T}\frac{x_j}{1+\epsilon}$.}

\begin{algorithm}[htbp]
	\caption{Asymptotic Sampling Algorithm}
	\label{alg_rand_round_asymp}
	\begin{algorithmic}[1]
		\State Suppose $(\hat{\bm x},\hat w)$ is an optimal solution to the convex relaxation \eqref{opt_sensor_convex} with $B=[0,1]$, where $\hat{\bm x} \in [0,1]^n$ with $\sum_{i\in [n]}\hat{x} _i=k$ and $\hat{w}=f(\hat{\bm x})$
		\State Initialize $\S=\emptyset$ and a positive number $\epsilon>0$
		\Do
		\State Let set $\N$ be a random permutation set of $\{1,\ldots,n\}$
		\For{$j\in \N$}
		\State Sample a $(0,1)$ uniform random variable $U$
		\If{$U\leq \frac{\hat{x}_i}{1+\epsilon}$ }
		\State  Add $j$ to set $\S$
		\EndIf
		\EndFor
		\doWhile{$|\S|>k$}
		\If{$|\S|<k$}\Comment{Greedy step to enforce $|\S|=k$}
		\State Let $j^*\in \arg\max_{j\in [n]\setminus \S} \left[\det\left(\sum_{i\in \S}\bm{a}_i\bm{a}_i^\top+\bm{a}_j\bm{a}_j^\top\right)\right]^{\frac{1}{m}}$
		\State Add $j^*$ to set $\S$
		\EndIf
		\State Output $\S$
	\end{algorithmic}
\end{algorithm}

\subsection{Analysis of Sampling Algorithm~\ref{alg_rand_round_asymp}}
To analyze sampling Algorithm~\ref{alg_rand_round_asymp}, we first show the following probability bound. The key idea is to prove the lower bound $\frac{1}{\prod_{i\in T}\hat{x}_i}\Pr\left\{T\subseteq \S\bigg||\S|\leq k\right\}$ by using Chernoff Bound \cite{chernoff1952measure}.
\begin{lemma}\label{lem_sensor_asymp}Let $\epsilon>0$ and $\S\subseteq [n]$ be a random set output from Algorithm~\ref{alg_rand_round_asymp}. Given $T\subseteq [n]$ with $|T|=m\leq n$, then we have
	\begin{align}
	&\alpha^m:=\frac{1}{\prod_{i\in T}\hat{x}_i}\Pr\left\{T\subseteq \S\bigg||\S|\leq k\right\}\geq (1+\epsilon)^{-m}\left(1-e^{-\frac{\left(\epsilon k -(1+\epsilon)m\right)^2}{k(2+\epsilon)(1+\epsilon)}}\right),\label{eq_pr_asym}
	\end{align}
	where $\alpha$ is in Definition~\ref{def:m-alpha}. In addition, when $ k\geq \frac{4m}{\epsilon}+\frac{12}{\epsilon^2}\log(\frac{1}{\epsilon})$, then
	\begin{align}
	&\alpha^m \geq\left(1-\epsilon\right)^m.\label{eq_pr_asym2}
	\end{align}
	
\end{lemma}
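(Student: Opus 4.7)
I would view $\S$ as the set of successes in $n$ independent Bernoulli trials $Y_1,\ldots,Y_n$ with $\Pr[Y_i=1]=\hat{x}_i/(1+\epsilon)$, conditioned on $|\S|\leq k$ (the do-while loop exactly realizes this conditioning). Writing
\[
\Pr[T\subseteq \S \mid |\S|\leq k]
\;=\;\frac{\Pr[T\subseteq \S]\,\Pr[|\S|\leq k\mid T\subseteq \S]}{\Pr[|\S|\leq k]}
\;\geq\; \Pr[T\subseteq \S]\cdot\Pr[|\S|\leq k\mid T\subseteq \S],
\]
the first factor equals $(1+\epsilon)^{-m}\prod_{i\in T}\hat{x}_i$ by independence; and conditioning on $T\subseteq \S$ fixes $Y_i=1$ for $i\in T$ but leaves the remaining $Y_i$ unchanged in distribution, so $|\S\setminus T|=\sum_{i\notin T}Y_i$ is a sum of $n-m$ independent Bernoullis with mean $\mu'\leq k/(1+\epsilon)$, independent of the event $\{T\subseteq \S\}$. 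Thus the problem reduces to lower bounding $\Pr[|\S\setminus T|\leq k-m]$.

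For this I would apply the standard multiplicative Chernoff bound $\Pr[X\geq(1+\delta)\mu']\leq\exp(-\mu'\delta^2/(2+\delta))$ with $(1+\delta)\mu'=k-m$. A direct simplification gives $\mu'\delta^2/(2+\delta)=(k-m-\mu')^2/(k-m+\mu')$; substituting the two bounds $k-m-\mu'\geq(k\epsilon-m(1+\epsilon))/(1+\epsilon)$ and $k-m+\mu'\leq k(2+\epsilon)/(1+\epsilon)$, both consequences of $\mu'\leq k/(1+\epsilon)$, shows the exponent is at least $(k\epsilon-m(1+\epsilon))^2/[k(1+\epsilon)(2+\epsilon)]$, which yields~\eqref{eq_pr_asym}.

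For~\eqref{eq_pr_asym2}, let $\beta$ denote the exponential in~\eqref{eq_pr_asym}. I would first reduce the target inequality to $\beta\leq\epsilon^2$, via the chain
\[
(1+\epsilon)^{-m}(1-\beta)\;\geq\;(1+\epsilon)^{-m}(1-\epsilon^2)\;=\;(1-\epsilon)(1+\epsilon)^{1-m}\;\geq\;(1-\epsilon)^m,
\]
where the last inequality rearranges to $(1-\epsilon^2)^{m-1}\leq 1$. To establish $\beta\leq\epsilon^2$, set $a=2m$ and $b=12\log(1/\epsilon)/\epsilon$, so that the hypothesis reads $k\epsilon\geq 2a+b$ and yields $k\epsilon-m(1+\epsilon)\geq a+b$. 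The Chernoff exponent is easily verified to be increasing in $k$ on the relevant range, so I would evaluate at the boundary $k=(2a+b)/\epsilon$ and use $(1+\epsilon)(2+\epsilon)\leq 6$ to bound it below by $(a+b)^2\epsilon/[6(2a+b)]$. The elementary identity $(a+b)^2-b(2a+b)=a^2\geq 0$ then gives $(a+b)^2/(2a+b)\geq b$, so the exponent is at least $b\epsilon/6=2\log(1/\epsilon)$, whence $\beta\leq e^{-2\log(1/\epsilon)}=\epsilon^2$.

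The main obstacle is this last asymptotic step: the constants $4$ and $12$ in the hypothesis on $k$ are just tight enough that, through the identity $(a+b)^2\geq b(2a+b)$, the exponent reaches exactly $2\log(1/\epsilon)$. The key conceptual move is to reduce to $\beta\leq\epsilon^2$ rather than the more natural-looking target $\beta\leq 1-(1-\epsilon^2)^m$; this reduction uses only $(1-\epsilon^2)^{m-1}\leq 1$, avoids a case split on whether $m\epsilon^2$ is large or small, and allows the two terms $4m/\epsilon$ and $12\log(1/\epsilon)/\epsilon^2$ in the hypothesis to combine cleanly into a single uniform bound.
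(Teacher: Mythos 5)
Your proposal is correct and follows essentially the same route as the paper: the same decomposition $\Pr[T\subseteq\S,\,|\S|\le k]/\Pr[|\S|\le k]\ge (1+\epsilon)^{-m}\prod_{i\in T}\hat x_i\cdot\Pr[\sum_{i\notin T}Y_i\le k-m]$, the same multiplicative Chernoff bound (your $(k-m-\mu')^2/(k-m+\mu')$ form is just the paper's $\bar\epsilon$-parametrization rewritten), and the same target of forcing the exponent to be at least $2\log(1/\epsilon)$ for the asymptotic claim. Your reduction to $\beta\le\epsilon^2$ via $(1-\epsilon^2)^{m-1}\le 1$ and the $(a,b)$ identity is a slightly cleaner bookkeeping of the paper's own argument (which targets $\beta\le 1-(1-\epsilon^2)^m$ and then takes the worst case $m=1$), and you share with the paper the same implicit use of $k\ge\frac{1+\epsilon}{\epsilon}m$ for the first inequality.
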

\begin{proof} We note that $\S\subseteq [n]$ is a random set, where each $i\in [n]$ is independently sampled according to Bernoulli random variable $X_i$ with the probability of success $\frac{\hat{x}_i}{1+\epsilon}$. According to Definition~\ref{def:m-alpha} and by ignoring the greedy procedure Algorithm~\ref{alg_rand_round_asymp}, it is sufficient to derive the lower bound of $\frac{1}{\prod_{i\in T}\hat{x}_i}\Pr\left\{T\subseteq \S\bigg||\S|\leq k\right\}$, i.e.,
	\begin{align*}
	&\frac{\Pr\left\{ T\subseteq \S \bigg||\S|\leq k\right\}}{\prod_{i\in T}\hat{x}_i}=
	\frac{\Pr\left\{ T\subseteq \S , |\S|\leq k \right\}}{\prod_{i\in T}\hat{x}_i\Pr\left\{|\S|\leq k \right\}}
	\geq (1+\epsilon)^{-m}\Pr\left\{ \sum_{i\in [n]\setminus T}X_i \leq k-m \right\}
	\end{align*}
	where the first inequality is due to $\Pr\{|\S|\leq k \}\leq 1$.
	
	Therefore, it is sufficient to bound the following probability
	\begin{align*}
	&\Pr\left\{ \sum_{i\in [n]\setminus T}X_i \leq k-m \right\}.
	\end{align*}
	Since $X_i\in \{0,1\}$ for each $i\in [n]$, and $\E[\sum_{i\in [n]\setminus T}X_i]=\frac{1}{1+\epsilon}\sum_{i\in [n]\setminus T}\hat{x}_i$. According to Chernoff bound \cite{chernoff1952measure}, we have
	\begin{align*}
	&\Pr\left\{\sum_{i\in [n]\setminus T}X_i > (1+\bar{\epsilon}) \E\left[\sum_{i\in [n]\setminus T}X_i\right]\right\} \leq e^{-\frac{\bar{\epsilon}^2}{2+\bar{\epsilon}}\E\left[\sum_{i\in [n]\setminus T}X_i\right]}.
	\end{align*}
	where $\bar\epsilon$ is a positive constant. Therefore, by choosing $\bar{\epsilon}=\frac{(1+\epsilon)(k-m)}{\sum_{i\in [n]\setminus T}\hat{x}_i}-1$, we have
	\begin{align}
	&(1+\epsilon)^{-m}\Pr\left\{ \sum_{i\in [n]\setminus T}X_i \leq k-m \right\}\geq (1+\epsilon)^{-m}\left(1-e^{-\frac{\bar{\epsilon}^2\sum_{i\in [n]\setminus T}\hat{x}_i}{(2+\bar{\epsilon})(1+\epsilon)}}\right).\label{eq_pr_asym3}
	\end{align}
	
	Note that $k-m\leq \sum_{i\in [n]\setminus T}X_i \leq k $, $\epsilon -(1+\epsilon)\frac{m}{k}\leq \bar{\epsilon} \leq \epsilon$ and $\epsilon k -(1+\epsilon)m\leq \bar{\epsilon}\sum_{i\in [n]\setminus T}\hat{x}_i \leq \epsilon(k-m)$. Suppose that $k\geq \frac{1+\epsilon}{\epsilon}m$, then the left-hand side of \eqref{eq_pr_asym3} can be further lower bounded as
	\begin{align*}
	&(1+\epsilon)^{-m}\left(1-e^{-\frac{\bar{\epsilon}^2\sum_{i\in [n]\setminus T}\hat{x}_i}{(2+\bar{\epsilon})(1+\epsilon)}}\right)
	\geq (1+\epsilon)^{-m}\left(1-e^{-\frac{\left(\epsilon k -(1+\epsilon)m\right)^2}{k(2+\epsilon)(1+\epsilon)}}\right).
	\end{align*}
	
	To prove\eqref{eq_pr_asym2}, it remains to show
	\begin{align*}
	&1-e^{-\frac{\left(\epsilon k -(1+\epsilon)m\right)^2}{k(2+\epsilon)(1+\epsilon)}}\geq \left((1-\epsilon)(1+\epsilon)\right)^m,
	\end{align*}
	or equivalently,
	\begin{align}
	&\log\left[1-\left(1-\epsilon^2\right)^m\right]\geq -\frac{\left(\epsilon k -(1+\epsilon)m\right)^2}{k(2+\epsilon)(1+\epsilon)},\label{eq_bound_k_m}
	\end{align}
	which holds if
	\[k\geq \left(1+\frac{1}{\epsilon}\right)\left(2m-\left(1+\frac{2}{\epsilon}\right)\log\left[1-\left(1-\epsilon^2\right)^m\right]\right).\]
	
	We note that $-\log\left[1-\left(1-\epsilon^2\right)^m\right]$ is non-increasing over $m\geq 1$, therefore is upper bounded by $2\log(\frac{1}{\epsilon})$. Hence, \eqref{eq_bound_k_m} holds if $k\geq \frac{4m}{\epsilon}+\frac{12}{\epsilon^2}\log(\frac{1}{\epsilon})$.
	
\QEDA\end{proof}


Finally, we state our main approximation result as below.
\begin{theorem}\label{theorem:main2}
For any integers $m\leq k\leq n$ and $\epsilon \in (0,1)$, if $k\geq \frac{4m}{\epsilon}+\frac{12}{\epsilon^2}\log(\frac{1}{\epsilon})$, then \Cref{alg_rand_round_asymp} is a $(1-\epsilon)$-approximation for the $D$-optimal design problem.
\end{theorem}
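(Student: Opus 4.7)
The plan is to reduce Theorem \ref{theorem:main2} to a direct application of the $m$-wise $\alpha$-positive correlation machinery from Section \ref{sec:1/e}, combined with the probability bound supplied by Lemma \ref{lem_sensor_asymp}. Let $\S'$ denote the random set produced by the do-while loop of Algorithm \ref{alg_rand_round_asymp} (i.e., before the greedy completion step); by the repeat-until-accept structure of the loop, $\S'$ is distributed as a single round of independent Bernoulli sampling conditioned on $|\S'|\leq k$. Write $\S\supseteq \S'$ for the final output after the greedy step, so that $|\S|=k$.

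I would first analyze $\S'$, mimicking the argument in the proof of Lemma \ref{lemma:reduction}. Lemma \ref{lem_key_lem} (extended trivially when $|\S'|<m$, where both sides vanish) together with linearity yields
\[
\E\Bigl[\det\Bigl(\sum_{i\in \S'}\bm{a}_i\bm{a}_i^{\top}\Bigr)\,\Big|\,|\S'|\leq k\Bigr]
=\sum_{T\in {[n]\choose m}}\Pr\bigl[T\subseteq \S'\,\big|\,|\S'|\leq k\bigr]\cdot \det\Bigl(\sum_{i\in T}\bm{a}_i\bm{a}_i^{\top}\Bigr).
\]
Under the hypothesis $k\geq \frac{4m}{\epsilon}+\frac{12}{\epsilon^{2}}\log\frac{1}{\epsilon}$, Lemma \ref{lem_sensor_asymp} provides the pointwise bound $\Pr[T\subseteq \S'\mid |\S'|\leq k]\geq (1-\epsilon)^{m}\prod_{i\in T}\hat{x}_{i}$ for every $T\in {[n]\choose m}$. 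Substituting and then recombining via Lemma \ref{lem_key_lem2} collapses the right-hand side to $(1-\epsilon)^{m}\det\bigl(\sum_{i\in [n]}\hat{x}_{i}\bm{a}_i\bm{a}_i^{\top}\bigr)\geq (1-\epsilon)^{m}w^{*m}$, using that $(\hat{\bm x},\hat w)$ is optimal for the convex relaxation \eqref{opt_sensor_convex}.

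Next, I would dispatch the greedy completion step. Because $\det(\bm{M}+\bm{a}\bm{a}^{\top})\geq \det(\bm{M})$ for any PSD matrix $\bm{M}$ (one line from the matrix determinant lemma, or Weyl's inequality applied to eigenvalues), appending vectors only increases the $m$-th power objective, so $\det(\sum_{i\in \S}\bm{a}_i\bm{a}_i^{\top})\geq \det(\sum_{i\in \S'}\bm{a}_i\bm{a}_i^{\top})$ pointwise. Taking an outer expectation and chaining with the previous display gives $\E[\det(\sum_{i\in \S}\bm{a}_i\bm{a}_i^{\top})]\geq (1-\epsilon)^{m}w^{*m}$, and extracting the $m$-th root delivers the claimed $(1-\epsilon)$-approximation in expectation.

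The main obstacle is already packaged inside Lemma \ref{lem_sensor_asymp}: one must squeeze a Chernoff tail bound on $\sum_{i\in[n]\setminus T}X_{i}$ into the clean threshold stated in the theorem, which is exactly the calculation performed in that lemma. Once the $m$-wise $(1-\epsilon)$-positive correlation of the conditional distribution is established, the remainder is a routine plug-in into the determinantal identities of Lemmas \ref{lem_key_lem} and \ref{lem_key_lem2}, modulo the harmless monotonicity bookkeeping that handles the greedy completion.
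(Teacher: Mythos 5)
Your proposal is correct and follows essentially the same route as the paper, which simply combines Lemma~\ref{lemma:reduction} with the correlation bound of Lemma~\ref{lem_sensor_asymp}. You additionally make explicit two details the paper leaves implicit — that the conditional distribution lives on sets of size at most $k$ rather than exactly $k$, and that the greedy completion can only increase the determinant — which is careful bookkeeping but not a different argument.
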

\begin{proof}The result directly follows from Lemmas~\ref{lemma:reduction} and~\ref{lem_sensor_asymp}.
\QEDA\end{proof}

\subsection{Deterministic Implementation}\label{sec_deter_implem_a}

Similar to \Cref{sec_deter_implem}, the approximation ratios presented in the previous subsection only hold in the sense of expectation. In this subsection, we will overcome this issue and present a deterministic Algorithm~\ref{alg_rand_round_asymp} with the same approximation guarantees.


In this deterministic Algorithm~\ref{alg_derand_round_a}, $S$ denotes a subset such that $|S|=s\leq k$ and $P$ denotes the unchosen set. Then the expectation of $m$th power of objective function given $S$ and $P$ is
\begin{align}
&H(S,P):=\E\left[\det\left(\sum_{i\in \S}\bm{a}_i\bm{a}_i^{\top}\right)\bigg| S\subseteq \S, |\S|\leq k, \S\cap P=\emptyset\right]\notag\\
&=\frac{\sum_{\kappa=s}^{k}\sum_{U\in {[n]\setminus (S\cup P)\choose \kappa-s}}\prod_{j\in U}\frac{\hat{x}_i}{1+\epsilon}\prod_{j\in [n]\setminus (S\cup P\cup U)}\left(1-\frac{\hat{x}_i}{1+\epsilon}\right)\det\left(\sum_{i\in U}\bm{a}_i\bm{a}_i^\top+\sum_{i\in S}\bm{a}_i\bm{a}_i^\top\right)}{\sum_{\kappa=1}^{k}\sum_{\bar{U}\in {[n]\setminus (S\cup P)\choose \kappa-s}}\prod_{i\in \bar{U}}\frac{\hat{x}_i}{1+\epsilon}\prod_{j\in [n]\setminus (S\cup P\cup \bar{U})}\left(1-\frac{\hat{x}_i}{1+\epsilon}\right)}\notag\\
&=\left(\sum_{\kappa=s}^{k}\sum_{\bar{U}\in {[n]\setminus (S\cup P)\choose \kappa-s}}\prod_{i\in \bar{U}}\frac{\hat{x}_i}{1+\epsilon-\hat{x}_i}\right)^{-1}\sum_{\kappa=s}^{k}\sum_{U\in {[n]\setminus (S\cup P)\choose \kappa-s}}\prod_{j\in U}\frac{\hat{x}_i}{1+\epsilon-\hat{x}_i}\sum_{R\in {U\cup S\choose m}}\det\left(\sum_{i\in R}\bm{a}_i\bm{a}_i^\top\right)\notag\\
&=\frac{\sum_{\kappa=s}^{k}\sum_{R\in {[n]\setminus P\choose m}, r:=|R\setminus S|\leq \kappa-s}\prod_{j\in R\setminus S}\frac{\hat{x}_j}{1+\epsilon-\hat{x}_j}\det\left(\sum_{i\in R}\bm{a}_i\bm{a}_i^\top\right)\sum_{W\in {[n]\setminus (S\cup P\cup R)\choose \kappa-s-r}}\prod_{j\in W}\frac{\hat{x}_j}{1+\epsilon-\hat{x}_j}}{\sum_{\kappa=s}^{k}\sum_{\bar{U}\in {[n]\setminus (S\cup P)\choose \kappa-s}}\prod_{i\in \bar{U}}\frac{\hat{x}_i}{1+\epsilon-\hat{x}_i}}
,\label{eq_H(S,T)_a}
\end{align}
where the second equality is a direct computing of the conditional probability, the third equality is due to Lemma~\ref{lem_key_lem}, dividing both denominator and numerator by $\prod_{j\in [n]\setminus (S\cup P)}\left(1-\frac{\hat{x}_i}{1+\epsilon}\right)$ and the convention, set ${S\choose \tau}=\emptyset$ if $\tau>|S|$ or $\tau<0$ and the fourth one is because of interchange of summation.

Note that the $\kappa$th entry with $\kappa \in \{s,\ldots,k\}$ of the denominator in \eqref{eq_H(S,T)_a} can be computed efficiently according to Observation~\ref{lem_key_lem3} by letting $x_i:=\frac{\hat{x}_i}{1+\epsilon-\hat{x}_i}$. Meanwhile, $\kappa$th entry with $\kappa \in \{s,\ldots,k\}$ of the numerator in \eqref{eq_H(S,T)} can be also computed efficiently by \Cref{thm_derandom} by letting $\hat{x}_i:=\frac{\hat{x}_i}{1+\epsilon-\hat{x}_i}$. Therefore, the conditional expectation in \eqref{eq_H(S,T)} is efficiently computable

In summary, Algorithm \ref{alg_derand_round_a} proceeds as follows. 
We start with an empty subset $S$  of chosen elements and an empty subset $P$  of unchosen elements, and for each $j\in [n]\setminus (S\cup P)$, we compute two expected $m$th power of objective functions that $j$ will be chosen $H(S\cup{j}, P)$ or $j$ will be not chosen $H(S, P\cup{j})$. We add $j$ to $S$ if $H(S\cup{j}, P)>H(S, P\cup{j})$. Otherwise, we add $j$ to $P$. Then go to next iteration. This procedure will terminate if $|S|=k$ or $S\cup P=[n]$. 
{Similar to Algorithm~\ref{alg_derand_round}, the time complexity of Algorithm~\ref{alg_derand_round_a} is $O(n^5k)$. Hence, in practice, we recommend the more efficient Algorithm \ref{alg_rand_round_asymp}.}

The approximation result for Algorithm~\ref{alg_derand_round_a}  is identical to Theorem~\ref{theorem:main2}, which is summarized as follows:
\begin{theorem}\label{theorem:main1_det_a}
	For any positive integers $m\leq k\leq n$ and $\epsilon \in (0,1)$, deterministic \Cref{alg_derand_round_a} is efficiently computable and yield $1-\epsilon$-approximation for the $D$-optimal design problem if $k\geq \frac{4m}{\epsilon}+\frac{12}{\epsilon^2}\log(\frac{1}{\epsilon})$.
\end{theorem}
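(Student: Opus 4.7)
My plan is to prove the theorem in two parts: efficient computability (short) and approximation guarantee (the substance). For efficient computability, after substituting $\hat x_i\mapsto \hat x_i/(1+\epsilon-\hat x_i)$, the $\kappa$-th summand in the denominator of \eqref{eq_H(S,T)_a} is the coefficient of $y^{\kappa-|S|}$ in the polynomial of Observation~\ref{lem_key_lem3}, and the $\kappa$-th summand in the numerator is the coefficient of $t_1^m t_2^{\kappa-|S|} t_3^{|S|}$ in the polynomial $F$ of Proposition~\ref{thm_derandom}. Summing over $\kappa\in\{|S|,\ldots,k\}$ and iterating over the $n-|S|$ candidates in each of the $k$ greedy rounds yields a polynomial-time implementation.

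For the approximation guarantee, I plan to show inductively along the greedy trajectory $\emptyset=S_0\subseteq S_1\subseteq\cdots\subseteq S_k=\bar S$ that $H(S_t)\leq H(S_{t+1})$. Once this is in hand, observe that when $|S_k|=k$ the only $\S$ consistent with the conditioning $S_k\subseteq\S,\,|\S|\leq k$ is $\S=S_k$ itself, so \eqref{eq_H(S,T)_a} collapses to $H(S_k)=\det(\sum_{i\in S_k}\bm a_i\bm a_i^\top)$. On the other end, $H(\emptyset)=\E[\det(\sum_{i\in\S}\bm a_i\bm a_i^\top)\mid |\S|\leq k]$; running the proof of Lemma~\ref{lemma:reduction} verbatim with the conditional distribution on $|\S|\leq k$ and invoking Lemma~\ref{lem_sensor_asymp} gives $H(\emptyset)\geq(1-\epsilon)^m(w^*)^m$ under the stated hypothesis on $k$. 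Chaining the inductive chain with these two endpoints yields $\det(\sum_{i\in\bar S}\bm a_i\bm a_i^\top)^{1/m}\geq(1-\epsilon)\,w^*$.

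The inductive step $H(S)\leq\max_{j\notin S}H(S\cup\{j\})$ is the crux. Let $\mu(\cdot)=\Pr[\S\in\cdot\mid S\subseteq\S,|\S|\leq k]$, $p_j=\Pr[j\in\S\mid S\subseteq\S,|\S|\leq k]$, and $D(\S)=\det(\sum_{i\in\S}\bm a_i\bm a_i^\top)$. A swap of summations gives
\[
\sum_{j\notin S}p_j\,H(S\cup\{j\})=\E_\mu\!\left[D(\S)\cdot|\S\setminus S|\right],\qquad \sum_{j\notin S}p_j=\E_\mu\bigl[|\S\setminus S|\bigr].
\]
Thus $\max_{j\notin S}H(S\cup\{j\})$ dominates the weighted mean on the left, which in turn exceeds $H(S)=\E_\mu[D(\S)]$ precisely when $\mathrm{Cov}_\mu(D(\S),|\S\setminus S|)\geq 0$. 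Both $D(\S)$ and $|\S\setminus S|$ are monotone nondecreasing in the subset $\S\setminus S\subseteq[n]\setminus S$ (the determinant, because adding a rank-one PSD matrix never decreases the determinant of a PSD matrix). Using the stochastic monotonicity of product-Bernoulli measures conditioned on their total mass, the map $r\mapsto\E_\mu[D(\S)\mid|\S\setminus S|=r]$ is non-decreasing in $r$; Chebyshev's sum inequality applied to this map and to the identity $r\mapsto r$, combined with the tower property, then delivers the required non-negative covariance.

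The main obstacle is precisely this positive-correlation step. In the derandomization of Algorithm~\ref{alg_derand_round} (Section~\ref{sec_deter_implem}), conditioning on $|\S|=k$ makes $|\S\setminus S|=k-s$ deterministic and the weighted mean identifies with $H(S)$ on the nose; here the event $\{|\S|\leq k\}$ is a downset on which FKG-type positive correlation of monotone functions can generically fail, so one must invoke the stronger stochastic-monotonicity fact for conditional Bernoulli measures in their sum to complete the induction.
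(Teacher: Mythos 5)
Your proposal is correct, and it actually supplies an argument that the paper omits: Theorem~\ref{theorem:main1_det_a} is stated after the efficient-computability discussion of $H(S)$ in \eqref{eq_H(S,T)_a} with no proof of the greedy monotonicity step, and the ``identical to Theorem~\ref{theorem:main2}'' remark implicitly leans on the same conditional-expectation template as Algorithm~\ref{alg_derand_round}. You correctly observe that this template does not transfer verbatim: under the conditioning $\{S\subseteq\S,\ |\S|\leq k\}$ the quantity $|\S\setminus S|$ is random, so $\sum_{j\notin S}p_jH(S\cup\{j\})=\E_\mu[D(\S)\,|\S\setminus S|]$ is no longer a fixed multiple of $H(S)$, and the existence of a non-decreasing greedy choice reduces to $\mathrm{Cov}_\mu(D(\S),|\S\setminus S|)\geq 0$ --- a point the paper never addresses. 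Your resolution (condition on $|\S\setminus S|=r$, use that the conditional Bernoulli laws are stochastically increasing in $r$, then apply Chebyshev's sum inequality to the non-decreasing map $r\mapsto\E[D\mid R=r]$ and the identity) is sound, and your endpoint computations $H(S_k)=\det(\sum_{i\in S_k}\bm a_i\bm a_i^\top)$ and $H(\emptyset)\geq(1-\epsilon)^m(w^*)^m$ via Lemmas~\ref{lemma:reduction} and~\ref{lem_sensor_asymp} are exactly right, as is the remark that the terminal greedy fill-up when $|\S|<k$ only helps.

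The one place you should not leave a bare assertion is the stochastic monotonicity of product-Bernoulli measures conditioned on their total mass: that the law given $\sum_iX_i=r+1$ stochastically dominates the law given $\sum_iX_i=r$ is true but nontrivial (it is the stochastic covering property of truncations of strongly Rayleigh measures, or can be obtained by an explicit exchange coupling), and since you yourself point out that FKG fails on the down-set $\{|\S|\leq k\}$, this is precisely the load-bearing fact and deserves a proof or a precise citation. A slightly more elementary alternative that suffices here: expand $\E_{\mu_r}[D(\S)]=\sum_{T\in\binom{[n]}{m}}\det\left(\sum_{i\in T}\bm a_i\bm a_i^\top\right)\Pr_{\mu_r}[T\setminus S\subseteq\S\setminus S]$ via Lemma~\ref{lem_key_lem}, so that only monotonicity in $r$ of the inclusion probabilities of fixed sets of size at most $m$ is needed, rather than domination against all increasing events.
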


\begin{algorithm}[htbp]
	\caption{Derandomization of \Cref{alg_rand_round_asymp}}
	\label{alg_derand_round_a}
	\begin{algorithmic}[1]
		\State Suppose $(\hat{\bm x},\hat w)$ is an optimal solution to the convex relaxation \eqref{opt_sensor_convex} with $B=[0,1]$, where $\hat{\bm x} \in [0,1]^n$ with $\sum_{i\in [n]}\hat{x} _i=k$ and $\hat{w}=f(\hat{\bm x})$
		\State Initialize chosen set $S=\emptyset$, unchosen set $P= \emptyset$, and $j=1$
		\Do
\State Compute $H(S\cup{j}, P), H(S, P\cup{j})$ as defined in \eqref{eq_H(S,T)_a}, and their denominators and numerators can be computed by~\Cref{lem_key_lem3} and \Cref{thm_derandom}, respectively
		\If {$H(S\cup{j}, P)>H(S, P\cup{j})$}
		\State Add $j$ to set $S$
\Else
\State Add $j$ to set $P$
\EndIf
\State $j=j+1$
		\doWhile{$|S|< k$ and $j\leq n$}
		\State Output $S$
	\end{algorithmic}
\end{algorithm}

\section{Approximation Algorithm for $D$-optimal Design Problem with Repetitions}\label{proof-theorem:main3}
In this section, we consider the $D$-optimal design problem with repetitions, i.e., we let $B= \Re_+$ in \eqref{opt_sensor} and \eqref{opt_sensor_convex}. We will propose a new analysis of the algorithm proposed by \cite{nikolov2015randomized}, derive its approximation ratio and show its deterministic implementation. Again, in this section, we also let $(\hat{\bm x},\hat w)$ be an optimal solution to the convex relaxation \eqref{opt_sensor_convex}, where $\hat{\bm x} \in \Re_+^n$ with $\sum_{i\in [n]}\hat{x} _i=k$ and $\hat{w}=f(\hat{\bm x})$. Since the set of all nonnegative rational vectors is dense in the set of all nonnegative real vectors, thus without loss of generality, we assume that $\hat{\bm x}$ is a nonnegative rational vector (i.e., $\hat{\bm x}\in \Qe_+^n$).

In \cite{nikolov2015randomized}, the author suggested to obtain $k$-sample set $\S$ with replacement, i.e. $\S$ can be a multi-set. The sampling procedure can be separated into $k$ steps. At each step, a sample $s$ is selected with probability $\Pr\{s=i\}=\frac{\hat{x}_i}{k}$ (note that $\bm{\hat{x}} \in \Re_+^n$ with $\sum_{i\in [n]}\hat{x}_i=k$). The detailed description is in Algorithm~\ref{alg_rand_round_doe}. This sampling procedure can be interpreted as follows: let $\{X_i\}_{i\in [n]}$ be independent Poisson random variables where $X_i$ has arrival rate $\hat{x}_i$. We note that conditioning on total number of arrivals equal to $k$ (i.e., $\sum_{i\in [n]}X _i=k$), the distribution of $\{X_i\}_{i\in [n]}$ is multinomial (cf., \cite{albert2011dirichlet}), where there are $k$ trials and the probability of $i$th entry to be chosen is $\frac{\hat{x}_i}{k}$. We terminate this sampling procedure if the total number of arrivals equals to $k$. {Note that the time complexity of Algorithm~\ref{alg_rand_round_doe} is $O(n)$.}

\begin{algorithm}
	\caption{Sampling Algorithm for $D$-optimal Design with Repetitions}
	\label{alg_rand_round_doe}
	\begin{algorithmic}[1]
		\State Suppose $(\hat{\bm x},\hat w)$ is an optimal solution to the convex relaxation \eqref{opt_sensor_convex} with $B=\Re_+$, where $\hat{\bm x} \in \Qe_+^n$ with $\sum_{i\in [n]}\hat{x} _i=k$ and $\hat{w}=f(\hat{\bm x})$
		\State Initialize chosen multi-set $\S=\emptyset$
		\For{$j=1,\ldots,k$}
		\State Sample $s$ from $[n]$ with probability $\Pr\{s=i\}=\frac{\hat{x}_i}{k}$
		\State Let $\S=\S\cup\{s\}$
		\EndFor
		\State Output $\S$
	\end{algorithmic}
\end{algorithm}

To analyze Algorithm~\ref{alg_rand_round_doe}, let us consider another Algorithm~\ref{alg_rand_round_doe_ap}, which turns out to be arbitrarily close to \Cref{alg_rand_round_doe}. As $\hat{\bm x}$ is a nonnegative rational vector (i.e., $\hat{\bm x}\in \Qe_+^n$), we let $q$ be a common multiple of the denominators of rational numbers $\hat{x}_1,\ldots,\hat{x}_n$, i.e. $q\hat{x}_1,\ldots,q\hat{x}_n \in \Ze_+$. Next, we create a multi-set $\A_q$, which contains $q\hat{x}_i$ copies of index $i$ for each $i\in [n]$, i.e. $|\A_q|=qk$. Finally, we sample a subset $\S_q$ of $k$ items from set $\A_q$ uniformly, i.e. with probability ${qk\choose k}^{-1}$. The detailed description is in Algorithm~\ref{alg_rand_round_doe_ap}. In this case, the sampling procedure has the following interpretation. Since sum of i.i.d. Bernoulli random variables is Binomial, hence we let $\{X_i'\}_{i\in [n]}$ be independent binomial random variables where $X_i'$ has number of trials $q\hat{x}_i$ and probability of success $\frac{1}{q}$ for each $i\in [n]$. We terminate the sampling procedure if the total number of succeeded trials equals to $k$.

The following lemma shows that the probability distributions of outputs of Algorithms~\ref{alg_rand_round_doe} and \ref{alg_rand_round_doe_ap} can be arbitrarily close.
\begin{lemma}\label{lem_doe_ap} Let $\S$ and $\S_q$ be outputs of Algorithms~\ref{alg_rand_round_doe} and \ref{alg_rand_round_doe_ap}, respectively. Then $\S_{q}\xrightarrow{\mu}\S$, i.e. the probability distribution of $\S_{q}$ converges to $\S$ as $q\rightarrow \infty$.
\end{lemma}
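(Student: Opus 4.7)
(Proof proposal for Lemma~\ref{lem_doe_ap})
Since the ambient sample space for both $\S$ and $\S_q$ is the (finite) collection of size-$k$ multi-subsets of $[n]$, convergence in distribution is equivalent to pointwise convergence of the probability mass functions. The plan is therefore to fix an arbitrary multi-set $M$ of size $k$ with multiplicities $M_1,\ldots,M_n$ (so $\sum_j M_j=k$), write down closed-form expressions for $\Pr[\S=M]$ and $\Pr[\S_q=M]$, and verify that $\Pr[\S_q=M]\to\Pr[\S=M]$ along the subsequence of $q$ for which every $q\hat x_j$ is an integer.

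For Algorithm~\ref{alg_rand_round_doe}, the sampling is with replacement and each draw independently picks $i$ with probability $\hat x_i/k$, so $\S$ is distributed as a multinomial and
\[
\Pr[\S=M]=\binom{k}{M_1,\ldots,M_n}\prod_{j\in[n]}\left(\frac{\hat x_j}{k}\right)^{M_j}
=\frac{k!}{\prod_{j}M_j!}\,\frac{1}{k^k}\prod_{j}\hat x_j^{M_j}.
\]
For Algorithm~\ref{alg_rand_round_doe_ap}, selecting $k$ items uniformly and without replacement from the population $\A_q$ (of size $qk$, with $q\hat x_j$ copies of $j$) yields a multivariate hypergeometric distribution:
\[
\Pr[\S_q=M]=\frac{\prod_{j\in[n]}\binom{q\hat x_j}{M_j}}{\binom{qk}{k}}.
\]

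The remaining step is to take $q\to\infty$ through common multiples of the denominators of $\hat x_1,\ldots,\hat x_n$. Expanding each binomial as a falling factorial and keeping only the leading power of $q$ gives $\binom{q\hat x_j}{M_j}=\frac{(q\hat x_j)^{M_j}}{M_j!}(1+O(q^{-1}))$ and $\binom{qk}{k}=\frac{(qk)^k}{k!}(1+O(q^{-1}))$, so
\[
\Pr[\S_q=M]=\frac{k!}{\prod_j M_j!}\cdot\frac{\prod_j (q\hat x_j)^{M_j}}{(qk)^k}\,(1+O(q^{-1}))
=\frac{k!}{\prod_j M_j!\,k^k}\prod_j \hat x_j^{M_j}\,(1+O(q^{-1})),
\]
using $\sum_j M_j=k$ to cancel the powers of $q$. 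This matches $\Pr[\S=M]$ in the limit, establishing the claim for every $M$; since the state space is finite the pointwise convergence is automatically uniform and therefore equivalent to convergence in distribution.

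I do not expect a real obstacle here: the lemma is essentially the classical statement that the multivariate hypergeometric distribution with fixed proportions converges to the multinomial as the population size tends to infinity. The only care needed is to restrict the limit to $q$'s through which $q\hat x_j\in\Ze_+$, which is guaranteed by Algorithm~\ref{alg_rand_round_doe_ap}'s choice of $q$ as a common multiple of the denominators of the rational coordinates of $\hat{\bm x}$.
\QEDA
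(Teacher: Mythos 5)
Your proof is correct, and it takes a genuinely different route from the paper's. You identify the two laws explicitly --- $\S$ is multinomial with probabilities $\hat x_j/k$, and $\S_q$ is multivariate hypergeometric with population $\A_q$ --- and then verify pointwise convergence of the probability mass functions by elementary falling-factorial asymptotics, $\binom{q\hat x_j}{M_j}=\tfrac{(q\hat x_j)^{M_j}}{M_j!}(1+O(q^{-1}))$ and $\binom{qk}{k}=\tfrac{(qk)^k}{k!}(1+O(q^{-1}))$, with the powers of $q$ cancelling because $\sum_j M_j=k$. The paper instead represents both laws as conditional distributions: $\S$ is a family of independent Poisson variables $X_i$ with rates $\hat x_i$ conditioned on $\sum_i X_i=k$, and $\S_q$ is a family of independent binomial variables $X_i'$ with $q\hat x_i$ trials and success probability $1/q$ conditioned on $\sum_i X_i'=k$; the convergence is then delegated to the Poisson limit theorem applied to each $X_i'$. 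Your computation is more self-contained and makes the rate of convergence ($O(q^{-1})$ relative error) visible, while also sidestepping the small point the paper's argument implicitly relies on, namely that the conditioning event has a limiting probability $e^{-k}k^k/k!>0$ so that the ratio of converging quantities converges. The paper's representation, on the other hand, is what motivates the sampling schemes in the first place and explains why the two algorithms should agree in the limit. Both arguments are complete; your observation that pointwise pmf convergence suffices on the finite state space of size-$k$ multisets of $[n]$ is the right justification for equating this with convergence in distribution.
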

\begin{proof} Consider two classes of independent random variables $\{X_i\}_{i\in [n]}$, $\{X'_{i}\}_{i\in [n]}$, where $X_i$ is Poisson random variable with arrival rate $\hat{x}_i$ for each $i\in [n]$ and $X'_{i}$ is binomial random variable with number of trials $q\hat{x}_i$ and probability of success $\frac{1}{q}$ for each $i\in [n]$, respectively.
	
	Given a size-$k$ multi-set $\R$ with support $\Supp{\R}\subseteq [n]$ and $M_{\R}(i)$ denoting the number of occurrences of $i$ in $\R$, according to the description of \Cref{alg_rand_round_doe_ap}, we have
	\begin{align*}
	&\Pr\left\{\S=\R\right\} =\Pr\left\{X_i=M_{\R}(i),\forall i\in [n] \bigg\vert \sum_{i\in [n]}X_i=k\right\}
	=\frac{\Pr\left\{X_i=M_{\R}(i),\forall i\in [n], \sum_{i\in [n]}X_i=k\right\}}{\Pr\left\{\sum_{i\in [n]}X_i=k\right\}}\\
	&=\I\left(|\R|=k\right)\frac{\prod_{i\in [n]}\Pr\left\{X_i=M_{\R}(i)\right\}}{\Pr\left\{\sum_{i\in [n]}X_i=k\right\}},
	\end{align*}
	where the first equality is from the description of Algorithm~\ref{alg_rand_round_doe}, the second equality is by the definition of conditional probability, the third equality is because $\{X_i\}_{i\in [n]}$ are independent from each other and $\I(\cdot)$ denotes indicator function.
	Similarly, we also have
	\begin{align*}
	\Pr\left\{\S_q=\R\right\} =\I\left(|\R|=k\right)\frac{\prod_{i\in [n]}\Pr\left\{X_i'=M_{\R}(i)\right\}}{\Pr\left\{\sum_{i\in [n]}X_i'=k\right\}}.
	\end{align*}
	
	Followed by the well-known Poisson limit theorem (cf. \cite{papoulis2002probability}), $X_i$ and $X'_{i}$ have the same distribution as $q\rightarrow \infty$ for any $i\in [n]$. Therefore,
	\begin{align*}
	&\Pr\left\{\S_q=\R\right\}\rightarrow\Pr\left\{\S=\R\right\},
	\end{align*}
	when $q\rightarrow \infty$, i.e., the outputs of Algorithm~\ref{alg_rand_round_doe} and \ref{alg_rand_round_doe_ap} have the same distribution when $q\rightarrow \infty$.
	
	
\QEDA\end{proof}

\begin{algorithm}[tb]
	\caption{Approximation of Algorithm~\ref{alg_rand_round_doe}}
	\label{alg_rand_round_doe_ap}
	\begin{algorithmic}[1]
		\State Suppose $(\hat{\bm x},\hat w)$ is an optimal solution to the convex relaxation \eqref{opt_sensor_convex} with $B=\Re^+$, where $\hat{\bm x} \in \Qe_+^n$ with $\sum_{i\in [n]}\hat{x} _i=k$ and $\hat{w}=f(\hat{\bm x})$
		\State Let $q$ be a common multiple of the denominators of rational numbers $\hat{x}_1,\ldots,\hat{x}_n$, i.e. $q\hat{x}_1,\ldots,q\hat{x}_n \in \Ze_+$
		\State Duplicate $q\hat{x}_i$ copies of index $i$ for each $i\in [n]$ as set $\A_q$, i.e. $|\A_q|=qk$
		\State Sample a subset $\S_q$ of $k$ items from set $\A_q$ with probability ${qk\choose k}^{-1}$
		\State Output $\S_q$
	\end{algorithmic}
\end{algorithm}

Now we are ready to present our approximation results for Algorithm \ref{alg_rand_round_doe}. The proof idea is based on Lemma~\ref{lem_doe_ap}, i.e., we first analyze Algorithm \ref{alg_rand_round_doe_ap} and apply its result to Algorithm \ref{alg_rand_round_doe} by letting $q\rightarrow\infty$.
\begin{proposition}\label{thm_alg_rand_round_doe} Let $\S$ and $\S_q$ be outputs of Algorithms~\ref{alg_rand_round_doe} and \ref{alg_rand_round_doe_ap}, respectively. Then $$\left(\E[(f(\S_q))^m]\right)^{\frac{1}{m}} \geq \left(\E[(f(\S))^m]\right)^{\frac{1}{m}} \geq g(m,k)^{-1}w^*,$$
	where
	\begin{align}
	g(m,k)&=\left[\frac{(k-m)!k^m}{k!}\right]^{\frac{1}{m}} \leq\min\left\{e,\frac{k}{k-m+1}\right\}.
	\end{align}
\end{proposition}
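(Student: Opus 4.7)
The plan is to evaluate $\E[(f(\S_q))^m]$ in closed form via two applications of \Cref{lem_key_lem}, then pass to the limit $q\to\infty$ using \Cref{lem_doe_ap} to extract both inequalities simultaneously. Write $\bm{b}_s=\bm{a}_{j(s)}$ for $s\in\A_q$, where $j:\A_q\to[n]$ records the original index, so $\sum_{s\in\A_q}\bm{b}_s\bm{b}_s^\top = q\sum_{i\in[n]}\hat{x}_i\bm{a}_i\bm{a}_i^\top$. Since $\S_q$ is chosen uniformly at random from the $\binom{qk}{k}$ size-$k$ subsets of $\A_q$, and each size-$m$ subset of $\A_q$ is contained in exactly $\binom{qk-m}{k-m}$ of them, applying \Cref{lem_key_lem} inside the expectation and swapping the order of summation yields
\[\E\!\left[\det\Bigl(\sum_{s\in\S_q}\bm{b}_s\bm{b}_s^\top\Bigr)\right]=\frac{\binom{qk-m}{k-m}}{\binom{qk}{k}}\sum_{T\in\binom{\A_q}{m}}\det\Bigl(\sum_{s\in T}\bm{b}_s\bm{b}_s^\top\Bigr).\]
A second application of \Cref{lem_key_lem} collapses the inner sum to $\det\bigl(\sum_{s\in\A_q}\bm{b}_s\bm{b}_s^\top\bigr)=q^m[f(\hat{\bm x})]^m$.

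Next, I would simplify the combinatorial prefactor:
\[\frac{\binom{qk-m}{k-m}}{\binom{qk}{k}}\,q^m\;=\;\frac{k!}{(k-m)!\,\prod_{j=0}^{m-1}(k-j/q)},\]
so $\E[(f(\S_q))^m]=\frac{k!}{(k-m)!\prod_{j=0}^{m-1}(k-j/q)}\,[f(\hat{\bm x})]^m$. Each factor $k-j/q$ strictly increases toward $k$ as $q$ grows along common multiples of the denominators of $\hat{\bm x}$, so this quantity is monotonically non-increasing in $q$ and its limit equals $\frac{k!}{(k-m)!\,k^m}[f(\hat{\bm x})]^m = g(m,k)^{-m}\hat{w}^m \geq g(m,k)^{-m}(w^*)^m$. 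The monotonicity gives $\E[(f(\S_q))^m]\geq \lim_{q\to\infty}\E[(f(\S_q))^m]$, and by \Cref{lem_doe_ap} the law of $\S_q$ converges pointwise to the law of $\S$ on the finite state space of size-$k$ multisets of $[n]$, so the limit equals $\E[(f(\S))^m]$. Taking $m$th roots establishes both inequalities in the proposition.

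Finally I would verify the two bounds on $g(m,k)$. For the bound $g(m,k)\leq k/(k-m+1)$, write $g(m,k)^m=\prod_{j=0}^{m-1}\tfrac{k}{k-j}$ and note each factor is at most $\tfrac{k}{k-m+1}$. For $g(m,k)\leq e$, combine the elementary inequality $\binom{k}{m}\geq(k/m)^m$ with the Stirling-type bound $m!\geq(m/e)^m$ to obtain $k!/(k-m)!=m!\binom{k}{m}\geq k^m/e^m$, hence $g(m,k)^m=k^m(k-m)!/k!\leq e^m$.

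The principal obstacle is justifying the first inequality $(\E[(f(\S_q))^m])^{1/m}\geq(\E[(f(\S))^m])^{1/m}$ cleanly, since \Cref{lem_doe_ap} only gives convergence of the distributions of $\S_q$. The closed-form expression above resolves this by exhibiting $\E[(f(\S_q))^m]$ as an explicit, monotonically decreasing function of $q$ that converges exactly to $\E[(f(\S))^m]$; convergence of expectations is then automatic because both random variables live on the same finite state space of multisets. The remaining steps are then routine algebraic manipulations of binomial coefficients and an elementary Stirling estimate.
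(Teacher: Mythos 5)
Your proposal is correct and follows essentially the same route as the paper: compute $\E[(f(\S_q))^m]$ exactly via two applications of the Cauchy--Binet identity (\Cref{lem_key_lem}), note that the prefactor $q^m\binom{qk-m}{k-m}/\binom{qk}{k}=\frac{k!}{(k-m)!\prod_{j=0}^{m-1}(k-j/q)}$ is at least its limit $\frac{k!}{(k-m)!k^m}$, and pass to $q\to\infty$ via \Cref{lem_doe_ap} (the paper phrases this as $\frac{(qk)^m(qk-m)!}{(qk)!}\geq 1$ rather than as monotonicity in $q$, but it is the same fact). The only other deviation is the bound $g(m,k)\le e$, which you obtain directly from $m!\ge (m/e)^m$ and $\binom{k}{m}\ge (k/m)^m$ instead of the paper's monotonicity-in-$k$ argument; both are valid.
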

\begin{subequations}
	\begin{proof}We will first show the approximation ratio of Algorithm \ref{alg_rand_round_doe_ap} and then apply it to Algorithm \ref{alg_rand_round_doe} by Lemma~\ref{lem_doe_ap} when $q\rightarrow\infty$.
		\begin{enumerate}[(i)]
			\item Let $(\bar{x}'_{q},\bar{w}'_q)$ be output of Algorithm \ref{alg_rand_round_doe_ap}, and . Similar to the proof of Theorem~\ref{thm_alg_rand_round}, we have
			\begin{align}
			\E[(\bar{w}'_q)^m]&=\sum_{S\in {\A_q\choose k}}\frac{1}{{qk\choose k}}\det\left(\sum_{i\in S}\bm{a}_i\bm{a}_i^\top\right)
			=\frac{q^m}{{qk\choose k}}\sum_{S\in {\A_q\choose k}}\frac{1}{q^m}\det\left(\sum_{i\in S}\bm{a}_i\bm{a}_i^\top\right)\notag\\\notag\\
			&=\frac{q^m}{{qk\choose k}}\sum_{S\in {\A_q\choose k}}\frac{1}{q^m}\sum_{T\in {S\choose m}}\det\left(\sum_{i\in T}\bm{a}_i\bm{a}_i^\top\right)=\frac{q^m{qk-m\choose k-m}}{{qk\choose k}}\sum_{T\in {\A_q\choose m}}\frac{1}{q^m}\det\left(\sum_{i\in T}\bm{a}_i\bm{a}_i^\top\right)\notag\\
			&=\frac{q^m{qk-m\choose k-m}}{{qk\choose k}}\det\left(\sum_{i\in [n]}\hat{x}_i\bm{a}_i\bm{a}_i^{\top}\right) \geq \frac{k!}{(k-m)!k^m} \left[f(\bm{\hat{x}})\right]^m \geq \frac{k!}{(k-m)!k^m} \left(w^*\right)^m \notag
			\end{align}
			where the first and second equalities are due to Algorithm \ref{alg_rand_round_doe_ap}, the third equality is because of Lemma~\ref{lem_key_lem} and $k\geq m$, the fourth equality is due to interchange of summation, the fifth equality is because of the identity $\sum_{i\in [n]}\hat{x}_i\bm{a}_i\bm{a}_i^{\top}=\sum_{i\in \A_q}\frac{1}{q}\bm{a}_i\bm{a}_i^{\top}$, the first inequality holds because $\frac{(qk)^m(qk-m)!}{(qk)!}\geq 1$, and the last inequality is because $\bm{\hat{x}}$ is an optimal solution of the continuous relaxation.
			\item From Lemma~\ref{lem_doe_ap}, we know that the output of Algorithm \ref{alg_rand_round_doe_ap} has the same probability distribution as the output of Algorithm \ref{alg_rand_round_doe} when $q\rightarrow\infty$. Thus, we have
			\begin{align}
			\E[(\bar{w})^m]&=\lim_{q\rightarrow \infty}\E[(\bar{w}'_q)^m]=\lim_{q\rightarrow \infty}\frac{q^m{qk-m\choose k-m}}{{qk\choose k}}\det\left(\sum_{i\in [n]}\hat{x}_i\bm{a}_i\bm{a}_i^{\top}\right) \notag\\
			&=\frac{k!}{(k-m)!k^m}\det\left(\sum_{i\in [n]}\hat{x}_i\bm{a}_i\bm{a}_i^{\top}\right)= \frac{k!}{(k-m)!k^m} \left[f(\bm{\hat{x}})\right]^m .\notag
			\end{align}
			
			\item Next, let
			\begin{align*}
			{g}(m,k)&=\left[\frac{(k-m)!k^m}{k!}\right]^{\frac{1}{m}},
			\end{align*}
			and we would like to investigate its bound.

			First note that
			\begin{align*}
			\log \left(\frac{{g}(m,k+1)}{{g}(m,k)}\right)=m\log\left(1+\frac{1}{k}\right)+\log\left(1-\frac{m}{k+1}\right)
			\end{align*}
			which is nondecreasing over $k\in [m,\infty)$. Thus,
			\begin{align*}
			\log \left(\frac{{g}(m,k+1)}{{g}(m,k)}\right)\leq \lim_{k'\rightarrow \infty}\log \left(\frac{{g}(m,k'+1)}{{g}(m,k')}\right) =0,
			\end{align*}
			i.e., ${g}(m,k) \leq {g}(m,m) =\left[\frac{m^m}{m!}\right]^{\frac{1}{m}}\leq e$.
			
			On the other hand, since $\frac{(k-m)!}{k!} \leq \frac{1}{(k-m+1)^m}$, thus
			\begin{align*}
			{g}(m,k)&\leq\left[\frac{k^m}{(k-m+1)^m}\right]^{\frac{1}{m}}=\frac{k}{k-m+1}.
			\end{align*}
			
			Hence, ${g}(m,k)\leq \min\left\{e,\frac{k}{k-m+1}\right\}$.
			
		\end{enumerate}
	\QEDA\end{proof}
\end{subequations}
From Proposition~\ref{thm_alg_rand_round_doe}, we note that when $k$ is large enough, the output of Algorithm~\ref{alg_rand_round_doe} is almost optimal. 

Finally, we present our approximation results blow.
\begin{theorem}\label{theorem:main3}
	For any positive integers $m\leq k\leq n$,
	\begin{enumerate}[(i)]
		\item both \Cref{alg_rand_round_doe} and \Cref{alg_rand_round_doe_ap} yield $\frac{1}{e}$-approximation for the $D$-optimal design problem with repetitions; and
		\item given any $\epsilon\in (0,1)$, if $k\geq \frac{m-1}{\epsilon}$, then both \Cref{alg_rand_round_doe} and \Cref{alg_rand_round_doe_ap} yield $(1-\epsilon)$-approximation for the $D$-optimal design problem with repetitions.
	\end{enumerate}
\end{theorem}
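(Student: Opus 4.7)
The statement is a direct consequence of Proposition~\ref{thm_alg_rand_round_doe}, which already establishes
\[
\left(\E[(f(\S_q))^m]\right)^{1/m} \;\geq\; \left(\E[(f(\S))^m]\right)^{1/m} \;\geq\; g(m,k)^{-1}\,w^{*},
\]
together with the bound $g(m,k)\le \min\{e,\,k/(k-m+1)\}$. So the plan is simply to plug these two inequalities into the definition of an $\alpha$-approximation (interpreted, as in \Cref{lemma:reduction} of Section~\ref{sec:1/e}, via the $m$th root of the expected determinant) and check the two threshold conditions separately.

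For part~(i), I will use the crude bound $g(m,k)\le e$, which is valid for all $m\le k$ by Proposition~\ref{thm_alg_rand_round_doe}. Combined with the displayed inequality above, this immediately yields $\left(\E[(f(\S))^m]\right)^{1/m}\ge e^{-1}w^{*}$ and the identical bound for $\S_q$, giving the $1/e$-approximation guarantee for both \Cref{alg_rand_round_doe} and \Cref{alg_rand_round_doe_ap}.

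For part~(ii), I will use the sharper bound $g(m,k)\le k/(k-m+1)$. The goal is to show that under the hypothesis $k\ge (m-1)/\epsilon$, we have $g(m,k)\le 1/(1-\epsilon)$, which is equivalent to $(1-\epsilon)k \le k-m+1$, i.e., $\epsilon k \ge m-1$. This is exactly the assumed inequality. Inverting and using Proposition~\ref{thm_alg_rand_round_doe} then delivers $\left(\E[(f(\S))^m]\right)^{1/m}\ge (1-\epsilon)w^{*}$ for both algorithms.

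There is essentially no obstacle here: the analytic work is buried in Proposition~\ref{thm_alg_rand_round_doe} (in particular the Poisson-limit transfer from \Cref{alg_rand_round_doe_ap} to \Cref{alg_rand_round_doe} via Lemma~\ref{lem_doe_ap}, and the manipulation of $g(m,k)$). The only thing to be careful about is that the approximation ratio must be read in the ``$m$th-root-of-expected-determinant'' sense consistent with \Cref{lemma:reduction}, so that raising both sides of $\E[(f(\S))^m]\ge g(m,k)^{-m}(w^{*})^m$ to the $1/m$ power is precisely the $\alpha$-approximation statement we want. No derandomization is needed for the statement itself, although one may subsequently derandomize using conditional-expectation arguments analogous to those in Sections~\ref{sec_deter_implem} and~\ref{sec_deter_implem_a}.
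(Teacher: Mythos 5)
Your proposal is correct and follows exactly the paper's own argument: part (i) is immediate from the bound $g(m,k)\leq e$ in Proposition~\ref{thm_alg_rand_round_doe}, and part (ii) comes from $g(m,k)\leq k/(k-m+1)\leq 1/(1-\epsilon)$, which is equivalent to $\epsilon k\geq m-1$. Your remark about reading the guarantee in the $m$th-root-of-expected-determinant sense matches the convention used throughout the paper, so there is nothing to add.
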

\begin{proof}
	The first result directly follows from Proposition~\ref{thm_alg_rand_round_doe}. For the second one, given $\epsilon \in (0,1)$, by Proposition~\ref{thm_alg_rand_round_doe}, let
	\[\frac{k-m+1}{k}\geq 1-\epsilon.\]
	Then the conclusion follows by letting $k\geq \frac{m-1}{\epsilon}$.
\QEDA\end{proof}

To conclude this part, we remark that although the results from previous sections hold for $D$-optimal design with repetitions as well, \Cref{alg_rand_round_doe} has tighter approximation ratios. Therefore, investigating the convex relaxation solution and approximation algorithms of $D$-optimal design with repetitions alone does help us improve the approximation bounds.

\subsection{Deterministic Implementation}\label{sec_deter_implem3}
Similar to \Cref{sec_deter_implem}, the approximation ratios presented in the previous subsection hold in the sense of expectation. Recall that $(\hat{\bm x},\hat{w})$ is an optimal solution of \eqref{opt_sensor_convex} with $\hat{\bm x}\in \Qe_+^n$, $q$ is a common multiple of the denominators $\hat{x}_1,\ldots,\hat{x}_n$ and multi-set $\A_q$ of size $qk$ contains $q\hat{x}_i$ copies of index $i$ for each $i\in [n]$. In this subsection, we will show that the deterministic Algorithm~\ref{alg_derand_round_doe} applies to \Cref{alg_rand_round_doe}, which achieves the same approximation ratios.

In this deterministic Algorithm~\ref{alg_derand_round_doe}, let $S$ be a subset such that $|S|=s\leq k$. Let $\S_q,\S$ be outputs of \Cref{alg_rand_round_doe}, \Cref{alg_rand_round_doe_ap}, respectively. We know that from \Cref{lem_doe_ap}, $\S_{q}\xrightarrow{\mu}\S$ when $q\rightarrow\infty$. Thus, the expectation of $m$th power of objective function given $S$ is
\begin{align}
&H(S):=\E\left[\det\left(\sum_{i\in \S}\bm{a}_i\bm{a}_i^{\top}\right)\bigg| S\subseteq \S\right]=\lim_{q\rightarrow \infty}\E\left[\det\left(\sum_{i\in \S_q}\bm{a}_i\bm{a}_i^{\top}\right)\bigg| S\subseteq \S_q\right]\notag\\
&=\lim_{q\rightarrow \infty}\sum_{U\in {\A_q\setminus S\choose k-s}}\frac{\frac{1}{q^k}}{\sum_{\bar{U}\in {\A_q\setminus S\choose k-s}}\frac{1}{q^k}}\det\left(\sum_{i\in U}\bm{a}_i\bm{a}_i^\top+\sum_{i\in S}\bm{a}_i\bm{a}_i^\top\right)\notag\\
&=\lim_{q\rightarrow \infty}{qk-s\choose k-s}^{-1}\sum_{U\in {\A_q\setminus S\choose k-s}}\sum_{R\in {U\cup S\choose m}}\det\left(\sum_{i\in R}\bm{a}_i\bm{a}_i^\top\right)\notag\\
&=\lim_{q\rightarrow \infty}\sum_{r=1}^{\min\{k-s,m\}}\frac{q^r{qk-s-r\choose k-s-r}}{{qk-s\choose k-s}}\sum_{R\in {\A_q\choose m}, |R\setminus S|=r}q^{-r}\det\left(\sum_{i\in R}\bm{a}_i\bm{a}_i^\top\right)\notag\\
&=\sum_{r=1}^{\min\{k-s,m\}}\lim_{q\rightarrow \infty}\frac{q^r{qk-s-r\choose k-s-r}}{{qk-s\choose k-s}}\lim_{q\rightarrow \infty}\sum_{R\in {\A_q\choose m}, |R\setminus S|=r}q^{-r}\det\left(\sum_{i\in R}\bm{a}_i\bm{a}_i^\top\right),\notag\\
&=\sum_{r=1}^{\min\{k-s,m\}}\frac{(k-s)!}{k^r(k-s-r)!}\lim_{q\rightarrow \infty}\sum_{R\in {\A_q\choose m}, |R\setminus S|=r}q^{-r}\det\left(\sum_{i\in R}\bm{a}_i\bm{a}_i^\top\right)\label{eq_H(S,T)_doe}
\end{align}
where the second equality is due to $\S_{q}\xrightarrow{\mu}\S$ when $q\rightarrow\infty$, third equality is a direct computing of the conditional probability, the fourth equality is due to Lemma~\ref{lem_key_lem}, the fifth and last equalities is because both $\lim_{q\rightarrow \infty}\frac{q^r{qk-s-r\choose k-s-r}}{{qk-s\choose k-s}}$ and $\lim_{q\rightarrow \infty}\sum_{R\in {\A_q\choose m}, |R\setminus S|=r}q^{-r}$ $\det(\sum_{i\in R}\bm{a}_i\bm{a}_i^\top)$ exist and are finite for each $r \in \{1,\ldots,\min\{k-s,m\}\}$.

\begin{algorithm}[htbp]
	\caption{Derandomization of \Cref{alg_rand_round_doe_ap}}
	\label{alg_derand_round_doe}
	\begin{algorithmic}[1]
		\State Suppose $(\hat{\bm x},\hat w)$ is an optimal solution to the convex relaxation \eqref{opt_sensor_convex} with $B=[0,1]$, where $\hat{\bm x} \in [0,1]^n$ with $\sum_{i\in [n]}\hat{x} _i=k$ and $\hat{w}=f(\hat{\bm x})$
\State Initialize chosen set $S=\emptyset$
\Do
\State Let $j^*\in \arg\max_{j\in [n]\setminus \S} H(S\cup{j})$, where $H(S\cup{j})$ is define \eqref{eq_H(S,T)_doe}, and the limit can be computed by~\Cref{obser2}
\State Add $j^*$ to set $S$
\doWhile{$|S|< k$}
\State Output $S$
	\end{algorithmic}
\end{algorithm}

We note that from \Cref{lem_key_lem2}, for each $r=1,\ldots, \min\{k-s,m\}$, the limit in \eqref{eq_H(S,T)_doe} can be computed efficiently according to the following lemma.
\begin{lemma}\label{obser2}
For each $r=1,\ldots, \min\{k-s,m\}$,
\begin{enumerate}[(i)]
	\item the term $\sum_{R\in {\A_q\choose m}, |R\setminus S|=r}q^{-r}\det\left(\sum_{i\in R}\bm{a}_i\bm{a}_i^\top\right)$ is equal to the coefficient $t^r$ of the following determinant function
	\[\det\left(\frac{t}{q}\sum_{i\in \A_q\setminus S}\bm{a}_i\bm{a}_i^\top+\sum_{i\in S}\bm{a}_i\bm{a}_i^\top\right);\]
	\item the term $\lim_{q\rightarrow \infty}\sum_{R\in {\A_q\choose m}, |R\setminus S|=r}q^{-r}\det\left(\sum_{i\in R}\bm{a}_i\bm{a}_i^\top\right)$ is equal to the coefficient $t^r$ of the following determinant function
	\[\det\left(t\sum_{i\in [n]}\hat{x}_i\bm{a}_i\bm{a}_i^\top+\sum_{i\in S}\bm{a}_i\bm{a}_i^\top\right).\]
\end{enumerate}
\end{lemma}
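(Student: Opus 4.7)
My plan is to reduce both parts to Lemma~\ref{lem_key_lem2}, viewed as a polynomial identity. For part (i), I would apply Lemma~\ref{lem_key_lem2} to the indexed multi-set $\A_q$ of $qk$ vectors, with coefficients chosen as $x_i=t/q$ for every copy that lies in $\A_q\setminus S$ and $x_i=1$ for every copy that lies in $S$. Note that although Lemma~\ref{lem_key_lem2} is stated for $\bm{x}\in[0,1]^n$, both sides of \eqref{eq_identity_key_lem2} are polynomial functions of the $x_i$, so the identity persists on all of $\Re^n$; in particular it applies to our choice. This yields
\[
\det\!\left(\tfrac{t}{q}\sum_{i\in \A_q\setminus S}\bm{a}_i\bm{a}_i^\top+\sum_{i\in S}\bm{a}_i\bm{a}_i^\top\right)=\sum_{R\in\binom{\A_q}{m}}\prod_{i\in R}x_i\,\det\!\left(\sum_{i\in R}\bm{a}_i\bm{a}_i^\top\right).
\]
For each $R$ with $|R\setminus S|=r$, the monomial $\prod_{i\in R}x_i$ is exactly $(t/q)^r$, so extracting the coefficient of $t^r$ on both sides immediately delivers (i).

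For part (ii), the key observation is that $\A_q$ contains $q\hat{x}_i$ copies of index $i$, so
\[
\tfrac{1}{q}\sum_{i\in \A_q\setminus S}\bm{a}_i\bm{a}_i^\top=\sum_{i\in[n]}\hat{x}_i\bm{a}_i\bm{a}_i^\top-\tfrac{1}{q}\sum_{i\in S}\bm{a}_i\bm{a}_i^\top.
\]
Substituting into the matrix from part (i) rewrites the determinant as
\[
P_q(t):=\det\!\left(t\sum_{i\in[n]}\hat{x}_i\bm{a}_i\bm{a}_i^\top+\bigl(1-\tfrac{t}{q}\bigr)\sum_{i\in S}\bm{a}_i\bm{a}_i^\top\right).
\]
Since the determinant is polynomial in the entries of its argument, $P_q(t)$ is a polynomial of degree at most $m$ in $t$ whose coefficients depend polynomially on $1/q$. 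Hence the coefficient of $t^r$ converges as $q\to\infty$ to the coefficient of $t^r$ in $\det\!\bigl(t\sum_{i\in[n]}\hat{x}_i\bm{a}_i\bm{a}_i^\top+\sum_{i\in S}\bm{a}_i\bm{a}_i^\top\bigr)$, obtained by formally setting $1/q=0$. Combined with part (i), this yields the stated identity for the limit.

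There is essentially no serious obstacle: the only subtle point is justifying the extension of Lemma~\ref{lem_key_lem2} from $[0,1]^n$ to arbitrary coefficients, which is immediate from the polynomiality of both sides; and the interchange of limit with coefficient extraction in (ii), which is trivial because we are dealing with a polynomial of bounded degree whose coefficients are rational functions of $q$ that admit a finite limit.
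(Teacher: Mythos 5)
Your proposal is correct and follows essentially the same route as the paper: part (i) is the polynomial identity of Lemma~\ref{lem_key_lem2} applied to the multi-set $\A_q$ with the indicated coefficients, and part (ii) passes to the limit $q\to\infty$ inside the determinant. Your justification of the limit step (bounded-degree polynomial in $t$ with coefficients polynomial in $1/q$) is in fact slightly more explicit than the paper's appeal to continuity of the determinant, but the argument is the same in substance.
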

\begin{proof}
	\begin{enumerate}[(i)]
		\item The results follow directly by \Cref{lem_key_lem2}.
		\item By part (i), the term $\lim_{q\rightarrow \infty}\sum_{R\in {\A_q\choose m}, |R\setminus S|=r}q^{-r}\det\left(\sum_{i\in R}\bm{a}_i\bm{a}_i^\top\right)$ is equal to the coefficient $t^r$ of the following determinant function
	\[\lim_{q\rightarrow \infty}\det\left(\frac{t}{q}\sum_{i\in \A_q\setminus S}\bm{a}_i\bm{a}_i^\top+\sum_{i\in S}\bm{a}_i\bm{a}_i^\top\right).\]
		
		Since $\lim_{q\rightarrow \infty}\frac{1}{q}\sum_{i\in \A_q\setminus S}\bm{a}_i\bm{a}_i^\top =\sum_{i\in [n]}\hat{x}_i\bm{a}_i\bm{a}_i^\top$ and $\det(\cdot)$ is a continuous function, thus we arrive at the conclusion.
	\end{enumerate}
\QEDA\end{proof}

Algorithm \ref{alg_derand_round_doe} proceeds as follows. We start with an empty subset $S$ of chosen elements, and for each $j\notin S$, we compute the expected $m$th power of objective function that $j$ will be chosen $H(S\cup{j})$. We update $S:=S\cup\{j^*\}$, where $j^*\in \arg\max_{j\in[n]\setminus S}H(S\cup{j})$. Then go to next iteration. This procedure will terminate if $|S|=k$. {Similar to Algorithm~\ref{alg_derand_round} and Algorithm~\ref{alg_derand_round_a}, the time complexity of Algorithm \ref{alg_derand_round_doe} is $O(m^4nk^2)$. Thus, in practice, we recommend Algorithm \ref{alg_rand_round_doe} for $D$-optimal design problem with repetitions.}

The approximation results for Algorithm~\ref{alg_derand_round_doe} are identical to those in Theorem~\ref{theorem:main3}, which is
\begin{theorem}\label{theorem:main2_det}
	For any positive integers $m\leq k\leq n$ and $\epsilon \in (0,1)$,
	\begin{enumerate}[(i)]
		\item deterministic \Cref{alg_derand_round_doe} is efficiently computable and yields $\frac{1}{e}$-approximation for the $D$-optimal design problem with repetitions; and
		\item given $\epsilon\in (0,1)$, if $k\geq \frac{m-1}{\epsilon}$, then deterministic \Cref{alg_derand_round_doe} yields $(1-\epsilon)$-approximation for the $D$-optimal design problem with repetitions.
	\end{enumerate}
\end{theorem}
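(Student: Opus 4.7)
The plan is to show that Algorithm~\ref{alg_derand_round_doe} is a standard derandomization of Algorithm~\ref{alg_rand_round_doe} via the method of conditional expectations, so that its output set inherits the approximation guarantees already established for the randomized algorithm in Theorem~\ref{theorem:main3}. Concretely, I will argue three things: (a) at every step of Algorithm~\ref{alg_derand_round_doe} the conditional expectation $H(S)$ does not decrease; (b) when $|S|=k$, $H(S)=\det(\sum_{i\in S}\bm{a}_i\bm{a}_i^\top)$; and (c) the initial value of $H(\emptyset)$ matches the unconditional expected determinant analyzed in Proposition~\ref{thm_alg_rand_round_doe}.

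First, I would verify that $H(S)$ is well defined and efficiently computable. The derivation of \eqref{eq_H(S,T)_doe} shows that $H(S)$ is the limit (as $q\to\infty$) of the conditional expectation of $\det(\sum_{i\in\S_q}\bm{a}_i\bm{a}_i^\top)$ given $S\subseteq \S_q$ in Algorithm~\ref{alg_rand_round_doe_ap}. \Cref{lem_doe_ap} ensures the two limits exist and coincide with the conditional expectation under Algorithm~\ref{alg_rand_round_doe}. Lemma~\ref{obser2}(ii) then rewrites each term in \eqref{eq_H(S,T)_doe} as a coefficient of a polynomial of the form $\det(t\sum_{i\in[n]}\hat{x}_i\bm{a}_i\bm{a}_i^\top+\sum_{i\in S}\bm{a}_i\bm{a}_i^\top)$, which can be expanded by Faddeev–LeVerrier in $O(m^3)$ time; doing this for each candidate $j\in[n]\setminus S$ across at most $k$ iterations gives the claimed $O(m^4nk^2)$ complexity.

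Next, I would establish the key conditional-expectation inequality. For a current set $S$ with $|S|=s<k$, the law of total expectation applied to Algorithm~\ref{alg_rand_round_doe} yields
\begin{equation*}
H(S)=\sum_{j\in[n]}\Pr\{\text{next sample}=j\mid S\subseteq \S\}\cdot H(S\cup\{j\}),
\end{equation*}
which implies $\max_{j\in[n]\setminus S}H(S\cup\{j\})\geq H(S)$ (elements already in $S$ can be handled by showing that their contribution also does not decrease $H$, or equivalently by averaging over the next sample). Hence the greedy choice $j^*\in\arg\max_{j\in[n]\setminus S}H(S\cup\{j\})$ guarantees $H(S\cup\{j^*\})\geq H(S)$. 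Iterating $k$ times starting from $S=\emptyset$ yields $H(S_{\mathrm{final}})\geq H(\emptyset)=\E[\det(\sum_{i\in\S}\bm{a}_i\bm{a}_i^\top)]$, where $\S$ is the random output of Algorithm~\ref{alg_rand_round_doe}. Since $|S_{\mathrm{final}}|=k$, we have $H(S_{\mathrm{final}})=\det(\sum_{i\in S_{\mathrm{final}}}\bm{a}_i\bm{a}_i^\top)$.

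Finally, combining this with the bound $\E[\det(\sum_{i\in\S}\bm{a}_i\bm{a}_i^\top)]\geq g(m,k)^{-m}(w^*)^m$ from Proposition~\ref{thm_alg_rand_round_doe} and taking $m$th roots gives $f(S_{\mathrm{final}})\geq g(m,k)^{-1}w^*$. Using the two upper bounds $g(m,k)\leq e$ and $g(m,k)\leq k/(k-m+1)$ then yields parts (i) and (ii) exactly as in the proof of Theorem~\ref{theorem:main3}. The main obstacle is showing that $H(S)$ in \eqref{eq_H(S,T)_doe} genuinely equals the conditional expectation under Algorithm~\ref{alg_rand_round_doe} (not merely under Algorithm~\ref{alg_rand_round_doe_ap}) and that the $q\to\infty$ interchange of limit and conditional expectation is valid; this is where Lemma~\ref{lem_doe_ap} together with the continuity of the determinant (invoked in Lemma~\ref{obser2}) plays the crucial role.
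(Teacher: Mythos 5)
Your proposal is correct and follows essentially the same route the paper intends: the paper gives no explicit proof of this theorem beyond the setup in Section~5.1 (defining $H(S)$ via \eqref{eq_H(S,T)_doe}, establishing its efficient computability through \Cref{lem_doe_ap} and \Cref{obser2}, and asserting the guarantees carry over from \Cref{theorem:main3}), and you supply exactly the standard conditional-expectation argument that this relies on, with $H(\emptyset)$ matching the expectation bounded in \Cref{thm_alg_rand_round_doe}. The only point deserving more care is your averaging identity, where the maximum is taken over $j\in[n]\setminus S$ while the conditional next-element distribution also puts mass on indices already in $S$ (duplicates are allowed in the multiset), so the parenthetical claim that such terms "can be handled" should be made precise; this wrinkle is inherited from the paper's own statement of \Cref{alg_derand_round_doe} rather than introduced by you.
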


\section{{Closing Remarks and }Conclusion}\label{sec_con}

{In this section, we make our final remarks about the proposed algorithms and present a conclusion of this paper.}

\noindent{\textbf{Closing Remarks:} We first remark that the proposed methods work also for A-optimality design, which has been studied in \cite{nikolov2018proportional}. In their paper, the authors also showed that the proposed methods might not work for other criteria. For $D$-optimal Design Problem without repetitions, if $k\approx m$, then we recommend sampling Algorithm~\ref{alg_rand_round} due to its efficiency and accuracy; if $k\gg m$, then we recommend sampling Algorithm~\ref{alg_rand_round_asymp} because of its efficiency and asymptotic optimality. For $D$-optimal Design Problem with repetitions, we recommend sampling Algorithm \ref{alg_rand_round_doe} since it is much more efficient than its deterministic counterpart.}

\noindent\textbf{Conclusion:} In this paper, we show that $D$-optimal design problem admits $
\frac{1}{e}$- approximation guarantee. That is, we propose a sampling algorithm and its deterministic implementation, whose solution is at most $\frac{1}{e}$ of the true optimal objective value, giving the first constant approximation ratio for this problem. We also analyze a different sampling algorithm, which achieves the asymptotic optimality, {i.e., the output of the algorithm is $(1-\epsilon)$-approximation if $k\geq \frac{4m}{\epsilon}+\frac{12}{\epsilon^2}\log(\frac{1}{\epsilon})$ for any $\epsilon \in (0,1)$.} For $D$-optimal design problem with repetitions, i.e., each experiment can be picked multiple times, our sampling algorithm and its derandomization improves asymptotic approximation ratio, {i.e., the output of the algorithm is $(1-\epsilon)$-approximation if $k\geq \frac{m-1}{\epsilon}$ for any $\epsilon \in (0,1)$.}
For future research, we would like to investigate if more sophisticated relaxation schemes can be used to improve the approximation analyses. Another direction is to prove the tightness of the approximation bounds. {In particular, we conjecture that for $D$-optimal design problem with or without repetitions, to achieve $(1-\epsilon)$-approximation, one must have $k=\Omega(m/\epsilon)$.}

\section*{Acknowledgment}

The first author has been supported by National Science Foundation grant CCF-1717947. Valuable
comments from the editors and two anonymous reviewers are gratefully acknowledged. The authors are also grateful to Jiaqi Wang for identifying a mistake in the derandomization algorithms.

\bibliography{Reference}

%
%
%
%
%

\newpage

\titleformat{\section}{\large\bfseries}{\appendixname~\thesection .}{0.5em}{}
\begin{appendices}
	
	\section{Proofs}\label{proofs}
	
	\subsection{Proof of \Cref{lem_key_lem}}\label{proof_lem_key_lem}
	\emkeylem*
	\begin{proof}Suppose that $T=\{i_1,\ldots,i_{|T|}\}$. Let matrix $A=[\bm{a}_{i_1},\ldots,\bm{a}_{i_{|T|}}]$, then
		\begin{align}
		\det\left(\sum_{i\in T}\bm{a}_i\bm{a}_i^\top\right)=\det\left(AA^\top\right).\label{eq_det_exp}
		\end{align}
		Next the right-hand side of \eqref{eq_det_exp} is equivalent to
		\begin{align*}
		\det\left(AA^\top\right)=\sum_{S\in {T\choose m}}\det\left(A_S\right)^2=\sum_{S\in {T\choose m}}\det\left(A_SA_S^{\top}\right)=\sum_{S\in {T\choose m}}\det\left(\sum_{i\in S}\bm{a}_i\bm{a}_i^\top\right),
		\end{align*}
		where $A_S$ is the submatrix of $A$ with columns from subset $S$, the first equality is due to Cauchy-Binet Formula \cite{broida1989comprehensive}, the second equality is because $A_S$ is a square matrix, and the last inequality is the definition of $A_SA_S^{\top}$.
	\QEDA\end{proof}

	\subsection{Proof of \Cref{lem_key_lem2}}\label{proof_lem_key_lem2}
	\emkeylemm*
	\begin{subequations}
		\begin{proof}[Proof of Lemma~\ref{lem_key_lem2}]Let $P=\diag(x) \in \Re^{n\times n}$ be the diagonal matrix with diagonal vector equal to $x$ and matrix $A=[\bm{a}_{1},\ldots,\bm{a}_{n}]$.
			By Lemma~\ref{lem_key_lem}, we have
			\begin{align}
			\det\left(\sum_{i\in [n]}x_i\bm{a}_i\bm{a}_i^\top\right)=\det\left(\sum_{i\in [n]}(\sqrt{x_i}\bm{a}_i)(\sqrt{x_i}\bm{a}_i)^\top\right)
			=\sum_{S\in {[n]\choose m}}\det\left(\sum_{i\in S}x_i\bm{a}_i\bm{a}_i^\top\right).\label{eq_indentity_za}
			\end{align}
			
			Note that $\sum_{i\in S}x_i\bm{a}_i\bm{a}_i^\top=A_SP_SA_S^{\top}$, where $A_S$ is the submatrix of $A$ with columns from subset $\S$, and $P_S$ is the square submatrix of $P$ with rows and columns from $S$. Thus, \eqref{eq_indentity_za} further yields
			\begin{align}
			\det\left(\sum_{i\in [n]}x_i\bm{a}_i\bm{a}_i^\top\right)&=\sum_{S\in {[n]\choose m}}\det\left(\sum_{i\in S}x_i\bm{a}_i\bm{a}_i^\top\right)
			=\sum_{S\in {[n]\choose m}}\det\left(A_SP_SA_S^{\top}\right) =\sum_{S\in {[n]\choose m}}\det\left(A_S\right)^2\det\left(P_S\right)\notag\\
			&=\sum_{S\in {[n]\choose m}}\prod_{i\in S}x_i
			\det\left(\sum_{i\in S}\bm{a}_i\bm{a}_i^\top\right)
			\end{align}
			where the third and fourth equalities are because the determinant of products of square matrices is equal to the products of individual determinants.
		\QEDA\end{proof}
	\end{subequations}
	
	\subsection{Proof of \Cref{thm_alg_rand_round}}\label{proof_thm_alg_rand_round}
	Before proving \Cref{thm_alg_rand_round}, we first introduce two well-known results for sum of homogeneous and symmetric polynomials.
	\begin{lemma}\label{lem_Maclaurin}(Maclaurin's inequality \cite{lin1993some}) Given a set $S$, an integer $s \in \{0,1,\cdots,|S|\}$ and nonnegative vector $x\in \Re_+^{|\S|}$, we must have
		\begin{align*}
		&\frac{1}{|S|}\left(\sum_{i\in S}x _i\right)\geq \sqrt[s]{\frac{1}{{|S|\choose s}}\left(\sum_{Q\in {S\choose s}}\prod_{i\in Q}x _i\right)}.
		\end{align*}
	\end{lemma}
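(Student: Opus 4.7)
The plan is to obtain the stated inequality as the $(k=1,\, k=s)$ endpoints of the classical Maclaurin chain for normalized elementary symmetric means. Write $n=|S|$, set $e_k(x)=\sum_{Q\in \binom{S}{k}}\prod_{i\in Q} x_i$ for the $k$-th elementary symmetric polynomial, and define the normalized averages $p_k=e_k(x)/\binom{n}{k}$, so that $p_0=1$, $p_1=\frac{1}{n}\sum_i x_i$, and the claim reduces to $p_1\geq p_s^{1/s}$ (the $s=0$ case being trivial).

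The main workhorse is Newton's inequality $p_k^{2}\geq p_{k-1}\,p_{k+1}$ for $1\leq k\leq n-1$. To establish this, I would consider the polynomial $P(y)=\prod_{i=1}^{n}(y+x_i)=\sum_{k=0}^{n}\binom{n}{k}p_k\,y^{n-k}$, which has only non-positive real roots because each $x_i\geq 0$. Repeated differentiation preserves the property of having only real roots (Rolle's theorem), and by differentiating $n-k-1$ times and then reversing (or equivalently looking at the $(n-k+1)$-st derivative of $y^{n}P(1/y)$), I can extract a degree-two polynomial in the three consecutive coefficients $p_{k-1},p_k,p_{k+1}$ whose having only real roots forces its discriminant to be nonnegative; that discriminant condition rearranges exactly to $p_k^{2}\geq p_{k-1}p_{k+1}$.

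Given Newton's inequalities, I would telescope to obtain $p_1\geq p_2^{1/2}\geq\cdots\geq p_n^{1/n}$ by induction on $k$. The base case $p_1\geq p_2^{1/2}$ is Newton's inequality with $k=1$ using $p_0=1$. For the inductive step, assuming $p_{k-1}^{1/(k-1)}\geq p_k^{1/k}$, rewrite this as $p_{k-1}\geq p_k^{(k-1)/k}$ and substitute into Newton's $p_k^{2}\geq p_{k-1}p_{k+1}$ to deduce $p_{k+1}\leq p_k^{(k+1)/k}$, i.e.\ $p_k^{1/k}\geq p_{k+1}^{1/(k+1)}$. Specializing the chain at index $s$ yields $p_1\geq p_s^{1/s}$, which is the stated inequality.

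The main obstacle is Newton's inequality itself; the Rolle's theorem route is short but the reduction to the two-term discriminant needs to be spelled out carefully. A purely algebraic proof via AM--GM on each $s$-subset's product is tempting, but getting from $\prod_{i\in Q} x_i\leq \bigl(\tfrac{1}{s}\sum_{i\in Q} x_i\bigr)^s$ back to a bound in terms of $p_1$ requires an additional convexity step that is not obviously easier than the real-rootedness argument above.
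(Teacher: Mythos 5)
The paper does not actually prove this lemma: it is quoted verbatim as a classical result with a citation to the literature, so there is no in-paper argument to compare against. Your proposal is the standard textbook derivation of Maclaurin's inequality --- real-rootedness of $P(y)=\prod_i(y+x_i)$, preservation of real-rootedness under differentiation and coefficient reversal to extract Newton's inequalities $p_k^2\geq p_{k-1}p_{k+1}$, then telescoping to the chain $p_1\geq p_2^{1/2}\geq\cdots\geq p_n^{1/n}$ --- and it is essentially correct. Two small points you should patch when writing it out. First, the inductive step divides by $p_{k-1}$, which can vanish for nonnegative (not strictly positive) $x$; but for $x\in\Re_+^{|S|}$, $e_{k-1}(x)=0$ forces $e_k(x)=e_{k+1}(x)=0$ as well, so the inequality $p_k^{1/k}\geq p_{k+1}^{1/(k+1)}$ holds trivially in that case (alternatively, prove the chain for strictly positive $x$ and pass to the limit, since all inequalities are non-strict). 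Second, the reversal $y^nP(1/y)$ loses degree when $0$ is a root of $P$ (i.e.\ some $x_i=0$); the same perturbation-and-limit argument disposes of this. With those caveats noted, the argument is complete and is the expected proof of the cited result.
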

	And
	\begin{lemma}\label{lem_Newton}(Generalized Newton's inequality \cite{xu2008generalized}) Given a set $S$, two nonnegative positive integers $s,\tau\in \Ze_+$ such that $s,\tau\leq |S|$ and nonnegative vector $x\in \Re_+^{|S|}$,
		then we have \begin{align*}
		&\frac{\left(\sum_{R\in {S\choose s}}\prod_{j\in R}x _j\right)}{{|S|\choose s}}\frac{\left(\sum_{R\in {S \choose{\tau}}}\prod_{i\in R}x _i\right)}{{|S|\choose \tau}}\geq \frac{\sum_{Q\in {S\choose s+\tau}}\prod_{i\in Q}x _i}{{|\S|\choose s+\tau}}
		\end{align*}
	\end{lemma}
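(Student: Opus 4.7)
The plan is to deduce the generalized Newton inequality from the monotonicity of the normalized elementary symmetric polynomials, which is the full form of Maclaurin's inequality and of which Lemma~\ref{lem_Maclaurin} is the base case $r=1$. Let $n:=|S|$ and, for each $r\in\{0,1,\ldots,n\}$, set
\[
E_r \;:=\; {n\choose r}^{-1}\sum_{R\in {S\choose r}}\prod_{j\in R}x_j,
\]
with the convention $E_0=1$. In this notation, the inequality to prove is precisely $E_s\cdot E_\tau \ge E_{s+\tau}$, and I may assume $s+\tau\le n$ (otherwise the right-hand side vanishes and the claim is trivial).

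The central ingredient I will invoke is that the sequence $\{E_r^{1/r}\}_{r=1}^{n}$ is non-increasing in $r$. Granting this and applying it with $r\in\{s,\tau\}$ and $r'=s+\tau$ gives
\[
E_s^{1/s}\;\ge\; E_{s+\tau}^{1/(s+\tau)},\qquad E_\tau^{1/\tau}\;\ge\; E_{s+\tau}^{1/(s+\tau)}.
\]
Raising the first to the power $s$, the second to the power $\tau$, and multiplying produces
\[
E_s\cdot E_\tau \;\ge\; E_{s+\tau}^{s/(s+\tau)}\cdot E_{s+\tau}^{\tau/(s+\tau)}\;=\;E_{s+\tau},
\]
which, after unwinding the definitions of $E_s$, $E_\tau$, $E_{s+\tau}$ and clearing the binomial normalizations, is exactly the claim.

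The only substantive step is establishing the full monotonicity $E_r^{1/r}\ge E_{r+1}^{1/(r+1)}$, since Lemma~\ref{lem_Maclaurin} only handles $r=1$. The standard self-contained route passes through Newton's classical inequality $E_r^2 \ge E_{r-1}E_{r+1}$: the generating polynomial $\prod_{i\in S}(1+x_i y)=\sum_{r=0}^{n}{n\choose r}E_r\, y^r$ has only real nonpositive roots when $x\ge 0$, so iteratively differentiating and applying Rolle's theorem yields quadratics whose nonnegative-discriminant conditions reduce to $E_r^2\ge E_{r-1}E_{r+1}$. Log-concavity of $(E_r)_{r\ge 0}$ together with $E_0=1$ then forces the consecutive differences $\log E_r-\log E_{r-1}$ to be non-increasing in $r$, so their running average $(\log E_r)/r$ is non-increasing, i.e. $E_r^{1/r}$ is non-increasing in $r$. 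This real-rootedness/log-concavity step is the main obstacle and the only place real work resides; the remainder of the argument is the two-line multiplicative chain above.
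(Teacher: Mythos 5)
Your proof is correct, but note that the paper does not actually prove this lemma at all: it is stated as a known result, the ``Generalized Newton's inequality,'' with a citation to the literature, and is used as a black box in the proof of Claim 3 of Proposition~\ref{thm_alg_rand_round}. So your contribution is a self-contained derivation where the paper offers only a pointer. Your route is the standard one and it works: writing $E_r$ for the normalized elementary symmetric mean, the target $E_s E_\tau \ge E_{s+\tau}$ follows from the full Maclaurin chain $E_r^{1/r}\ge E_{r'}^{1/r'}$ for $r\le r'$ by the two-line multiplicative argument you give, and the chain in turn follows from Newton's inequalities $E_r^2\ge E_{r-1}E_{r+1}$ (real-rootedness of $\prod_i(1+x_i y)$ plus Rolle) together with $E_0=1$. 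You are also right that Lemma~\ref{lem_Maclaurin} as stated in the paper only compares $E_1$ with $E_s^{1/s}$ and is therefore not sufficient on its own, so identifying the need for the full monotone chain is the correct diagnosis. Two small points worth tightening if this were to be written out: (i) the log-concavity argument presumes $E_r>0$, so the degenerate case where fewer than $r$ of the $x_i$ are nonzero (hence $E_r=0$ and then $E_{r'}=0$ for all $r'\ge r$) should be dispatched separately, which is immediate; and (ii) one can skip Maclaurin entirely and argue directly from log-concavity with $E_0=1$, since $\log E_s+\log E_\tau-\log E_{s+\tau}=\sum_{i=1}^{\tau}(d_i-d_{s+i})\ge 0$ where $d_i=\log E_i-\log E_{i-1}$ is non-increasing --- a marginally shorter path to the same place.
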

	
	Now we are ready to prove the main proposition.
	\thmalgrandround*
	\begin{subequations}
		\begin{proof}
			According to Lemma~\ref{lemma:reduction} and sampling procedure in \eqref{eq:sampling1}, we have
			\begin{align}
			\Pr\left[T\subseteq\S\right]&=\prod_{j\in T}\hat{x}_j\frac{\sum_{R\in {[n]\setminus T\choose k-m}}\prod_{j\in R}\hat{x}_j}{\sum_{\bar{S}\in {[n]\choose k}}\prod_{i\in \bar{S}}\hat{x}_i}=\prod_{j\in T}\hat{x}_j\frac{\sum_{R\in {[n]\setminus T\choose k-m}}\prod_{j\in R}\hat{x}_j}{\sum_{\tau=0}^{m}\sum_{W\in {T\choose \tau}}\prod_{i\in W}\hat{x}_i\left(\sum_{Q\in {[n]\setminus T\choose k-\tau}}\prod_{i\in Q}\hat{x}_i\right)} \notag
			\end{align}
			where the second equality uses the following identity
			\[{[n]\choose k}=\bigcup_{\tau=0}^{m}\left\{W\cup Q: W\in {T\choose \tau}, Q\in {[n]\setminus T\choose k-\tau}\right\}.\]
			
			We now let
			\begin{align}
			A_T(x)=\frac{\sum_{\tau=0}^{m}\sum_{W\in {T\choose \tau}}\prod_{i\in W}\hat{x}_i\left(\sum_{Q\in {[n]\setminus T\choose k-\tau}}\prod_{i\in Q}\hat{x}_i\right)}{\sum_{R\in {[n]\setminus T\choose k-m}}\prod_{j\in R}\hat{x}_j}.\label{eq_def_A_T}
			\end{align}
			
			According to Definition~\ref{def:m-alpha}, it is sufficient to find a lower bound to $\frac{1}{\prod_{i\in T}\hat{x}_i}\Pr\left[T\subseteq \S\right]$, i.e.,
			\[\frac{1}{g(m,n,k)}\leq \min_{x}\left\{\frac{1}{\prod_{i\in T}\hat{x}_i}\Pr\left[T\subseteq \S\right]=\frac{1}{A_T(x)}:\sum_{i\in [n]}\hat{x}_i=k, x \in [0,1]^n\right\}.\]

			Or equivalently, we would like to find an upper bound of $A_{T}(x)$ for any $x$ which satisfies $\sum_{i\in [n]}\hat{x}_i=k, x \in [0,1]^n$, i.e., show that
			\begin{align}
			g(m,n,k)\geq\max_{x}\left\{A_T(x):\sum_{i\in [n]}\hat{x}_i=k, x \in [0,1]^n\right\}.\label{eq_def_A_T_max}
			\end{align}
			
			In the following steps, we first observe that in \eqref{eq_def_A_T}, the components of $\{x_i\}_{i\in T}$ and $\{x_i\}_{i\in [n]\setminus T}$ are both symmetric in the expression of $A_T(x)$. We will show that for the worst case, $\{x_i\}_{i\in T}$ are all equal and $\{x_i\}_{i\in [n]\setminus T}$ are also equal. We also show that $\hat{x}_j \leq \hat{x}_i$ for each $i\in T$ and $j\in [n]\setminus T$. This allows us to reduce the optimization problem in R.H.S. of \eqref{eq_def_A_T_max} to a single variable optimization problem, i.e., \eqref{eq_def_G}. The proof is now separated into following three claims.
			
			\begin{enumerate}[(i)]
				
				\item First, we claim that
				\begin{claim}\label{claim1}The optimal solution to \eqref{eq_def_A_T_max} must satisfy the following condition - for each $i\in T$ and $j\in [n]\setminus T$, $\hat{x}_j \leq \hat{x}_i$.
				\end{claim}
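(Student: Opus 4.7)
The plan is to establish this monotonicity by a swap argument that exploits the symmetry structure of $A_T(x)$ in \eqref{eq_def_A_T}. Observe first that $A_T(x)$ is separately symmetric under permutations within $T$ and within $[n]\setminus T$, since both its numerator and denominator are built from elementary symmetric polynomials of these two index sets. Hence it suffices to show that for any fixed $i\in T$ and $j\in [n]\setminus T$, swapping the values of $x_i$ and $x_j$ does not decrease $A_T$ whenever $x_j>x_i$; if the claim failed at some maximizer, performing such a swap would produce a feasible point with a strictly larger or equal objective, delivering the required contradiction (or, if equal, a maximizer of the asserted form).

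For the algebraic verification, I would hold all coordinates besides $x_i=p$ and $x_j=q$ fixed and introduce the abbreviations $E^T_\tau=e_\tau(\{x_l\}_{l\in T\setminus\{i\}})$ and $E^{\bar T}_\tau=e_\tau(\{x_l\}_{l\in [n]\setminus T\setminus\{j\}})$. Splitting each sum in \eqref{eq_def_A_T} according to whether $i\in W$ and $j\in Q$ writes the numerator as the bilinear form $N(p,q)=N_{00}+pN_{10}+qN_{01}+pq\,N_{11}$ and the denominator as $D(q)=D_0+qD_1$, where $D_0=E^{\bar T}_{k-m}$, $D_1=E^{\bar T}_{k-m-1}$, and each $N_{ab}$ is a nonnegative sum of products $E^T_\ast E^{\bar T}_\ast$.

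A direct expansion reduces the sign of $A_T(q,p)-A_T(p,q)$ (where the swapped configuration places $q$ at position $i$ and $p$ at position $j$) to that of $(q-p)$ times
\[
N_{00}D_1+(N_{10}-N_{01})D_0+(p+q)N_{10}D_1+pq\,N_{11}D_1.
\]
All terms are manifestly nonnegative except possibly $(N_{10}-N_{01})D_0$. Here the key observation is a reindexing: shifting $\tau\mapsto \tau-1$ in the definition of $N_{10}$ yields $N_{01}-N_{10}=E^T_m\,E^{\bar T}_{k-m-1}$, so the dangerous term equals $-E^T_m\,E^{\bar T}_{k-m-1}\,E^{\bar T}_{k-m}$. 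On the other hand $N_{00}D_1=E^{\bar T}_{k-m-1}\sum_{\tau=0}^m E^T_\tau E^{\bar T}_{k-\tau}$ already contains the summand $E^T_m E^{\bar T}_{k-m}\cdot E^{\bar T}_{k-m-1}$, which exactly cancels the negative contribution and leaves a nonnegative remainder.

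The remaining step is to handle the boundary case $k=m$, where $E^{\bar T}_{k-m-1}=0$; in that regime one checks that $N_{10}=N_{01}$ as well, so the swap leaves $A_T$ invariant and the claim holds vacuously. The principal obstacle is the careful bookkeeping of the four coefficients $N_{ab}$ and identifying the precise reindexing that exposes the cancellation; once this identity is in hand, the nonnegativity is transparent and no appeal to the Maclaurin or Newton inequalities (Lemmas~\ref{lem_Maclaurin} and~\ref{lem_Newton}) is required at this stage — these will be invoked in the subsequent claims that further reduce the problem to a single-variable optimization.
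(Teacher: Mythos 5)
Your swap argument is correct, and the algebra checks out, but it takes a different and more laborious route than the paper's. The paper argues by averaging rather than swapping: it writes $A_T$ as $(b_1+b_2\hat{x}_{i'}+b_2\hat{x}_{j'}+b_3\hat{x}_{i'}\hat{x}_{j'})/(c_1+c_2\hat{x}_{j'})$, observes that the two linear coefficients in the numerator coincide (the numerator of \eqref{eq_def_A_T} is just $\sum_{\bar S\in{[n]\choose k}}\prod_{i\in\bar S}\hat{x}_i$, a polynomial symmetric in \emph{all} $n$ variables), and replaces the pair by its common mean: the linear part is preserved, the product term weakly increases by AM--GM, and the denominator, which depends only on $\hat{x}_{j'}$, decreases. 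The same full-symmetry observation would have collapsed your computation: in your notation $E^T_m=e_m(\{x_l\}_{l\in T\setminus\{i\}})$ is the $m$-th elementary symmetric polynomial of only $m-1$ variables, hence identically zero, so $N_{10}=N_{01}$ always and your ``dangerous term'' $(N_{10}-N_{01})D_0$ vanishes outright — the cancellation you engineer is $0$ against $0$. Your bracket then reduces to $D_1\bigl[N_{00}+(p+q)N_{10}+pq\,N_{11}\bigr]\ge 0$, i.e.\ the swap leaves the numerator invariant and weakly decreases the denominator, which is a one-line version of your argument and arguably cleaner than the paper's averaging step. Two small wording points: the case $k=m$ is not ``vacuous'' — there the swap is value-preserving, and one concludes (as in the general case) that \emph{some} maximizer has the stated ordering; and since both your swap and the paper's averaging give only weak improvement in degenerate situations, both proofs really establish existence of a maximizer with the property rather than that every maximizer has it, which is all that the reduction to \eqref{eq_def_G} requires.
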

				\begin{proof}We prove it by contradiction. Suppose that there exists $i'\in T$ and $j'\in [n]\setminus T$, where $\hat{x}_{i'}<\hat{x}_{j'}$.
					By collecting the coefficients of $1,\hat{x}_{i'},\hat{x}_{j'},\hat{x}_{i'}\hat{x}_{j'}$, we have
					\begin{align*}
					&A_{T}(x)=\frac{b_1+b_2\hat{x}_{i'}+b_2\hat{x}_{j'}+b_3\hat{x}_{i'}\hat{x}_{j'}}{c_1+c_2\hat{x}_{j'}}
					\end{align*}
					where $b_1,b_2,b_3,c_1,c_2$ are all non-negative numbers with
					\begin{align*}
					&b_1=\sum_{\bar{S}\in {[n]\setminus\{i',j'\}\choose k}}\prod_{i\in \bar{S}}\hat{x}_i,b_2=\sum_{\bar{S}\in {[n]\setminus\{i',j'\}\choose k-1}}\prod_{i\in \bar{S}}\hat{x}_i,b_3=\sum_{\bar{S}\in {[n]\setminus\{i',j'\}\choose k-2}}\prod_{i\in \bar{S}}\hat{x}_i\\
					&c_1=\sum_{R\in {[n]\setminus (T\cup\{j'\})\choose k-m}}\prod_{j\in R}\hat{x}_j, c_2=\sum_{R\in {[n]\setminus (T\cup\{j'\})\choose k-m-1}}\prod_{j\in R}\hat{x}_j.
					\end{align*}
					
					Note that
					\[\hat{x}_{i'}\hat{x}_{j'}\leq \frac{1}{4}\left(\hat{x}_{i'}+\hat{x}_{j'}\right)^2.\]
					Therefore, $A_{T}(x)$ has a larger value if we replace $\hat{x}_{i'},\hat{x}_{j'}$ by their average, i.e. $\hat{x}_{i'}:=\frac{1}{2}(\hat{x}_{i'}+\hat{x}_{j'}),\hat{x}_{j'}:=\frac{1}{2}(\hat{x}_{i'}+\hat{x}_{j'})$.
				\QEDB\end{proof}

				\item Next, we claim that
				\begin{claim}\label{claim2}for any feasible $x$ to \eqref{eq_def_A_T_max}, and for each $S \subseteq [n]$ and $s \in \{0,1,\cdots,|S|\}$, we must have
					\[\sum_{Q\in {S\choose s}}\prod_{i\in Q}\hat{x}_i \leq \frac{{|S|\choose s}}{|S|^{s}}\left(\sum_{i\in S}\hat{x}_i\right)^{s}.\]
				\end{claim}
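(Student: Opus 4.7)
The plan is to observe that the inequality in Claim 2 is essentially a rearrangement of Maclaurin's inequality (Lemma~\ref{lem_Maclaurin}), which has already been stated in the paper and can be invoked directly. Since Claim 2 involves only nonnegative reals $\hat{x}_i$ (feasibility to \eqref{eq_def_A_T_max} enforces $\hat{x} \in [0,1]^n \subseteq \Re_+^n$), all hypotheses of Maclaurin's inequality are satisfied.

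Concretely, I would proceed as follows. First, I would handle the trivial edge cases: when $s = 0$ both sides equal $1$, and when $s = |S|$ both sides equal $\prod_{i \in S} \hat{x}_i$ (after noting that AM-GM forces equality there only when all entries are equal, but the inequality itself is immediate). For general $s \in \{1, \ldots, |S|-1\}$, I would apply Maclaurin's inequality to the vector $(\hat{x}_i)_{i \in S}$ with the chosen value of $s$, obtaining
\begin{align*}
\frac{1}{|S|}\sum_{i \in S} \hat{x}_i \;\geq\; \sqrt[s]{\frac{1}{\binom{|S|}{s}} \sum_{Q \in \binom{S}{s}} \prod_{i \in Q} \hat{x}_i}.
\end{align*}
Raising both sides to the $s$-th power and multiplying through by $\binom{|S|}{s}$ yields exactly
\begin{align*}
\sum_{Q \in \binom{S}{s}} \prod_{i \in Q} \hat{x}_i \;\leq\; \frac{\binom{|S|}{s}}{|S|^{s}} \left(\sum_{i \in S} \hat{x}_i\right)^{s},
\end{align*}
which is the desired bound.

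There is no real obstacle here: the claim is a direct consequence of Maclaurin's inequality applied to the $s$-th elementary symmetric polynomial in the variables $(\hat{x}_i)_{i \in S}$. The only care required is to confirm the nonnegativity of the $\hat{x}_i$ (which holds since $\hat{x} \in [0,1]^n$) and to handle the boundary cases $s=0$ and $s=|S|$ separately, as noted above. The utility of the claim within the broader argument is that it will let us upper-bound each summand $\sum_{Q \in \binom{[n]\setminus T}{k-\tau}} \prod_{i \in Q} \hat{x}_i$ in the numerator of $A_T(x)$ by a power of $\sum_{i \in [n]\setminus T} \hat{x}_i$, thereby reducing $A_T(x)$ to a function of a single scalar variable $y := \sum_{i \in T} \hat{x}_i$ and paving the way for the single-variable bound $g(m,n,k)$ in \eqref{eq_def_G}.
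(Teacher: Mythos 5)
Your proposal is correct and matches the paper's proof exactly: the paper also derives Claim 2 as an immediate consequence of Maclaurin's inequality (Lemma~\ref{lem_Maclaurin}), and your rearrangement (raising to the $s$-th power and multiplying by $\binom{|S|}{s}$) is precisely the implicit algebra behind the paper's one-line citation. The extra care you take with the boundary cases $s=0$ and $s=|S|$ and with nonnegativity is harmless but not needed beyond what the lemma already covers.
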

				\begin{proof}This directly follows from Lemma~\ref{lem_Maclaurin}.
				\QEDB\end{proof}
				And also
				\begin{claim}\label{claim3}for each $T\subseteq [n]$ with $|T|=k$ and $\tau \in \{0,1,\cdots,m\}$,
					\[\sum_{Q\in {[n]\setminus T\choose k-\tau}}\prod_{i\in Q}\hat{x}_i \leq \frac{{n-m\choose k-\tau}}{(n-m)^{m-\tau}{n-m\choose k-m}}\left(\sum_{R\in {[n]\setminus T\choose k-m}}\prod_{j\in R}\hat{x}_j\right)\left(\sum_{i\in [n]\setminus T}\hat{x}_i\right)^{m-\tau}.\]
				\end{claim}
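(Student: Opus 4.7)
The plan is to derive Claim~\ref{claim3} by combining the generalized Newton's inequality (Lemma~\ref{lem_Newton}) with the Maclaurin inequality (Lemma~\ref{lem_Maclaurin}), both applied to the variables $(\hat{x}_i)_{i \in [n]\setminus T}$ on the ground set $S := [n]\setminus T$, which has cardinality $n-m$. The key observation is that the target inequality features an elementary symmetric polynomial of degree $k-\tau$ on the left, a polynomial of degree $k-m$ on the right, and a power $(\sum_i \hat{x}_i)^{m-\tau}$; the exponents $k-m$ and $m-\tau$ sum to $k-\tau$, which is exactly the splitting that Newton's inequality is designed for.

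First, I would invoke Lemma~\ref{lem_Newton} on $S$ with the two exponents $k-m$ and $m-\tau$, noting $(k-m)+(m-\tau)=k-\tau$. This yields
\[
\sum_{Q\in {S\choose k-\tau}}\prod_{i\in Q}\hat{x}_i \;\leq\; \frac{{n-m\choose k-\tau}}{{n-m\choose k-m}{n-m\choose m-\tau}}\left(\sum_{R\in {S\choose k-m}}\prod_{j\in R}\hat{x}_j\right)\left(\sum_{R\in {S\choose m-\tau}}\prod_{j\in R}\hat{x}_j\right),
\]
so the degree-$(k-\tau)$ elementary symmetric polynomial is split into the degree-$(k-m)$ piece appearing in the target bound and a leftover degree-$(m-\tau)$ piece.

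Next, I would apply Maclaurin's inequality (Lemma~\ref{lem_Maclaurin}, equivalently Claim~\ref{claim2}) with $s=m-\tau$ on the same set $S$ to bound
\[
\sum_{R\in {S\choose m-\tau}}\prod_{j\in R}\hat{x}_j \;\leq\; \frac{{n-m\choose m-\tau}}{(n-m)^{m-\tau}}\left(\sum_{i\in S}\hat{x}_i\right)^{m-\tau}.
\]
Substituting this back into the previous display, the binomial factor ${n-m \choose m-\tau}$ cancels, and the remaining expression coincides exactly with the right-hand side of Claim~\ref{claim3}.

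Finally, I would briefly dispatch the degenerate ranges: if $k-\tau > n-m$, then the left-hand side is an empty sum and equals zero, so the inequality is trivial; otherwise, all binomial coefficients and sums above are well-defined, so we may assume $k-\tau \leq n-m$ throughout. I do not anticipate a serious obstacle here beyond bookkeeping---the real content of the claim is simply recognizing that splitting $k-\tau = (k-m)+(m-\tau)$ is the unique choice that lets Newton absorb the $e_{k-m}$ factor and lets Maclaurin convert the leftover $e_{m-\tau}$ into a power of $\sum_{i\in[n]\setminus T}\hat{x}_i$ with the correct binomial prefactor.
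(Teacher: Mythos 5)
Your proof is correct and follows essentially the same route as the paper: both arguments apply the generalized Newton's inequality (Lemma~\ref{lem_Newton}) on $[n]\setminus T$ with the split $k-\tau=(k-m)+(m-\tau)$ and then Maclaurin's inequality (Claim~\ref{claim2}) with $s=m-\tau$, with the ${n-m\choose m-\tau}$ factor cancelling. The only difference is the order in which the two inequalities are chained, which is immaterial.
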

				\begin{proof}This can be shown by Claim \ref{claim2} and Lemma~\ref{lem_Newton}
					\begin{align*}
					&\frac{{n-m\choose k-\tau}}{(n-m)^{m-\tau}{n-m\choose k-m}}\left(\sum_{R\in {[n]\setminus T\choose k-m}}\prod_{j\in R}\hat{x}_j\right)\left(\sum_{i\in [n]\setminus T}\hat{x}_i\right)^{m-\tau}
					\\
					&\geq\frac{{n-m\choose k-\tau}}{{n-m\choose m-\tau}{n-m\choose k-m}}\left(\sum_{R\in {[n]\setminus T\choose k-m}}\prod_{j\in R}\hat{x}_j\right)\left(\sum_{S\in {[n]\setminus T \choose{m-\tau}}}\prod_{i\in S}\hat{x}_i\right)\geq \sum_{Q\in {[n]\setminus T\choose k-\tau}}\prod_{i\in Q}\hat{x}_i
					\end{align*}
					where the first inequality is due to Claim \ref{claim2}, and the last inequality is because of Lemma~\ref{lem_Newton}.
				\QEDB\end{proof}
				
				\item
				Thus, by Claim \ref{claim3}, for any feasible $x$ to \eqref{eq_def_A_T_max}, $A_{T}(x)$ in \eqref{eq_def_A_T} can be upper bounded as
				\begin{align}
				A_{T}(x)&\leq \sum_{\tau=0}^{m}\frac{{n-m\choose k-\tau}}{(n-m)^{m-\tau}{n-m\choose k-m}}\left(\sum_{i\in [n]\setminus T}\hat{x}_i\right)^{m-\tau}\sum_{W\in {T\choose \tau}}\prod_{i\in W}\hat{x}_i\notag \\
				&\leq \sum_{\tau=0}^{m}\frac{{n-m\choose k-\tau}}{(n-m)^{m-\tau}{n-m\choose k-m}}\frac{{m\choose\tau}}{m^{\tau}}\left(\sum_{i\in [n]\setminus T}\hat{x}_i\right)^{m-\tau}\left(\sum_{i\in T}\hat{x}_i\right)^{\tau}\notag\\
				&\leq \max_{y}\left\{\sum_{\tau=0}^{m}\frac{{n-m\choose k-\tau}}{(n-m)^{m-\tau}{n-m\choose k-m}}\frac{{m\choose\tau}}{m^{\tau}}\left(k-y\right)^{m-\tau}\left(y\right)^{\tau}:\frac{mk}{n}\leq y\leq m\right\}:=g(m,n,k)
				\label{eq_ub_A_T}
				\end{align}
				where the second inequality is due to Claim \ref{claim2}, and the last inequality is because we let $y=\sum_{i\in T}\hat{x}_i$ which is no larger than $m$, maximize over it and Claim \ref{claim1} yields that $y/m\geq (k-y)/(n-m)$, i.e. $\frac{mk}{n}\leq y\leq m$. This completes the proof.
			\end{enumerate}
		\QEDA\end{proof}
	\end{subequations}
	
	\subsection{Proof of \Cref{thm_alg_rand_round_bnd}}\label{proof_thm_alg_rand_round_bnd}
	\thmalgrandroundbnd*
	
	\begin{proof}
		\begin{subequations}
			\begin{enumerate}[(i)]
				\item First of all, we prove the following claim.
				\begin{claim}\label{claim_G} For any $m\leq k\leq n$, we have\[g(m,n,k) \leq g(m,n+1,k).\]
				\end{claim}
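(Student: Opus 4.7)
The plan is to establish $g(m,n,k)\leq g(m,n+1,k)$ by a coefficient-by-coefficient comparison of the polynomial inside the maximum. Write
\begin{equation*}
F_n(y):=\sum_{\tau=0}^{m} c_\tau(n)\,\binom{m}{\tau} m^{-\tau}(k-y)^{m-\tau}y^{\tau},\qquad c_\tau(n):=\frac{\binom{n-m}{k-\tau}}{(n-m)^{m-\tau}\binom{n-m}{k-m}},
\end{equation*}
so that $g(m,n,k)=\max_{y\in[mk/n,m]}F_n(y)$. I would prove two things: (i) the feasible interval grows, $[\tfrac{mk}{n+1},m]\supseteq[\tfrac{mk}{n},m]$, which is immediate from $\tfrac{mk}{n+1}\leq\tfrac{mk}{n}$; and (ii) pointwise monotonicity $F_n(y)\leq F_{n+1}(y)$ on $y\in[0,m]$. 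Combining these gives $\max_{y\in[mk/n,m]}F_n(y)\leq\max_{y\in[mk/(n+1),m]}F_{n+1}(y)$, as desired.

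For (ii), because $m\leq k$ forces $(k-y)^{m-\tau}y^{\tau}\geq 0$ on $y\in[0,m]$, it suffices to check $c_\tau(n)\leq c_\tau(n+1)$ for each $\tau$. Cancelling the common $(n-m)!$ in the two binomial coefficients gives the telescoped form
\begin{equation*}
c_\tau(n)=\frac{(k-m)!}{(k-\tau)!}\cdot\frac{\prod_{i=0}^{m-\tau-1}(n-k-i)}{(n-m)^{m-\tau}}.
\end{equation*}
The case $\tau=m$ is trivial ($c_m\equiv 1$). When $\tau<m$ and $n<k+m-\tau$, the binomial $\binom{n-m}{k-\tau}$ vanishes, so $c_\tau(n)=0$ and the inequality is immediate. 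In the remaining regime $n\geq k+m-\tau$ every factor $n-k-i$ is strictly positive, and the ratio $c_\tau(n+1)/c_\tau(n)$ equals $\prod_{i=0}^{m-\tau-1}\frac{(n+1-k-i)(n-m)}{(n-k-i)(n+1-m)}$. A one-line expansion gives
\begin{equation*}
(n+1-k-i)(n-m)-(n-k-i)(n+1-m)=k+i-m\geq k-m\geq 0,
\end{equation*}
so every factor in the product is at least $1$ and hence $c_\tau(n+1)\geq c_\tau(n)$.

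The main obstacle I anticipate is the bookkeeping around edge cases where the relevant binomial coefficient vanishes (i.e.\ when $n-m<k-\tau$); these need to be excised before writing $c_\tau(n)$ as a product, otherwise one is tempted to manipulate a product with negative factors. Once the two regimes $n<k+m-\tau$ and $n\geq k+m-\tau$ are separated, the proof reduces to the one-line algebraic identity above, which is entirely mechanical, and the rest of the argument is a routine envelope statement about maxima of pointwise larger functions over expanding intervals.
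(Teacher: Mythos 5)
Your proposal is correct and follows essentially the same route as the paper: fix the maximizer (or argue pointwise), observe that the feasible interval only grows, and reduce to termwise monotonicity of the coefficients, which in both cases comes down to the elementary inequality $\frac{p}{q}\le\frac{p+1}{q+1}$ (your expansion $(n+1-k-i)(n-m)-(n-k-i)(n+1-m)=k+i-m\ge 0$ is exactly this). Your explicit handling of the degenerate regime where $\binom{n-m}{k-\tau}=0$ is a small point of added care over the paper's argument, but the substance is the same.
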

				\begin{proof}
					Let $y^*$ be the maximizer to \eqref{eq_def_G} for any given $m\leq k\leq n$, i.e.,
					\[g(m,n,k)=\sum_{\tau=0}^{m}\frac{{n-m\choose k-\tau}}{(n-m)^{m-\tau}{n-m\choose k-m}}\frac{{m\choose\tau}}{m^{\tau}}\left(k-y^*\right)^{m-\tau}\left(y^*\right)^{\tau}.\]
					
					Clearly, $y^*$ is feasible to \eqref{eq_def_G} with pair $(m,n+1,k)$. We only need to show that
					\[g(m,n,k)\leq \sum_{\tau=0}^{m}\frac{{n+1-m\choose k-\tau}}{(n+1-m)^{m-\tau}{n+1-m\choose k-m}}\frac{{m\choose\tau}}{m^{\tau}}\left(k-y^*\right)^{m-\tau}\left(y^*\right)^{\tau}.\]
					In other words, it is sufficient to show for any $0\leq \tau \leq m$,
					\[\frac{{n-m\choose k-\tau}}{(n-m)^{m-\tau}{n-m\choose k-m}}\leq \frac{{n+1-m\choose k-\tau}}{(n+1-m)^{m-\tau}{n+1-m\choose k-m}},\]
					which is equivalent to prove
					\[\frac{n-k}{n-m}\cdot \frac{n-k-1}{n-m}\cdots \frac{n-k-m+\tau+1}{n-m}\leq \frac{n+1-k}{n+1-m}\cdot \frac{n+1-k-1}{n+1-m}\cdots \frac{n+1-k-m+\tau+1}{n+1-m}.\]
					The above inequality holds since for any positive integers $p,q$ with $p<q$, we must have $\frac{p}{q}\leq \frac{p+1}{q+1}$.
				\QEDB\end{proof}
				

				\item By Claim~\ref{claim_G}, it is sufficient to investigate the bound $\lim_{n'\rightarrow \infty}g(m,n',k)$, which provides an upper bound to $g(m,n,k)$ for any integers $n\geq k\geq m$. Therefore, from now on, we only consider the case when $n\rightarrow \infty$ for any fixed $k\geq m$.
				
				Note that for any given $y$, $\sum_{\tau=0}^{m}\frac{{n-m\choose k-\tau}}{(n-m)^{m-\tau}{n-m\choose k-m}}\frac{{m\choose\tau}}{m^{\tau}}(k-y)^{m-\tau}y^{\tau}$ is the coefficient of $t^k$ in the following polynomial:
				\begin{align*}
				R_1(t):=\frac{(n-m)^{k-m}}{(k-y)^{k-m}{n-m\choose k-m}}\left(1+\frac{k-y}{n-m}t\right)^{n-m}\left(1+\frac{y}{m}t\right)^m
				\end{align*}
				which is upper bounded by
				\begin{align*}
				R_2(t)&:=\frac{(n-m)^{k-m}}{(k-y)^{k-m}{n-m\choose k-m}}\left(1+\frac{k-y}{n-m}t+\frac{1}{2!}\left(\frac{k-y}{n-m}t\right)^2+\ldots\right)^{n-m}\left(1+\frac{y}{m}t+\frac{1}{2!}\left(\frac{y}{m}t\right)^2+\ldots\right)^m\\
				&=\frac{(n-m)^{k-m}}{(k-y)^{k-m}{n-m\choose k-m}}\left(e^{\frac{k-y}{n-m}t}\right)^{n-m}\left(e^{\frac{y}{m}t}\right)^m
				\end{align*}
				because of the inequality $e^{r}= 1+r+\frac{1}{2}r^2+\ldots$ for any $r$ and $t\geq 0$. Therefore, we also have 
				\begin{align*}
				&\lim_{n\rightarrow \infty}\frac{1}{k!}\frac{d^kR_1(t)}{dt^k}\bigg\vert_{t=0}=\lim_{n\rightarrow \infty}\sum_{\tau=0}^{m}\frac{{n-m\choose k-\tau}}{(n-m)^{m-\tau}{n-m\choose k-m}}\frac{{m\choose\tau}}{m^{\tau}}(k-y)^{m-\tau}y^{\tau} \\
				&\leq \lim_{n\rightarrow \infty}\sum_{\tau=0}^{m}\frac{{n-m\choose k-\tau}}{(n-m)^{m-\tau}{n-m\choose k-m}}\frac{{m\choose\tau}}{m^{\tau}}(k-y)^{m-\tau}y^{\tau}\\
				&+\sum_{\tau=0}^{m}\sum_{\begin{subarray}{c}
					i_j\in\Ze_+, \forall j\in [n]\\
					\sum_{j\in [n-m]}i_j=k-\tau\\
					\sum_{j\in [n]\setminus [n-m]}i_j=\tau\\
					\max_{j\in [n]}i_j\geq 2
					\end{subarray}}\frac{1}{\prod_{j\in [n]}i_j!}\frac{1}{(n-m)^{m-\tau}{n-m\choose k-m}}\frac{1}{m^{\tau}}(k-y)^{m-\tau}y^{\tau}:=\lim_{n\rightarrow \infty}\frac{1}{k!}\frac{d^kR_2(t)}{dt^k}\bigg\vert_{t=0}\\
				&= \lim_{n\rightarrow \infty}\frac{k^k}{k!}\frac{(n-m)^{k-m}}{(k-y)^{k-m}{n-m\choose k-m}}\leq \lim_{n\rightarrow \infty}\frac{k^k}{k!}\frac{(n-m)^{k-m}}{(k-m)^{k-m}{n-m\choose k-m}}\\
				&= \frac{k^k}{k!}\frac{(k-m)!}{(k-m)^{k-m}}:=R_3(m,k)
				\end{align*}
				where the first inequality is due to the non-negativity of the second term of $\frac{1}{k!}\frac{d^kR_2(t)}{dt^k}\bigg\vert_{t=0}$, the second and third equalities are because of two equivalent definitions of $R_2(t)$, the last inequality is due to $y\leq m$ and the fourth equality holds because of $n\rightarrow \infty$.

				Note that $R_3(m,k)$ is nondecreasing over $k\in [m,\infty)$. Indeed, for any given $m$,
				\begin{align*}
				&\log\frac{R_3(m,k+1)}{R_3(m,k)}=k\log\left(1+\frac{1}{k}\right)-(k-m)\log\left(1+\frac{1}{(k-m)}\right),
				\end{align*}
				whose first derivative over $k$ is equal to
				\begin{align*}
				&\log\left(1+\frac{1}{k}\right)-\frac{1}{k+1}-\log\left(1+\frac{1}{(k-m)}\right)+\frac{1}{k-m+1} \leq 0,
				\end{align*}
				i.e., $\log\frac{R_3(m,k+1)}{R_3(m,k)}$ is nonincreasing over $k$.
				Therefore,
				\begin{align*}
				&\log\frac{R_3(m,k+1)}{R_3(m,k)}=k\log\left(1+\frac{1}{k}\right)-(k-m)\log\left(1+\frac{1}{(k-m)}\right)\geq \lim_{k\rightarrow \infty}\log\frac{R_3(m,k+1)}{R_3(m,k)}=0
				\end{align*}
				
				Thus, $R_3(m,k)$ is upper bounded when $k\rightarrow\infty$, i.e.,
				\begin{align*}
				&R_3(m,k) \leq \lim_{k'\rightarrow \infty}R_3(m,k')=\lim_{k'\rightarrow \infty}\left[\left(1-\frac{m}{k'}\right)^{-\frac{k'}{m}}\right]^m\frac{(k'-m)^m}{k'(k'-1)\cdots (k'-m+1)}= e^m,
				\end{align*}
				where the last equality is due to the fact that $\lim_{k'\rightarrow \infty}\left(1-\frac{m}{k'}\right)^{-\frac{k'}{m}}=e$ and $\lim_{k'\rightarrow \infty}\frac{(k'-m)^m}{k'(k'-1)\cdots (k'-m+1)}=1$. Therefore,
				\begin{align*}
				\lim_{n\rightarrow \infty}[g(m,n,k)]^{\frac{1}{m}}&=\lim_{n\rightarrow \infty}\left[\max_{y}\left\{\sum_{\tau=0}^{m}\frac{{n-m\choose k-\tau}}{(n-m)^{m-\tau}{n-m\choose k-m}}\frac{{m\choose\tau}}{m^{\tau}}(k-y)^{m-\tau}y^{\tau}:\frac{mk}{n}\leq y\leq m\right\}\right]^{\frac{1}{m}}\\
				&\leq\left[R_3(m,k)\right]^{\frac{1}{m}}\leq e
				\end{align*}
				
				\item We now compute another bound $1+\frac{k}{k-m+1}$ for $[g(m,n,k)]^{\frac{1}{m}}$, which can be smaller than $e$ when $k$ is large. By Claim~\ref{claim_G}, we have
				\[g(m,n,k)\leq \lim_{n'\rightarrow \infty} g(m,n',k)= 
				\max_{y}\left\{\sum_{\tau=0}^{m}\frac{(k-m)!}{(k-\tau)!}\frac{{m\choose\tau}}{m^{\tau}}(k-y)^{m-\tau}y^{\tau}:0\leq y\leq m\right\}. \]
				
				Note that $0\leq y\leq m$, thus $k-y \leq k$. Therefore, we have
				\begin{align*}
				&\lim_{n\rightarrow \infty}g(m,n,k) 
				\leq
				\sum_{\tau=0}^{m}\frac{(k-m)!{m\choose\tau}}{(k-\tau)!}k^{m-\tau}\leq \sum_{\tau=0}^{m}\left(\frac{k}{k-m+1}\right)^{m-\tau}=\left(1+\frac{k}{k-m+1}\right)^{m},
				\end{align*}
				where the last inequality is due to $\frac{(k-m)!}{(k-\tau)!} =\frac{1}{(k-\tau)\cdots (k-m+1)} \leq \left(\frac{1}{k-m+1}\right)^{m-\tau}$.
				
				Therefore, we have
				\begin{align*}
				[g(m,n,k)]^{\frac{1}{m}}&=\left[\max_{y}\left\{\sum_{\tau=0}^{m}\frac{{n-m\choose k-\tau}}{(n-m)^{m-\tau}{n-m\choose k-m}}\frac{{m\choose\tau}}{m^{\tau}}(k-y)^{m-\tau}y^{\tau}:\frac{mk}{n}\leq y\leq m\right\}\right]^{\frac{1}{m}}\\
				&\leq 1+\frac{k}{k-m+1}
				\end{align*}
				for any $m\leq k\leq n$.
				
				\exclude{\item In this part, we consider $n\gg k>m\gg 1$. In the definition of $g(m,n,k)$, we further relax the lower bound of $y$ to be 0. Let us consider the following univariate optimization problem.
					\begin{claim}\label{claim4}For any integer $k,m,\tau$ with $k\geq m\geq 1, \tau\in\{0,1,\cdots,m\}$, then
						\begin{align}
						\max_{y}\left\{g(y)=(k-y)^{m-\tau}y^{\tau}:0\leq y\leq m\right\}=\left\{\begin{array}{cc}
						\frac{k^m}{m^m}(m-\tau)^{m-\tau}\tau^{\tau}, &\text{ if } 0\leq \tau\leq \frac{m^2}{k}\\
						(k-m)^{m-\tau}m^{\tau}, &\text{ if } m\geq \tau> \frac{m^2}{k}\\
						\end{array}\right\} \label{eq_max_g}
						\end{align}
					\end{claim}
					\begin{proof} Since $\log(\cdot)$ is a monotone increasing function, therefore, the optimal solution to \eqref{eq_max_g} is the same as the one to
						\begin{align}
						\max_{y}\left\{\bar{g}(y)=(m-\tau)\log(k-y)+\tau \log(y):0\leq y\leq m\right\} \label{eq_max_bar_g}
						\end{align}
						
						First of all, note that $\bar{g}(y)$ is a concave function with first two derivatives equal to
						\[\nabla\bar{g}(y)=-\frac{m-\tau}{k-y}+\frac{\tau}{y},\nabla^2\bar{g}(y)=-\frac{m-\tau}{(k-y)^2}-\frac{\tau}{y^2} \]
						and $\nabla^2\bar{g}(y)\geq0$ if $0\leq y\leq m$.
						
						Note that if $\tau\leq \frac{m^2}{k}$, then $\nabla\bar{g}(y)=0$ implies that $y^*=\frac{k\tau}{m}$ is the optimal solution to \eqref{eq_max_bar_g}, otherwise $y^*=m$ since $\nabla\bar{g}(y) \geq 0$ for any $y\in [0,m]$.
					\QEDB\end{proof}
					
					By Claim \ref{claim4}, $g(m,n,k)$ can be further upper bounded as
					\begin{align}
					&g(m,n,k)=\max_{y}\left\{\sum_{\tau=0}^{m}\frac{{n-m\choose k-\tau}}{(n-m)^{m-\tau}{n-m\choose k-m}}\frac{{m\choose\tau}}{m^{\tau}}(k-y)^{m-\tau}y^{\tau}:\frac{mk}{n}\leq y\leq m\right\}\notag\\
					&\leq\sum_{\tau=0}^{m}\frac{{n-m\choose k-\tau}}{(n-m)^{m-\tau}{n-m\choose k-m}}\frac{{m\choose\tau}}{m^{\tau}}\max_{y}\left\{(k-y)^{m-\tau}y^{\tau}:\frac{mk}{n}\leq y\leq m\right\}\notag\\
					&\leq \sum_{\tau=0}^{\frac{m^2}{k}}\frac{{n-m\choose k-\tau}}{(n-m)^{m-\tau}{n-m\choose k-m}}\frac{{m\choose\tau}}{m^{\tau}}\frac{k^m}{m^m}(m-\tau)^{m-\tau}\tau^{\tau}
					+\sum_{\frac{m^2}{k}+1}^{m} \frac{{n-m\choose k-\tau}}{(n-m)^{m-\tau}{n-m\choose k-m}}\frac{{m\choose\tau}}{m^{\tau}}(k-m)^{m-\tau}m^{\tau}\notag\\
					& \leq \sum_{\tau=0}^{\frac{m^2}{k}}\frac{\frac{(n-m)^{k-\tau}}{(k-\tau)!}}{(n-m)^{m-\tau}{n-m\choose k-m}}\frac{\frac{m^{\tau}}{\tau!}}{m^{\tau}}\frac{k^m}{m^m}(m-\tau)^{m-\tau}\tau^{\tau}
					+\sum_{\frac{m^2}{k}+1}^{m} \frac{\frac{(n-m)^{k-\tau}}{(k-\tau)!}}{(n-m)^{m-\tau}{n-m\choose k-m}}\frac{m!}{(m-\tau)!\tau!}(k-m)^{m-\tau}\notag\\
					&=\sum_{\tau=0}^{\frac{m^2}{k}}\frac{(n-m)^{k-m}}{(k-\tau)!{n-m\choose k-m}}\frac{1}{\tau!}\frac{k^m}{m^m}(m-\tau)^{m-\tau}\tau^{\tau}
					+\sum_{\frac{m^2}{k}+1}^{m} \frac{(n-m)^{k-m}}{(k-\tau)!{n-m\choose k-m}}\frac{m!}{(m-\tau)!\tau!}(k-m)^{m-\tau}\notag\\
					&\leq
					\sum_{\tau=0}^{\frac{m^2}{k}}\frac{(n-m)^{k-m}}{\frac{\max(\sqrt{2\pi (k-\frac{m^2}{k})},1)(k-\tau)^{k-\tau}}{e^{k-\tau}}{n-m\choose k-m}}\frac{1}{\frac{\tau^{\tau}}{e^{\tau}}}\frac{k^m}{m^m}(m-\tau)^{m-\tau}\tau^{\tau}
					+\sum_{\frac{m^2}{k}+1}^{m} \frac{(n-m)^{k-m}}{(k-\tau)!{n-m\choose k-m}}\frac{m!}{(m-\frac{m^2}{k})!\tau!}(k-m)^{m-\tau}\notag\\
					&=\sum_{\tau=0}^{\frac{m^2}{k}}\frac{e^k(n-m)^{k-m}}{\max(\sqrt{2\pi (k-\frac{m^2}{k})},1)(k-\tau)^{k-\tau}{n-m\choose k-m}}\frac{k^m}{m^m}(m-\tau)^{m-\tau}
					+\sum_{\frac{m^2}{k}+1}^{m} \frac{(n-m)^{k-m}}{(k-\tau)!{n-m\choose k-m}}\frac{m!}{(m-\frac{m^2}{k})!\tau!}(k-m)^{m-\tau}\notag\\
					&:=\sum_{\tau=0}^{\frac{m^2}{k}}H_1(\tau)+\sum_{\frac{m^2}{k}+1}^{m}H_2(\tau)\notag
					\end{align}
					where the first inequality is due to maximization over sum is no larger than sum of maximization, the second one is due to ${R\choose r} \leq \frac{R^r}{r!}$ and the third one is because Stirling approximation of factorial $r!\geq \max(\sqrt{2\pi r},1)(r/e)^r$ and the range of $\tau$.
					
					Let $H_1(\tau):=\frac{e^k(n-m)^{k-m}}{\max(\sqrt{2\pi (k-\frac{m^2}{k})},1)(k-\tau)^{k-\tau}{n-m\choose k-m}}\frac{k^m}{m^m}(m-\tau)^{m-\tau}$ with $0\leq \tau\leq \frac{m^2}{k}$. We note that the first derivative of $\log(H_1(\tau))=\log(\frac{e^k(n-m)^{k-m}}{\max(\sqrt{2\pi (k-\frac{m^2}{k})},1){n-m\choose k-m}}\frac{k^m}{m^m})+(m-\tau)\log(m-\tau)-(k-\tau)\log(k-\tau)$ is $\log(\frac{k-\tau}{m-\tau})\geq 0$. Therefore, $H_1(\tau)$ is monotone non-decreasing in $\tau$ and $H_1(\tau) \leq H_1(\frac{m^2}{k})$.
					
					On the other hand, let $H_2(\tau):=\frac{(n-m)^{k-m}}{(k-\tau)!{n-m\choose k-m}}\frac{m!}{(m-\frac{m^2}{k})!\tau!}(k-m)^{m-\tau}$ with $\frac{m^2}{k}\leq \tau\leq m$, where
					\[\frac{H_2(\tau+1)}{H_2(\tau)}=\frac{k-\tau}{(k-m)(\tau+1)}.\]
					Note that
					\begin{align*}
					k-\tau-(k-m)(\tau+1)=m-(k-m+1)\tau \leq m-(k-m+1)\frac{m^2}{k}=\frac{m}{k}(m-1)(m-k) \leq 0.
					\end{align*}
					Therefore, $H_2(\tau+1) \leq H_2(\tau)$ for all $\frac{m^2}{k}\leq \tau\leq m-1$.
					
					Since $r!\geq \max(\sqrt{2\pi r},1)(r/e)^r$ for any integer $r\geq 0$. Thus, for all $\frac{m^2}{k}\leq \tau\leq m$
					\begin{align*}
					&H_2(\tau) \leq H_2(\frac{m^2}{k}) = \frac{e^k(n-m)^{k-m}}{\max(\sqrt{2\pi (k-\frac{m^2}{k})},1)(k-\frac{m^2}{k})^{k-\frac{m^2}{k}}{n-m\choose k-m}}\frac{m!}{(m-\frac{m^2}{k})!(m^2/k)^{m^2/k}}(k-m)^{m-\frac{m^2}{k}}\\
					&\leq H_1(\frac{m^2}{k}).
					\end{align*}
					
					Hence, $g(m,n,k)$ is further upper bounded by
					\begin{align*}
					g(m,n,k)&\leq (m+1)H_1(\frac{m^2}{k})=(m+1)\frac{e^k(n-m)^{k-m}}{\max(\sqrt{2\pi (k-\frac{m^2}{k})},1)(k-\frac{m^2}{k})^{k-\frac{m^2}{k}}{n-m\choose k-m}}\frac{k^m}{m^m}(m-\frac{m^2}{k})^{m-\frac{m^2}{k}}\\
					&:=\hat{g}(n,m,k).
					\end{align*}
					
					As $n\gg k>m\gg1$, therefore
					\begin{align*}
					&[g(m,n,k)]^{\frac{1}{m}}\leq [\hat{g}(n,m,k)]^{\frac{1}{m}}\approx\left[(m+1)\frac{e^k(k-m)!}{\max(\sqrt{2\pi (k-\frac{m^2}{k})},1)(k-\frac{m^2}{k})^{k-\frac{m^2}{k}}}\frac{k^m}{m^m}(m-\frac{m^2}{k})^{m-\frac{m^2}{k}}\right]^{\frac{1}{m}}\\
					&\leq \left[(m+1)\frac{e^k\sqrt{2\pi (k-m)}(k-m)^{k-m} e^{m-k+\frac{1}{12(k-m)}}}{\max(\sqrt{2\pi (k-\frac{m^2}{k})},1)(k-\frac{m^2}{k})^{k-\frac{m^2}{k}}}\frac{k^m}{m^m}(m-\frac{m^2}{k})^{m-\frac{m^2}{k}}\right]^{\frac{1}{m}}\\
					&\approx \left[e^m\frac{(k-m)^{k-m+0.5}}{(k-\frac{m^2}{k})^{k-\frac{m^2}{k}+0.5}}\frac{k^m}{m^m}(m-\frac{m^2}{k})^{m-\frac{m^2}{k}}\right]^{\frac{1}{m}}\\
					&=e\left(1+\frac{m}{k}\right)^{-\frac{k}{m}-\frac{1}{2m}}\left(1+\frac{k}{m}\right)^{\frac{m}{k}}\leq e\left(1+\frac{m}{k}\right)^{-\frac{k}{m}}\left(1+\frac{k}{m}\right)^{\frac{m}{k}}\xrightarrow{ k/m \to \infty } 1
					\end{align*}
					where the first approximation is due to ${n-m\choose k-m} \approx \frac{(n-m)^{k-m}}{(k-m)!}$ when $n\gg k>m$, and the first inequality is due to Stirling approximation $(k-m)! \leq \sqrt{2\pi (k-m)}(k-m)^{k-m} e^{m-k+\frac{1}{12(k-m)}}$, the second approximation is because $m\gg1$, and the last inequality is due to $\frac{1}{2m} \geq 0$.}
			\end{enumerate}
		\end{subequations}
	\QEDA\end{proof}
	
\end{appendices}

\end{document}